\DeclarePairedDelimiter{\lin}{\langle}{\rangle}
\DeclarePairedDelimiter{\abs}{\lvert}{\rvert}
\DeclarePairedDelimiter{\norm}{\lVert}{\rVert}
  \providecommand{\R}{\mathbb{R}} 
  \DeclareMathOperator{\E}{{\mathbb E}}
  \providecommand{\prob}[1]{{\rm Pr}\left[#1\right] }
  \DeclareMathOperator*{\argmin}{arg\,min}
  \renewcommand{\aa}{\mathbf{a}}
  \providecommand{\bb}{\mathbf{b}}
  \providecommand{\hh}{\mathbf{h}}
  \providecommand{\vv}{\mathbf{v}}
  \providecommand{\xx}{\mathbf{x}}
  \providecommand{\yy}{\mathbf{y}}
  \providecommand{\zz}{\mathbf{z}}
  \providecommand{\mA}{\mathbf{A}}
  \providecommand{\cO}{\mathcal{O}}
\providecommand{\mycomment}[3]{\todo[caption={},color=#3!20,inline]{\textbf{#1: }#2}}%
\providecommand{\myinlinecomment}[3]{%
  {\color{#1}#2: #3}}%
\newcommand\commenter[2]%
\newcommand\csname i#1\endcsname[1]{\myinlinecomment{#2}{#1}{##1}}
\newcommand\csname #1\endcsname[1]{\mycomment{#1}{##1}{#2}}
\newtheorem{lemma}{Lemma}
\newtheorem{corollary}[lemma]{Corollary}
\newtheorem{definition}{Definition}
\newtheorem{remark}[lemma]{Remark}
\newtheorem{theorem}[lemma]{Theorem}
\renewcommand{\epsilon}{\varepsilon}
\newtcbox{\comparison}{on line,
  colframe=blue,colback=white,
  boxrule=0.5pt,arc=4pt,boxsep=0pt,left=6pt,right=6pt,top=6pt,bottom=6pt}
\providecommand{\Avg}{{\frac{1}{n}\sum_{i=1}^n}}
\providecommand{\AvgSr}{{\frac{1}{s}\sum_{i \in S_r}}}
\providecommand{\AvgS}{{\frac{1}{s}\sum_{i \in S}}}
\newcommand{\algname}[1]{\textsc{#1}\xspace}
\newcommand{\defeq}{\coloneqq}
\newcommand{\eqdef}{\eqqcolon}
\newcommand{\BigOTilde}{\tilde{\cO}}
\crefname{equation}{}{}
\title{Stabilized Proximal-Point Methods
for Federated Optimization}
\author{%
  Xiaowen Jiang  \\
  Saarland University
  and 
  CISPA\thanks{CISPA Helmholtz Center for Information Security, Saarbrücken, Germany} 
  \\
  \texttt{xiaowen.jiang@cispa.de}\\
  \And
  Anton Rodomanov  \\ 
  CISPA\footnotemark[1] \\ 
  \texttt{anton.rodomanov@cispa.de} \\
  \And
  Sebastian U. Stich \\ 
  CISPA\footnotemark[1] \\ 
  \texttt{stich@cispa.de} \\
}
\begin{document}

\maketitle

\begin{abstract}
    In developing efficient optimization algorithms, it is crucial to account for communication constraints---a significant challenge in modern Federated Learning. 
    The best-known communication complexity among non-accelerated algorithms is achieved by DANE, a distributed proximal-point algorithm that solves local subproblems at each iteration and that can exploit second-order similarity among individual functions.
    However, to achieve such communication efficiency, the algorithm
    requires solving local subproblems sufficiently accurately resulting in slightly sub-optimal local complexity.
    Inspired by the hybrid-projection proximal-point method, in this work, we propose a novel distributed algorithm S-DANE. Compared to DANE, this method uses an auxiliary sequence of prox-centers while maintaining the same deterministic communication complexity. Moreover, the accuracy condition for solving the subproblem is milder, leading to enhanced local computation efficiency. Furthermore, S-DANE supports partial client participation and arbitrary stochastic local solvers, making it attractive in practice. We further accelerate S-DANE and show that the resulting algorithm achieves the best-known communication complexity among all existing methods for distributed convex  optimization while still enjoying good local computation efficiency as S-DANE.
    Finally, we propose adaptive variants of both methods using line search, obtaining the first provably efficient adaptive algorithms that could exploit local second-order similarity without the prior knowledge of any parameters.
\end{abstract}

\section{Introduction}
Federated learning is a rapidly emerging large-scale machine learning framework that allows training from decentralized data sources (e.g. mobile phones or hospitals) while preserving basic privacy and security~\citep{fedavg, kairouz2021advances,konevcny2016intelligence}.
Developing efficient federated optimization algorithms becomes one of the central focuses due to its direct impact on the effectiveness of global machine learning models. 

One of the key challenges in modern federated optimization is to tackle communication bottlenecks~\citep{konevcny2016communication}. 
The large-scale model parameters, coupled with relatively limited network capacity and unstable client participation, often make communication highly expensive. Therefore, the primary efficiency metric of a federated optimization algorithm is the total number of 
communication rounds required to reach a desired accuracy level. If two algorithms share equivalent communication complexity, their local computation efficiency becomes the second important metric.

The seminal algorithm \algname{DANE}~\citep{dane} is an outstanding distributed optimization method. It achieves the best-known deterministic communication complexity among existing non-accelerated algorithms (on the server side)~\citep{fedred}. This efficiency primarily hinges upon a mild precondition regarding the Second-order Dissimilarity $\delta$.
In numerous scenarios, like statistical learning for generalized model~\citep{spag} and semi-supervised learning~\citep{chayti2022optimization}, $\delta$ tends to be relatively small. 
However, to ensure such fast convergence, DANE requires each iteration subproblem to be solved with sufficiently high accuracy.
This leads to sub-optimal local computation effort across the communication rounds, which is inefficient in practice. \algname{FedRed}~\citep{fedred} improves this weakness by using double regularization. However, this technique is only effective when using gradient descent as the local solver but cannot easily be combined with other optimization methods.
For instance, applying local accelerated gradient or second-order methods cannot improve its local computation efficiency. Moreover, it is also unclear how to extend this method to the partial client participation setting relevant to federated learning.

On the other hand, the communication complexities achieved by the current accelerated methods typically cannot be directly compared with those attained by \algname{DANE}, as they either depend on sub-optimal constants or additional quantities such as the number of clients $n$. The most relevant and state-of-the-art algorithm \algname{Acc-Extragradient}~\citep{grad-sliding} achieves a better complexity in terms of the accuracy $\epsilon$ but with dependency on the maximum Second-order Dissimilarity $\delta_{\max}$ which can in principle be much larger than $\delta$ (see Remark~\ref{rm:Comparison-Delta}). 
Unlike most federated learning algorithms, such as \algname{FedAvg}~\citep{fedavg},
this method requires communication with all the devices at each round to compute the full gradient and then assigns one device for local computation. In contrast, \algname{FedAvg} and similar algorithms perform local computations on parallel and utilize  the standard averaging to compute the global model. The follow-up work AccSVRS~\citep{AccSVRS} applies variance reduction to \algname{Acc-Extragradient} which results in less frequent full gradient updates. However, the communication complexity incurs a dependency on $n$ which is prohibitive for cross-device setting~\citep{kairouz2021advances} where the number of clients can be potentially very large. Thus, there exists no accelerated federated algorithm that is uniformly better than \algname{DANE} in terms of communication complexity.

\paragraph{Contributions.}

In this work, we aim to develop federated optimization algorithms that can achieve the best communication complexity while retaining efficient local computation. To this end, we first revisit the simple proximal-point method on a single machine. The accuracy requirement for solving the subproblem defined in this algorithm is slightly sub-optimal. Drawing inspiration from hybrid projection-proximal point method for finding zeroes of a maximal monotone operator~\citep{Solodov1999}, we observe that using a more stabilized prox-center improves the accuracy requirement. We make the following contributions:
\begin{itemize}[leftmargin=12pt]
    \item
    We develop a novel federated optimization algorithm \algname{S-DANE} that achieves the best-known communication complexity (for non-accelerated methods) while also enjoying improved local computation efficiency over \algname{DANE}~\citep{dane}. 
    \item We develop an accelerated version of \algname{S-DANE} based on the Monteiro-Svaiter acceleration~\citep{Monteiro-Svaiter}. The resulting algorithm \algname{Acc-S-DANE} achieves the best-known communication complexity among all existing methods for distributed convex optimization.
    \item Both algorithms support partial client participation and arbitrary stochastic local solvers, making them attractive in practice for federated optimization.
    \item We provide a simple analysis for both algorithms. We derive convergence estimates that are continuous in the strong convexity parameter $\mu$. 
    \item We propose adaptive variants of both algorithms using line-search in the full client participation setting. The resulting methods  achieve the same communication complexity (up to a logrithmic factor) as non-adaptive ones without requiring knowledge of the similarity constant.
    \item We illustrate strong practical performance of our proposed methods in experiments.
\end{itemize}
See also \cref{tab:summary} for a summary of the main complexity results in the full-participation setting.

\begin{table*}[ht!]
\resizebox{\textwidth}{!}
{\begin{minipage}{1.85\textwidth}
\centering
\begin{tabular}{@{}cccccccc@{}}
\toprule
\multirow{2}{*}{\textbf{Algorithm}} &
\multicolumn{1}{c}{\textbf{\# Vectors comm}} &
 \multicolumn{2}{c}{\textbf{General convex}} & 
 \multicolumn{2}{c}{\textbf{$\mu$-strongly convex}} & 
 \multirow{2}{*}{\textbf{Guarantee}} 
 \\
 \cmidrule(lr){3-4}\cmidrule(lr){5-6}
 & \multicolumn{1}{c}{\textbf{per round}}
 & \multicolumn{1}{c}{\# comm rounds} & 
\multicolumn{1}{c}{\# local gradient queries} & \multicolumn{1}{c}{\# comm rounds} & 
\multicolumn{1}{c}{\# local gradient queries} & \\
\midrule
Scaffnew~{\small{\cite{scaffnew}}} 
\footnote{For \algname{Scaffnew} and \algname{FedRed},
the column `\# comm rounds' represents the expected number of 
total communications required to reach $\epsilon$ accuracy.
The column  `\# local gradient queries' 
is replaced with the expected number of local steps between two communications. 
\label{scaffnew}}
& $n$
& $ \frac{LD^2}{\epsilon} $
\footnote{The general convex result of \algname{Scaffnew} is established in Theorem 11 in the \algname{RandProx} paper~\cite{randprox}.
We assume that $\hh_{i,0} = \nabla f_i(\xx^0)$ and 
estimate $H_0^2 \defeq \frac{1}{n}\sum_{i=1}^n \norm{\hh_{i,0} - \nabla f_i (\xx^\star)} \leq L^2 D^2.$
Then the best $p$ is of order $1$.
\label{scaffnew-local-computation}}
& $1$ \footref{scaffnew-local-computation}
& $ \sqrt{\frac{L}{\mu}}\log \frac{D^2 + H_0^2/(\mu L)}{\epsilon}$ 
& $\sqrt{\frac{L}{\mu}}$
& in expectation
\\
\rule{0pt}{4ex} 
SONATA~{\small{\cite{sonata}}} \footnote{
\algname{SONATA}, \algname{Inexact Acc-SONATA}, \algname{Acc-Extragradient} and \algname{AccSVRS} only need to assume strong convexity of $f$. 
\label{tab:stconvex}
}
& $n$
& unknown
& unknown
& $\frac{\delta_{\max}}{\mu}\log \frac{D^2}{\epsilon}$
& $-$\footnote{Exact proximal local steps are used in \algname{SONATA}}
& deterministic
\\
\rule{0pt}{4ex} 
DANE~{\small{\cite{dane}}} 
& $n$
& $ \frac{\delta D^2}{\epsilon} $ 
& $ 
\sqrt{\frac{L}{\delta}} 
\log \frac{L D^2}{\epsilon} $ 
& $ \frac{\delta}{\mu} \log \frac{D^2}{\epsilon}$ 
& $
\sqrt{\frac{L}{\delta}} 
\log\bigl( \frac{L}{\mu}\log \frac{D^2}{\epsilon} \bigr) $ 
& deterministic
\\
\rule{0pt}{4ex} 
FedRed~{\small{\cite{fedred}}} 
& $n$
& $\frac{\delta D^2}{\epsilon}$ 
& $\frac{L}{\delta}$
& $\frac{\delta}{\mu}
    \log \frac{D^2}{\epsilon} $ 
& $\frac{L}{\delta}$ 
& in expectation
\\
\hline
\rule{0pt}{4ex} 
S-DANE (\textbf{this work}, Alg.~\ref{Alg:S-DANE})
& $n$
& $ \frac{\delta D^2}{\epsilon} $ 
& $ \sqrt{\frac{L}{\delta}} $
& $ \frac{\delta}{\mu}
    \log \frac{D^2}{\epsilon} $ 
& $ \sqrt{\frac{L}{\delta}} $ 
& deterministic
\\
\hline
\rule{0pt}{4ex} 
Inexact Acc-SONATA~{\small{\cite{acc-sonata}}} 
\footref{tab:stconvex}
& $n$
& unknown
& unknown
& $ \sqrt{\frac{\delta_{\max}}{\mu}}\log \frac{\delta_{\max}}{\mu}\log \frac{D^2}{\epsilon} $
& $\sqrt{\frac{L}{\mu}} \log \frac{D^2}{\epsilon} $
& deterministic
\\
\rule{0pt}{4ex} 
Acc-Extragradient~{\small{\cite{grad-sliding}}} 
\footref{tab:stconvex}
& $n$
& $ \sqrt{\frac{\delta_{\max} D^2}{\epsilon}} $
& $ \sqrt{\frac{L}{\delta_{\max}}} $
& $ \sqrt{\frac{\delta_{\max}}{\mu}}\log \frac{D^2}{\epsilon} $
& $ \sqrt{\frac{L}{\delta_{\max}}} $
& deterministic
\\
\hline
\rule{0pt}{4ex} 
Catalyzed SVRP~{\small{\cite{svrp}}}
\footnote{\algname{Catalyzed SVRP} and \algname{AccSVRS} aim at minimizing a different measure which is the total amount of information transmitted between the server and the clients. Their iteration complexity is equivalent to the communication rounds in our notations. We refer to Remark~\ref{remark-svrp} for details. \label{tab:svrp} }
& \multirow{2}{*}{$\begin{cases} 1 & \text{w.p.\ $1-\frac{1}{n}$}, \\ n & \text{w.p.\ $\frac{1}{n}$} \end{cases}$}
& \multirow{2}{*}{unknown}
& \multirow{2}{*}{unknown}
& $ \bigl( n+n^\frac{3}{4}\sqrt{\frac{\delta}{\mu}} \bigr) \log \frac{L}{\mu} \log \frac{D^2}{\epsilon} $
& $-$\footnote{\citet{svrp} assume exact evaluations of the proximal
operator for the convenience of analysis.}
& in expectation
\\
\rule{0pt}{4ex} 
AccSVRS~{\small{\cite{AccSVRS}}} \footref{tab:svrp} \footref{tab:stconvex}
&
&
&
& $ \bigl( n+n^\frac{3}{4}\sqrt{\frac{\delta}{\mu}} \bigr) \log \frac{D^2}{\epsilon} $
& $ \frac{1}{n^{1 / 4}} \sqrt{\frac{L}{\delta}} $
& in expectation
\\
\hline
\rule{0pt}{4ex} 
Acc-S-DANE (\textbf{this work}, Alg.~\ref{Alg:ADPP}) 
& $n$
& $ \sqrt{\frac{\delta D^2}{\epsilon}} $
& $ \sqrt{\frac{L}{\delta}} $
& $ \sqrt{\frac{\delta}{\mu}}\log \frac{D^2}{\epsilon} $
& $ \sqrt{\frac{L}{\delta}} $
& deterministic
\\
\bottomrule
\end{tabular}
\end{minipage}}
\caption{\small{Summary of the worst-case convergence behaviors of the considered  distributed optimization methods (in the BigO-notation) assuming each $f_i$ is $L$-smooth and $\mu$-convex with $\mu \le \Theta(\delta)$, where 
$\delta$, 
$\delta_{\max}$, $\zeta^2$ are defined in~\eqref{df:deltaA-SOD}, Remark~\ref{rm:Comparison-Delta} and~\eqref{df:zeta2},     
and
$D \defeq \norm{ \xx^0 - \xx^\star }$.
The \textit{'\# local gradient queries'} column represents the number of gradient oracle queries required between two communication rounds to achieve the corresponding complexity, assuming \textbf{the most efficient local first-order algorithms are used}.
The column \textit{'Guarantee'} means whether the convergence guarantee holds in expectation or deterministically.
The suboptimality $\epsilon$ is defined via $\norm{ \hat{\xx}^R-\xx^\star}^2$ and
$f(\hat{\xx}^R)-f^\star$ 
for strongly-convex and general convex functions where 
$\hat{\xx}^R$ is a certain output produced by the algorithm after $R$ number of communications.
}}
\label{tab:summary}
\end{table*}

\begin{figure*}[tb!]
    \centering
    \includegraphics[width=1\textwidth]{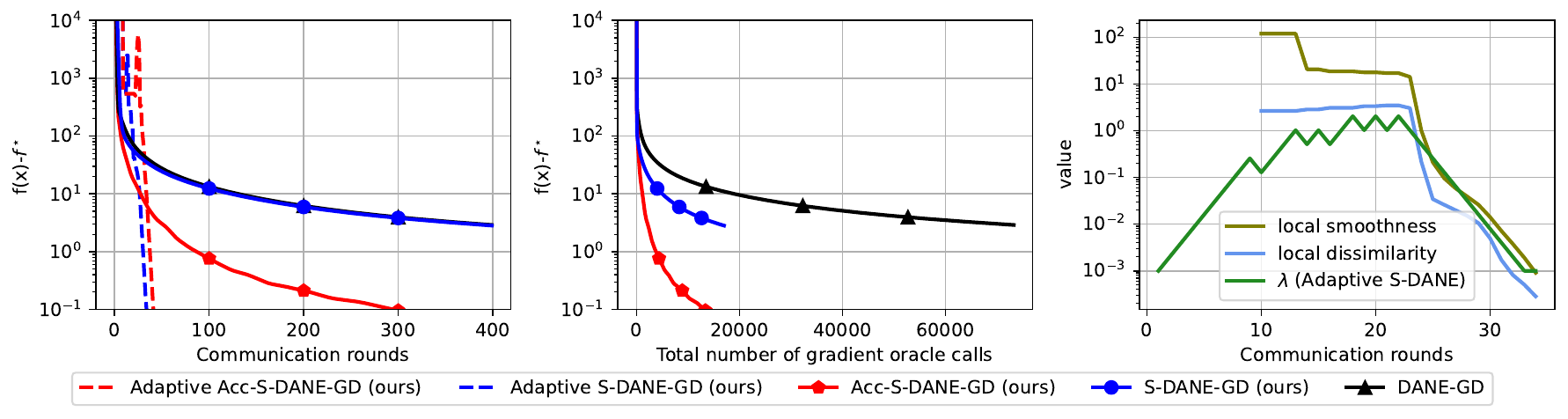}  
    \caption{Comparison of \algname{S-DANE} and \algname{Acc-S-DANE} with \algname{DANE} for solving a convex quadratic minimization problem. All three methods use \algname{GD} as the local solver. \algname{S-DANE} has improved local computation efficiency than \algname{DANE} while \algname{Acc-S-DANE} further improves the communication complexity.
    Finally, the adaptive variants can leverage local dissimilarities to achieve better performance. 
    (The definitions of local smoothness and dissimilarity can be found in Section~\ref{sec:Exp-main}.)
    }
    \label{fig:convex-intro}
\end{figure*}

\paragraph{Related Work.}
Moreau first proposed the notion of the proximal approximation of a function~\citep{moreau}. Based on this operation, Martinet developed the first proximal-point method~\citep{proximal-point}. 
This method was first accelerated by Güller~\citep{guller-acceleration}, drawing the inspiration from Nesterov's Fast gradient method~\citep{nesterov-book}. Later, \citet{lin2015universal} introduced the celebrated \algname{Catalyst} framework that builds upon Güller's acceleration. Using \algname{Catalyst} acceleration, a large class of optimization algorithms can directly achieve faster convergence. In a similar spirit, \citet{doikov2020contracting} propose contracting proximal methods that can accelerate higher-order tensor methods. While Güller's acceleration has been successfully applied to many settings, its local computation is sub-optimal. Specifically, when minimizing smooth convex functions, a logarithmic dependence on the final accuracy is incurred in its local computation complexity~\citep{doikov2020contracting}. 
\citet{Solodov1999} proposed a \algname{hybrid projection-proximal point} method that allows significant relaxation of the 
accuracy condition for the proximal-point subproblems.
More recent works such as \algname{Adaptive Catalyst}~\citep{ivanova2021adaptive} and \algname{RECAPP}~\citep{carmon2022recapp} 
successfully get rid of the additional logarithmic factor for accelerated proximal-point methods as well.

Another type of acceleration based on the proximal extra-gradient method was introduced by \citet{Monteiro-Svaiter}.
This method is more general in the sense that it allows arbitrary local solvers and the convergence rates depend on the these solvers. For instance, 
under convexity and Lipschitz second-order derivative, the rate can be accelerated to $\cO(1/k^{3.5})$ by using Newton-type method. 
Moreover, when the gradient method is used, Monteiro-Svaiter Acceleration recovers the rate of Güller's acceleration and its accuracy requirement for the inexact solution is weaker. For minimizing smooth convex functions, one gradient step is enough for approximately solving the local subproblem~\citep{nesterov-composite}. 
This technique has been applied to centralized composite optimization, known as gradient sliding~\citep{lan-grad-sliding-composite,lan-grad-sliding-structure,grad-sliding}. A comprehensive study of acceleration can be found in~\cite{d2021acceleration}.

We defer the literature review on distributed and federated optimization algorithms to Appendix~\ref{sec:MoreRelatedWork}.

\section{Problem Setup and Background}
We consider the following distributed minimization problem: 
\begin{equation}
\min_{\xx \in \R^d} \Bigl\{ f(\xx) \defeq \frac{1}{n}\sum_{i=1}^n f_i(\xx) \Bigr\},
\label{eq:problem}
\end{equation}
where each function $f_i \colon \R^d \to \R$ is $\mu$-strongly convex\footnote{If $\mu = 0$, then $f_i$ is assumed to be simply convex.} for some $\mu \geq 0$. We focus on the standard federated setting where the functions $\{f_i\}$ are distributed among $n$ devices.
The server coordinates the global optimization procedure among the devices.
In each communication round, the server broadcasts certain information to the clients.  The clients, in turn, perform local computation in parallel based on their own data and transmit the resulting local models back to the server to update the global model. 

\textbf{Main objective:} Given the high cost of establishing connections between the server and the clients, our paramount objective is to minimize the number of required communication rounds to achieve the desired accuracy level. This represents a central metric in federated contexts, as outlined in references such as~\citep{fedavg, mime}.  
\textbf{Secondary  objective:}  Efficiency in local computation represents another pivotal metric for optimization algorithms. For instance, if two algorithms share equivalent communication complexity, the algorithm with less local computational complexity is the more favorable choice. 

\textbf{Notation:} 
We abbreviate $[n]\defeq\{1,2,\ldots,n\}$. 
For a set $A$ and an integer $1 \leq s \le |A|$,  
we use $\binom{A}{s}$ to denote the power set comprised of all $s$-element subsets of $A$.
Everywhere in this paper, $\norm{\cdot}$ denotes the standard Euclidean norm (or the corresponding spectral norm for matrices).
We assume problem~\eqref{eq:problem} has a solution which we denote by $\xx^\star$;
the corresponding optimal value is denoted by~$f^\star$. 
For a set $S \in \binom{[n]}{s}$, we use $f_S \defeq \frac{1}{s} \sum_{i \in S} f_i$ to denote the average function over this set. 
We use 
$r$ to denote the index of the communication round and 
$k$ to denote the index of the local step. Finally, 
we use the superscript and subscript to denote the global and local models, respectively;
for instance, $\xx^r$ represents the global model at round $r$ while $\xx_{i,r}$ is the local model computed by device $i$ at round $r$.

\subsection{Proximal-Point Methods on Single Machine}

\label{sec:background-pp}
In this section, we provide a brief background on proximal-point methods~\citep{moreau,proximal-point,rockafellar1976monotone,proximal-book}, which are the foundation for many distributed optimization algorithms. 

\paragraph{Proximal-Point Method.}

Given an iterate $\xx_k$, the method defines $\xx_{k+1}$ to be an (approximate) minimizer of the proximal-point subproblem:
\begin{align}
    \xx_{k+1} 
    \approx 
    \argmin_{\xx \in \R^d} \Bigl\{ F_{k}(\xx)\defeq
    f(\xx) + \frac{\lambda}{2} \norm{\xx - \xx_k}^2
    \Bigr\},
    \label{proximal-point-method}
\end{align}
for an appropriately chosen parameter $\lambda \geq 0$.
This parameter allows for a trade-off between the complexity of each iteration and the rate of convergence. If $\lambda=0$, the subproblem in each iteration is as difficult as solving the original problem because no regularization is applied. However, as $\lambda$ increases,  more regularization is added, simplifying the subproblem.
For example, for a convex function $f$, the proximal-point method guarantees
$f(\Bar{\xx}_K) - f^\star \le \cO\bigl( \frac{\lambda}{K}\norm{\xx_0 - \xx^\star}^2 \bigr)$, where 
$\Bar{\xx}_K \defeq \frac{1}{K}\sum_{k=1}^K \xx_k$~\citep{proximal-book,carmon2022recapp}.
However, to achieve such a convergence rate, the subproblem~\eqref{proximal-point-method} has to be solved 
to a fairly high accuracy~\citep{rockafellar1976monotone, carmon2022recapp}. 
For instance, 
the accuracy condition should either depend on
the target accuracy $\epsilon$,
or increase with $k$:
$\norm{ \nabla F_k(\xx_{k+1}) } = \cO(\frac{\lambda}{k} \norm{\xx_{k+1} - \xx_k})$~\citep{inexact-proximal-2001}.
Indeed,
when $f$ is Lipschitz-smooth and the standard gradient descent is used as a local solver, the number of gradient steps required to solve the subproblem 
has a logarithmic dependence on the iteration counter~$k$.
The same issue also arises when considering accelerated proximal-point methods~\cite {doikov2020contracting,guller-acceleration}.

\paragraph{Stabilized Proximal-Point Method.} 

One of the key insights that we use in this work is the observation that using a different prox-center makes the 
accuracy condition of the subproblem weaker.
The \emph{stabilized proximal-point method} defines
\begin{equation}
\label{alg:sppm}
\begin{aligned}
    \xx_{k+1} 
    &\approx 
    \argmin_{\xx} \Bigl\{
    F_k(\xx)\defeq
    f(\xx) + \frac{\lambda}{2} \norm{\xx - \vv_k}^2 \Bigr\},
    \\
    \vv_{k+1}
    &=
    \argmin_{\xx} \Bigl\{
    \lin{\nabla f(\xx_{k+1}), \xx}
    +
    \frac{\mu}{2} \norm{\xx - \xx_{k+1}}^2
    +
    \frac{\lambda}{2} \norm{\xx - \vv_k}^2
    \Bigr\},
\end{aligned}
\end{equation}
where $\lambda \geq 0$ is a parameter of the method and $\mu \ge 0$ is the strong-convexity constant of $f$. This algorithm updates the prox-center~$\vv_k$ by performing an additional gradient step in each iteration.
For instance, when $\mu=0$, the prox-center is updated as $\vv_{k+1} = \vv_k - \frac{1}{\lambda} \nabla f(\xx_{k+1})$, which is often referred to as an \emph{extra-gradient update}.
The stabilized proximal-point method has the same convergence rate as the original method~\eqref{proximal-point-method} but requires only that
$
\norm{\nabla F_k(\xx_{k+1})} 
\le 
\cO(\lambda \norm{\xx_{k+1} - \vv_k})$. 
As a result, there is no extra logarithmic factor of $k$ in the oracle complexity estimate when $f$ is $L$-smooth. 
Specifically, by setting $\lambda = \Theta(L)$, the previous condition can be satisfied by choosing $\xx_{k + 1}$ as the result of one gradient step from $\vv_k$~\citep{nesterov-composite}.
This shows that the stabilized proximal-point method has a better overall oracle complexity than the standard proximal-point method (c.f.\ \cref{thm:S-DANE-Main} for the special case $n=1$).
It is worth noting that the former algorithm originates from the hybrid projection-proximal
point algorithm~\cite{Solodov1999} designed for solving the more general problem of finding zeroes of a monotone operator. In this work, we apply this algorithm in the distributed setting ($n \ge 2$).

\subsection{Distributed Proximal-Point Methods}

The proximal-point method can be adapted to solve the distributed optimization problem~\eqref{eq:problem}. 
This is the idea behind \algname{FedProx}~\cite{fedprox}.
It replaces the global proximal step~\eqref{proximal-point-method} by $n$ subproblems defined as
$\xx_{i,r+1} \defeq \argmin_{\xx}\{f_i (\xx) + \frac{\lambda}{2} \norm{\xx - \xx^r}^2\}$, which can be solved independently on each device, followed by the averaging step $\xx^{r+1} = \Avg \xx_{i,r+1}$.
Here we switch the notation from $k$ to $r$ to highlight that one iteration of the proximal-point method corresponds to a communication round in this setting.
To ensure convergence, 
\algname{FedProx} has to use a large $\lambda$ that depends on the target accuracy as well as the heterogeneity among $\{f_i\}$, which slows down the communication efficiency~\cite{fedprox}. \algname{DANE}~\cite{dane} improves this by incorporating a drift correction term into the subproblem:
\begin{equation}
    \xx_{i,r+1} \defeq 
    \argmin_{\xx}\Bigl\{
    \Tilde{F}_{i,r}(\xx) \defeq
    f_i (\xx) + \lin{\nabla f(\xx^r) - \nabla f_i(\xx^r), \xx} + \frac{\lambda}{2} \norm{\xx - \xx^r}^2\Bigr\}.
    \label{eq:DANE}
\end{equation}
Consequently, \algname{DANE} allows to choose a much smaller $\lambda$ in the algorithm. Moreover, it can exploit second-order similarity and achieve the best-known communication complexity among non-accelerated methods~\cite{fedred}. However, as in the original proximal-point method, the subproblem needs to be solved sufficiently accurately leading to an extra logarithmic factor in the oracle complexity estimate. To overcome this problem, we propose new algorithms described in the following section.

\section{Stabilized DANE}
\begin{algorithm}[tb]
\begin{algorithmic}[1]
\small\algrenewcommand\alglinenumber[1]{\footnotesize #1:}  
\State {\bfseries Input:} 
$\lambda > 0$, $\mu \ge 0$, 
$s \in [n]$,
$\xx^0 = \vv^0 \in \R^d$.
\For{$r=0,1,2\ldots$}
\State 
Sample $S_r \in \binom{[n]}{s}$ uniformly at random 
without replacement.
\For{\textbf{each device $i\in S_r$ in parallel}} 
\State
$
\xx_{i,r+1}
\approx
\argmin_{\xx \in \R^d}
\bigl\{ F_{i,r}(\xx) \defeq f_i(\xx) + \langle \nabla f_{S_r}(\vv^r) - \nabla f_i(\vv^r), \xx  \rangle 
+ \frac{\lambda}{2}\norm{\xx-\vv^r}^2 \bigr\}.
$
\EndFor 
\State 
$
\xx^{r+1} = \AvgSr \xx_{i,r+1}.
$
\State
$
\vv^{r+1} \defeq \argmin_{\xx \in \R^d} \bigl\{
    \frac{1}{s} \sum_{i \in S_r}
    [
    \lin{ \nabla f_i(\xx_{i,r+1}),
    \xx} + \frac{\mu}{2} \norm{\xx - \xx_{i,r+1}}^2 
    ]
    + \frac{\lambda}{2} \norm{\xx - \vv^r}^2
\bigr\}.
$
\EndFor
\caption{\algname{S-DANE}: Stabilized DANE}
\label{Alg:S-DANE}
\end{algorithmic}
\end{algorithm}

We now describe \algname{S-DANE} (Alg.~\ref{Alg:S-DANE}), our proposed federated proximal-point method that employs stabilized prox-centers in its subproblems. During each communication round $r$,  the server samples a subset of clients uniformly at random and sends $\vv^r$ to these clients. Then the server collects $\nabla f_i(\vv^r)$ from these clients, computes $\nabla f_{S_r}(\vv^r)$ and sends $\nabla f_{S_r}(\vv^r)$ back to them. Each device in the set then calls an arbitrary local solver (which can be different on each device) to approximately solve its local subproblem. Finally, each device transmits $\nabla f_i(\xx_{i,r+1})$ and $\xx_{i,r+1}$ back to the server which then aggregates these points and computes the new global model. 

As \algname{DANE}, \algname{S-DANE} can also achieve communication speed-up if the functions among devices are similar to each other.
This is formally captured by the following assumption.

\begin{definition}[Second-order Dissimilarity]
\label{assump:HessianSimilarity}
Let $f_1, \ldots, f_n : \R^d \to \R$ be functions,
and let $s \in [n]$, $\delta_s \geq 0$.
Then, $\{f_i\}_{i = 1}^n$ are said to have \emph{$\delta_s$-SOD (of size $s$)} if for any $\xx, \yy \in \R^d$ and any $S \in \binom{[n]}{s}$, it holds that
\begin{equation}
    \frac{1}{s} \sum_{i \in S}
    \norm{\nabla h_i^S(\xx) - \nabla h_i^S(\yy)}^2 
    \le 
    \delta_s^2 \norm{\xx - \yy}^2 ,
    \label{eq:HessianSimilarity}
\end{equation}
where $h_i^S \defeq f_S - f_i$ and $f_S \defeq \frac{1}{s}\sum_{i \in S} f_i$.
\label{df:HessianSimilarity}
\end{definition}

\Cref{df:HessianSimilarity} quantifies the  dissimilarity between any $s$ functions and their average, i.e., the ``internal'' variation between any $s$ functions. 
Clearly, $\delta_1 = 0$, and, when $s = n$, we recover the
standard notion of second-order dissimilarity introduced in prior works:

\begin{definition}[$\delta$-SOD~\citep{svrp,AccSVRS,fedred}]
    $\delta$-SOD $\defeq$ $\delta_n$-SOD of size $n$.
    \label{df:deltaA-SOD}
\end{definition}

When each function $f_i$ is twice continuously differentiable,
a simple sufficient condition for~\eqref{eq:HessianSimilarity}
is that $\AvgS \norm{\nabla^2 h_{i}^S (\xx)}^2  \le \delta_s^2 $ for any $\xx \in \R^d$.
However, this is not a necessary condition (see~\cite{fedred} for more details).

The quantity $V(\xx, \yy)$ in the left-hand side of \cref{eq:HessianSimilarity} can be interpreted as the variance of the gradient difference estimator $\nabla f_{\hat{i}}(\xx) - \nabla f_{\hat{i}}(\yy)$, where $\hat{i}$ is chosen uniformly at random from~$S$.
In particular, it can be rewritten as $V(\xx, \yy) = \frac{1}{s} \sum_{i \in S} \norm{\nabla f_i(\xx) - \nabla f_i(\yy)}^2 - \norm{\nabla f_S(\xx) - \nabla f_S(\yy)}^2$.
If each function $f_i$ is $L_i$-smooth, then $\delta_s \le (\frac{1}{s} \sum_{i \in S} L_i^2)^{1 / 2}$ for any $s \in [n]$. 
However, in general, condition~\eqref{eq:HessianSimilarity} is weaker than assuming that each $f_i$ is Lipschitz-smooth.

\paragraph{Full Client Participation.} 
We first consider the cross-silo setting where all the clients are highly reliable ($s = n$).
This is typically the case with organizations and institutions having strong computing resources and stable network connection~\citep{kairouz2021advances}.

\begin{theorem}
    \label{thm:S-DANE-Main}
    Consider \cref{Alg:S-DANE} with $s = n$. Let $f_i \colon \R^d \to \R$ be $\mu$-convex with $\mu \ge 0$ for any $i \in [n]$. 
    Assume that $\{f_i\}_{i=1}^n$ have $\delta$-SOD.
    Let $\lambda = 2 \delta$ and suppose that, for any $r \geq 0$, we have
    \begin{equation}
    \sum_{i=1}^n \norm{\nabla F_{i,r}(\xx_{i,r+1})}^2
    \le \frac{\lambda^2}{4}
    \sum_{i=1}^n \norm{\xx_{i,r+1} - \vv^r}^2.
    \label{eq:Accuracy-S-DANE-Full-Participation}
    \end{equation} 
    Then, for any $R \ge 1$, it holds that\footnote{Here, for $\mu = 0$, the expression after the first inequality should be understood as the corresponding limit when $\mu \to 0; \mu > 0$, which is exactly the expression after the final inequality. The same remark applies to all other similar results.}
    \begin{equation*}
        f(\Bar{\xx}^R) - f^\star
        \le 
        \frac{\mu D^2}{2[(1+\frac{\mu}{2\delta})^R -1]}
        \le 
        \frac{\delta D^2}{R},
    \end{equation*}
    where $\Bar{\xx}^R \defeq \frac{1}{\sum_{r=1}^R p^r} \sum_{r=1}^{R} p^r \xx^r$ for $p \defeq 1 + \frac{\mu}{\lambda}$, and $D \defeq \norm{\xx^0 - \xx^\star}$.
    To obtain $f(\Bar{\xx}^R) - f^\star \le \epsilon$ for a given $\epsilon > 0$,
    it thus suffices to perform $R = \cO \bigl(
        \frac{\delta + \mu}{\mu} \log(1 + \frac{\mu D^2}{\epsilon}) \bigr)$ communication rounds.
\end{theorem}

\cref{thm:S-DANE-Main} provides the convergence guarantee for~\algname{S-DANE} in terms of the number of communication rounds. Note that the rate is continuous in~$\mu$.

\begin{remark}  
    Some previous works express complexity estimates in terms of another constant, $\delta_{\max}$, defined by the inequality $\norm{\nabla h_i (\xx) - \nabla h_i (\yy)}
    \le \delta_{\max}\norm{\xx - \yy}$ holding for any $\xx,\yy \in \R^d$ and any $i \in [n]$, where $h_i = f - f_i$. 
    (See for instance the second line in Table~\ref{tab:summary}). Note that our $\delta$ is always not larger than $\delta_{\max}$, and can in principle be much smaller (up to $\sqrt{n}$ times).
    \label{rm:Comparison-Delta}
\end{remark}

The proven communication complexity is the same as that of \algname{DANE}~\citep{fedred}. However, the accuracy condition is milder. Specifically, to achieve the same guarantee,  \algname{DANE} requires
$\sum_{i=1}^n \norm{\nabla \Tilde{F}_{i,r}(\xx_{i,r+1})}^2
    \le \cO( \frac{\delta^2}{r^2} \sum_{i=1}^n \norm{ \xx_{i,r+1} - \xx^r }^2 )$,
    where $\Tilde{F}_{i,r}$ is defined as in~\eqref{eq:DANE},
    which incurs
    an $r^2$ overhead in the denominator,
    as in the general discussion on proximal-point methods in Section~\ref{sec:background-pp}.
    The next corollary shows that local computations in \algname{S-DANE} could be computationally very efficient.
\begin{corollary}
    Consider the same setting as in \cref{thm:S-DANE-Main}. Further, assume that each $f_i$ is $L$-smooth. To ensure~\eqref{eq:Accuracy-S-DANE-Full-Participation} with a certain first-order algorithm, each device $i$ needs to perform at most $\cO(\sqrt{ \frac{L}{\delta} })$ computations of $\nabla f_i$ at each round $r$.
    \label{thm:S-DANE-local-efficiency-exact}
\end{corollary}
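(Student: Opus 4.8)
The plan is to bound the per-round local work by running an optimal first-order method on the subproblem $\min_{\xx}F_{i,r}(\xx)$. First I would record its structure: adding the linear term $\lin{\nabla f_{S_r}(\vv^r)-\nabla f_i(\vv^r),\xx}$ changes neither the smoothness nor the strong-convexity constant of $f_i$, and adding $\tfrac{\lambda}{2}\norm{\xx-\vv^r}^2$ increases both by $\lambda$, so $F_{i,r}$ is $(\mu+\lambda)$-strongly convex and $(L+\lambda)$-smooth. Since $\lambda=2\delta>0$, the subproblem is strongly convex even when $\mu=0$, and because convexity plus $L$-smoothness forces $\delta\le L$ (as noted after Definition~\ref{df:HessianSimilarity}), its condition number satisfies $\kappa:=\tfrac{L+\lambda}{\mu+\lambda}\le\tfrac{L+2\delta}{2\delta}\le\tfrac{3L}{2\delta}=\cO(L/\delta)$. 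Hence Nesterov's accelerated gradient method applied to $F_{i,r}$ and initialized at the prox-center $\vv^r$ contracts the subproblem suboptimality at the linear rate $1-\Omega(\sqrt{\delta/L})$ using one evaluation of $\nabla f_i$ per step.

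Next I would reduce the target~\eqref{eq:Accuracy-S-DANE-Full-Participation} to a per-device suboptimality bound: after summing over $i$, it suffices to guarantee $\norm{\nabla F_{i,r}(\xx_{i,r+1})}\le\tfrac{\lambda}{2}\norm{\xx_{i,r+1}-\vv^r}$ for each $i$. Writing $\xx^\star_{i,r+1}$ for the exact minimizer, $(L+\lambda)$-smoothness gives $\norm{\nabla F_{i,r}(\xx_{i,r+1})}\le(L+\lambda)\norm{\xx_{i,r+1}-\xx^\star_{i,r+1}}$, the triangle inequality gives $\norm{\xx_{i,r+1}-\vv^r}\ge\norm{\xx^\star_{i,r+1}-\vv^r}-\norm{\xx_{i,r+1}-\xx^\star_{i,r+1}}$, and converting suboptimality to squared distance via strong convexity and smoothness shows the condition holds once the local solver has shrunk $F_{i,r}(\xx_{i,r+1})-F_{i,r}(\xx^\star_{i,r+1})$ to a $\Theta((\delta/L)^2)$ fraction of its value at $\vv^r$. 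The decisive observation is that this target fraction is fixed: it depends only on $L$ and $\delta$, not on the round $r$ nor the final accuracy $\epsilon$ — unlike DANE, whose requirement tightens like $\Theta(\delta^2/r^2)$ and thereby costs an extra factor that accumulates to $\log(1/\epsilon)$. Feeding this fixed target into the linear rate of the accelerated method bounds the number of $\nabla f_i$ evaluations per round by $\cO(\sqrt{L/\delta})$, up to at most a logarithmic-in-$(L/\delta)$ correction.

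The main obstacle, as I see it, is pinning down this constant. The reduction above delivers a per-round budget of $\cO\!\bigl(\sqrt{L/\delta}\,\log(L/\delta)\bigr)$ gradient evaluations, because the target relative accuracy $\Theta((\delta/L)^2)$ is itself $\kappa$-dependent, whereas the statement claims a clean $\cO(\sqrt{L/\delta})$. Shaving that residual $\log(L/\delta)$ — the part genuinely specific to this corollary, beyond removing the $r$- and $\epsilon$-dependence that DANE pays — would, I expect, require either a sharper inner analysis built on gradient-norm guarantees of accelerated gradient descent, or warm-starting round $r+1$'s local solve at the previous local iterate $\xx_{i,r}$ (available to device $i$ since $s=n$) together with a quantitative bound relating $\xx_{i,r}$, $\vv^r$, and $\xx^\star_{i,r+1}$, which jointly converge to $\xx^\star$ along the outer run. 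Controlling the mismatch between the warm-start point and the (different) prox-center $\vv^r$, and handling the first round where no warm start is available, is where I anticipate the real work; the remainder is routine accelerated-gradient bookkeeping.
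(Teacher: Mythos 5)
Your reduction is essentially the paper's: drop to the per-device condition $\norm{\nabla F_{i,r}(\xx_{i,r+1})}\le\Theta(\delta)\norm{\xx_{i,r+1}-\vv^r}$, then use the triangle inequality together with $(\mu+\lambda)$-strong convexity of $F_{i,r}$ (via~\eqref{eq:NormGradConvex}, $\norm{\xx_{i,r+1}-\xx^\star_{i,r}}\le\frac{1}{\lambda}\norm{\nabla F_{i,r}(\xx_{i,r+1})}$) to replace $\norm{\xx_{i,r+1}-\vv^r}$ by $\norm{\vv^r-\xx^\star_{i,r}}$, so that it suffices to ensure $\norm{\nabla F_{i,r}(\xx_{i,r+1})}\le\frac{2\delta}{3}\norm{\vv^r-\xx^\star_{i,r}}$, a target that is indeed independent of $r$ and $\epsilon$. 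But your proof then stalls exactly where you say it does, and the gap is real: running Nesterov's method and converting function suboptimality into a gradient-norm bound forces a relative accuracy of order $(\delta/L)^2$ and hence $\cO\bigl(\sqrt{L/\delta}\,\ln(L/\delta)\bigr)$ gradient calls, which is the bound the paper itself attributes to FGM in Remark~\ref{rm:LocalComputationGD} --- not the corollary's claim.

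The missing ingredient is not a sharper analysis of AGD and not warm-starting (the paper does neither; all local solves are started from $\vv^r$). It is the invocation of a first-order method designed for \emph{gradient-norm} minimization --- OGM-OG of \citet{kim2018generalizing}, or the accumulative-regularization method of \citet{lan2023optimal} (Theorem 2 in \cite{grad-sliding}, Theorem 3.2 in \cite{lan2023optimal}) --- which, started at $\vv^r$, after $K$ queries to $\nabla F_{i,r}$ returns a point with $\norm{\nabla F_{i,r}(\xx_{i,r+1})}\le\cO\bigl((L+\lambda)\norm{\vv^r-\xx^\star_{i,r}}/K^2\bigr)=\cO\bigl(L\norm{\vv^r-\xx^\star_{i,r}}/K^2\bigr)$ since $\lambda=2\delta\le 2L$. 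Plugging this into the reduced condition gives $K=\Theta\bigl(\sqrt{L/\delta}\bigr)$ with no logarithm; this is precisely what the corollary's phrase ``with a certain first-order algorithm'' refers to. So your argument proves the weaker, log-inflated statement and correctly diagnoses its own shortfall, but the decisive step --- citing (or reproving) a $\cO\bigl(L\norm{\xx_0-\xx^\star}/K^2\bigr)$ gradient-norm guarantee --- is left as a conjecture rather than supplied.
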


\begin{remark}
Particular examples of algorithms that could be used to achieve the result from \cref{thm:S-DANE-local-efficiency-exact} are \algname{OGM-OG} by~\citet{kim2018generalizing} and the accumulative regularization method by~\citet{lan2023optimal}, both designed for the fast minimization of the gradient norm. 
For the standard Gradient Method (GM), the required number of oracle calls is $\cO( \frac{L}{\delta})$. The standard Fast Gradient Method (FGM)~\citep{nesterov-book} can further decrease the complexity to $\cO( \sqrt{\frac{L}{\mu+\delta}}\log\frac{L}{ \delta})$
(see Remark~\ref{rm:LocalComputationGD} for details).
Thus, each device can run a constant number of standard local (F)GM steps to approximately solve their subproblems in \algname{S-DANE}.
\label{rm:LocalComputationMain}
\end{remark}

\paragraph{Partial Client Participation.}

Next, we turn our attention to the cross-device setting where a large number of clients (typically mobile phones) have either unstable network connection or weak computational power~\citep{kairouz2021advances}. In such scenarios, the server typically cannot expect all the clients to be able to participate in the communication at each round. Furthermore, the clients may typically be asked to communicate  
only once during the whole training and are stateless~\citep{mime}. Therefore, we now consider \algname{S-DANE} with partial client participation and without storing any states on devices.

To prove convergence, it is necessary to assume a certain level of dissimilarity among clients. Here, we use the same assumption as in~\citep{mime} to measure the gradient variance.

\begin{definition}[Bounded Gradient Variance \citep{mime}]
    \label{df:zeta2}
    Let $f_1, \ldots, f_n \colon \R^d \to \R$ be functions,
    and let $\zeta \geq 0$.
    We say that $\{f_i\}_{i = 1}^n$ have \emph{$\zeta$-BGV} if, for any $\xx \in \R^d$ and $f \defeq \Avg f_i$, it holds that
    \begin{equation}
        \Avg \norm{ \nabla f_i (\xx) - \nabla f(\xx) }^2 
        \le 
        \zeta^2 .
        \label{eq:GradSimilarity}
    \end{equation}
\end{definition}

\Cref{df:zeta2} is similar to the classical notion of uniformly bounded variance used in the context of classical stochastic gradient methods~\citep{bottou2018optimization}.

We also need the following assumption which complements \cref{df:HessianSimilarity}.
\begin{definition}[External Dissimilarity]
\label{assump:BoundedHessianSimilarity}
Let $f_1, \ldots, f_n : \R^d \to \R$ be functions,
and let $s \in [n]$, $\Delta_s \geq 0$.
Then, $\{f_i\}_{i = 1}^n$ are said to have \emph{$\Delta_s$-ED (of size $s$)} if, 
for any $\xx, \yy \in \R^d$ and any $S \in \binom{[n]}{s}$, we have
\begin{equation}
    \norm{ \nabla m_S(\xx) - \nabla m_S(\yy) } 
    \le 
    \Delta_s \norm{ \xx - \yy } ,
    \label{eq:BoundedHessianSimilarity}
\end{equation}
where $m_S \defeq f - f_S$ and $f_S \defeq \frac{1}{s}\sum_{i \in S} f_i$.
\label{df:BoundedHessianSimilarity}
\end{definition}

Compared to \cref{df:HessianSimilarity}, the new \cref{df:BoundedHessianSimilarity} quantifies the ``external'' variation of any $s$ functions w.r.t.\ the original function~$f$.
When each $f_i$ is twice continuously differentiable, 
\cref{eq:BoundedHessianSimilarity} is equivalent to $\norm{\nabla^2 m_S (\xx)} \le \Delta_s$ for 
any $\xx \in \R^d$. 
If each $f_i$ is $L$-smooth, then $\Delta_s \leq L$ for any $s \in [n]$. Therefore, using both Assumptions~\ref{df:HessianSimilarity} and~\ref{df:BoundedHessianSimilarity} is still weaker than assuming
that each $f_i$ is $L$-smooth. 

In what follows, we work with a new second-order dissimilarity measure defined as the sum $\delta_s + \Delta_s$. Note that $\delta_1 + \Delta_1 = \delta_{\max}$ and $\delta_n + \Delta_n = \delta$.

\begin{theorem}
    Consider \cref{Alg:S-DANE}. Let $f_i : \R^d \to \R$ be $\mu$-convex with $\mu \ge 0$ for any $i \in [n]$
    and let $n \ge 2$. 
    Assume that $\{f_i\}_{i=1}^n$ have $\delta_s$-SOD,
    $\Delta_s$-ED
    and $\zeta$-BGV.
    Let $\lambda = \frac{4 (n-s)}{s (n-1)} \frac{\zeta^2}{\epsilon} + 2 (\delta_s + \Delta_s)$,
    and suppose that, for any $r \ge 0$, we have
    \begin{equation}
    \AvgSr \norm{ \nabla F_{i,r}(\xx_{i,r+1}) }^2
    \le    
    \frac{\lambda^2}{4} \AvgSr \norm{ \xx_{i,r+1} - \vv^r }^2  .
    \label{eq:Accuracy-S-DANE-Partial-Participation}
    \end{equation}
    Then, to ensure that $\E[f(\Bar{\xx}^R)] - f(\xx^\star) \le \epsilon$ for a given $\epsilon > 0$, it suffices to perform at most the following number of communication rounds:
    \[
        R  
        =
        \Theta\biggl(
            \biggl[\frac{\delta_s + \Delta_s + \mu}{\mu} + \frac{n-s}{n-1} \frac{\zeta^2}{s \epsilon \mu} \biggr]
            \log\Bigl( 1 + \frac{\mu D^2}{\epsilon} \Bigr)
        \biggr)
        \le
        \Theta\biggl(
            \frac{(\delta_s + \Delta_s) D^2}{\epsilon}
            +
            \frac{n-s}{n-1} \frac{\zeta^2 D^2}{s \epsilon^2}
        \biggr),
    \]
    where $\Bar{\xx}^R \defeq \frac{1}{\sum_{r=1}^R p^r} \sum_{r=1}^{R} p^r \xx^r$ with $p \defeq 1 + \frac{\mu}{\lambda}$, 
    and $D \defeq \norm{\xx^0 - \xx^\star}$.
    \label{thm:S-DANE-MainThm-Sampling}
\end{theorem}

\cref{thm:S-DANE-MainThm-Sampling} provides the communication complexity of \algname{S-DANE} with client sampling and arbitrary (deterministic) local solvers.
The rate is again continuous in $\mu$. 
Compared with the previous case of $s = n$, the efficiency now depends on the gradient variance $\zeta$. Note that this error term gets reduced when $s$ increases. Specifically, 
to achieve the $\cO( \log \frac{1}{\epsilon} )$ and $\cO( \frac{1}{\epsilon})$ rates, it suffices to ensure that $s = \Theta( \frac{n\zeta^2}{\zeta^2 + n\epsilon (\delta_s + \Delta_s)})$. Notably, the algorithm can reach any target accuracy even when $n \to \infty$. 

Observe that the accuracy requirement~\eqref{eq:Accuracy-S-DANE-Partial-Participation} is the same as~\eqref{eq:Accuracy-S-DANE-Full-Participation}.
Therefore, the discussions therein are valid in the partial-participation setting as well. 
In particular, if each $f_i$ is $L$-smooth, then the number of oracle calls to $\nabla f_i$ required at each round could be as small as $\cO( \sqrt{\frac{L}{\lambda}})$ (see \cref{thm:S-DANE-local-efficiency-exact}). 
At the same time, it is also possible to use a stochastic optimization algorithm as a local solver (for more details, see Section~\ref{sec:StochasticLocalSolver-S-DANE}).

\section{Accelerated S-DANE}

\begin{algorithm}[tb]
\begin{algorithmic}[1]
\small\algrenewcommand\alglinenumber[1]{\footnotesize #1:}  
\State {\bfseries Input:} 
$\lambda > 0$, $\mu \ge 0$, 
$\xx^0 = \vv^0 \in \R^d$,
$s \in [n]$.
\State
Set $A_0 = 0$, $B_0 = 1$.
\For{$r=0,1,2, \ldots$}
\State 
Find $a_{r+1} > 0$ from the equation
    $\lambda = \frac{(A_r + a_{r+1})B_r}{a_{r+1}^2}$.
\State
    $A_{r+1} = A_r + a_{r+1}$,
    $B_{r+1} = B_r + \mu a_{r+1}$.
\State
$\yy^r = \frac{A_r}{A_{r+1}} \xx^r + \frac{a_{r+1}}{A_{r+1}} \vv^r$.
\State
Sample $S_r \in \binom{[n]}{s}$ uniformly at random 
without replacement.
\For{\textbf{each device $i\in S_r$ in parallel}} 
\State
$
\xx_{i,r+1} 
\approx
\argmin_{\xx \in \R^d}
\bigl\{ F_{i,r}(\xx) \defeq f_i(\xx) + \langle \nabla f_{S_r}(\yy^r) - \nabla f_i(\yy^r) , \xx  \rangle 
+ \frac{\lambda}{2}\norm{ \xx-\yy^r }^2 \bigr\}.
$
\EndFor 
\State
$
\xx^{r+1} = \AvgSr \xx_{i,r+1}.
$
\State
$
\vv^{r+1} 
= \argmin_{\xx \in \R^d} 
\bigl\{ \frac{a_{r+1}}{s} \sum_{i \in S_r}
        [
        \lin{ \nabla f_i(\xx_{i,r+1}), \xx}
        + \frac{\mu}{2} \norm{ \xx - \xx_{i,r+1} }^2  
        ]
        + \frac{B_r}{2} \norm{ \xx - \vv^r }^2 \bigr\}.
$
\EndFor
\caption{\algname{Acc-S-DANE}}
\label{Alg:ADPP}
\end{algorithmic}
\end{algorithm}

In this section, we present the accelerated version of \algname{S-DANE}, \algname{Acc-S-DANE} (Alg.~\ref{Alg:ADPP}), that achieves a better communication complexity compared to the basic method.
For simplicity, we only consider the full-participation setting and defer the partial participation to \cref{sec:Acc-S-DANE-sampling}.

\begin{theorem}
    \label{thm:MainThm-Acc-DANE} 
    Consider \cref{Alg:ADPP} with $s = n$. Let $f_i : \R^d \to \R$ be $\mu$-convex with $\mu \ge 0$ for any $i \in [n]$. 
    Assume that $\{f_i\}_{i=1}^n$ have $\delta$-SOD ($\delta > 0$).
    Let $\lambda = 2 \delta$ and suppose that, for any $r \ge 0$, we have
    $\sum_{i=1}^n \norm{ \nabla F_{i,r}(\xx_{i,r+1}) }^2
    \le \delta^2 \sum_{i=1}^n \norm{ \xx_{i,r+1} - \yy^r }^2$. 
    If $\mu \le 8\delta$, then, for any $R \ge 1$,
    \[
        f(\xx^R) - f^\star
        \le 
        \frac{2\mu D^2}{\bigl[
        \bigl( 1+\sqrt{\frac{\mu}{8 \delta}} \bigr)^R 
        - \bigl(1-\sqrt{\frac{\mu}{8 \delta}} \bigr)^R \, \bigr]^2}
        \le 
        \frac{4 \delta D^2}{R^2},
    \]
    where $D \defeq \norm{\xx^0 - \xx^\star}$.
    Otherwise,
    $
    f(\xx^R) - f^\star \le \frac{4\delta D^2}{(1 + \sqrt{\frac{\mu}{8\delta}})^{2 (R-1)}} 
    $
    for any $R \ge 1$.
    To ensure that $f(\xx^R) - f^\star \le \epsilon$ for a given $\epsilon > 0$, it thus suffices to perform
    $
        R = \cO \bigl(
        \sqrt{\frac{\delta + \mu}{\mu}} \log(1 + \sqrt{\frac{\min\{\mu,\delta\} D^2}{\epsilon}} ) \bigr)
    $
    communication rounds.
\end{theorem}

Let us consider the most interesting regime when $\mu \le 8\delta$.
Comparing \cref{thm:MainThm-Acc-DANE,thm:S-DANE-Main}, we see that \algname{Acc-S-DANE} essentially extracts the square root of the corresponding communication complexity of \algname{S-DANE} by improving it from $\BigOTilde(\frac{\delta}{\mu})$ to $\BigOTilde( \sqrt{\frac{\delta}{\mu}} )$ when $\mu > 0$, and from $\cO\bigl( \frac{\delta D^2}{\epsilon} \bigl)$ to $\cO( \sqrt{\frac{\delta D^2}{\epsilon}} )$ when $\mu = 0$, while maintaining the same accuracy condition for solving the subproblem.
Compared with \algname{Acc-Extragradient}, the complexity depends on a better constant $\delta$ instead of $\delta_{\max}$.

Note that we can satisfy the accuracy condition in \cref{thm:MainThm-Acc-DANE} in exactly the same way
as in \cref{thm:S-DANE-local-efficiency-exact}.
In particular, if each $f_i$ is $L$-smooth, each device $i$ needs at most $\cO(\sqrt{ \frac{L}{\delta} } )$ computations of $\nabla f_i$ at each round $r$ when using a fast algorithm for the gradient norm minimization.

Finally, let us highlight that \cref{Alg:ADPP}
gives a distributed framework for a \emph{generic acceleration scheme}, that applies to a
large class of local optimization methods---in the same spirit as in the famous \algname{Catalyst} \cite{lin2015universal} framework that applies to the case where $n=1$. However, in contrast to \algname{Catalyst}, this stabilized version removes the logarithmic overhead present in the original method.
Specifically, when applying \cref{thm:MainThm-Acc-DANE} with $n=1$ and $\lambda = L$ for a smooth convex function $f$, we recover the same rate as \algname{Catalyst}. The accuracy condition $\norm{ \nabla F_r(\xx^{r+1}) } \le L \norm{ \xx^{r+1} - \yy^r }$, or equivalently $\lin{\nabla f(\xx^{r+1}), \yy^r - \xx^{r+1}}\ge\frac{1}{2 L}\norm{ \nabla f(\xx^{r+1}) }^2$ can be achieved with one gradient step
$\xx^{r+1} \defeq \yy^r - \frac{1}{L} \nabla f(\yy^r)$
(see Lemma~5 in~\cite{nesterov-composite}).

\section{Dynamic Estimation of Similarity Constant by Line Search}

One drawback of \Cref{Alg:S-DANE,Alg:ADPP} is that they require the knowledge of the similarity constant~$\delta$ to choose an appropriate value for~$\lambda$.
This similarity constant is typically unknown in practice and might be difficult to estimate.
One effective solution to this problem is to dynamically adjust the coefficient~$\lambda$ inside the algorithms by using the classical technique of \emph{line search}.

The basic idea is as follows.
The server first picks an arbitrary sufficiently small constant $\tilde{\lambda}$ as an initial approximation to the unknown ``correct'' value of $\lambda = 2 \delta$. Then, at every round, the server sends the current estimate of $\lambda$ to each client asking them to approximately solve their local subproblem.
After receiving the corresponding local solutions, the server checks a certain inequality based on the obtained information. If this inequality is satisfied, the server accepts the resulting aggregated solution and goes to the next round while decreasing $\lambda$ in two times (so as to be more optimistic in the future). Otherwise, it increases $\lambda$ in two times, asks the clients to solve their subproblems with the new value of $\lambda$, and checks the inequality again.

The precise versions of \cref{Alg:S-DANE,Alg:ADPP} with line search for the full-participation setting are presented in \cref{Alg:S-DANE-Line-Search,Alg:ADPP-Line-Search}.
Importantly, our adaptive schemes are not just some heuristics but are probably efficient.
Specifically, their complexity estimates (in terms of the total number of communication rounds) are exactly the same as those given by \cref{thm:S-DANE-Main,thm:MainThm-Acc-DANE}, respectively, up to an extra \emph{additive} logarithmic term of $\log \frac{2 \delta}{\tilde{\lambda}}$ (see \cref{thm:S-DANE-Line-Search,thm:Acc-S-DANE-Line-Search}).

Another significant advantage of our adaptive algorithms is their ability to exploit \emph{local} similarity, resulting in much stronger practical performance compared to the methods with fixed $\lambda$.
We will demonstrate this in the next section.

\section{Numerical Experiments}
\label{sec:Exp-main}

\vspace{-0.5em}
In this section, we illustrate the performance of our methods in numerical experiments. The implementation can be found at \href{https://github.com/mlolab/S-DANE}{https://github.com/mlolab/S-DANE}.

\textbf{Convex quadratic minimization.} 
We first illustrate the properties of our algorithms as applied to minimizing a 
simple quadratic function: $f(\xx) \defeq \Avg f_i (\xx)$ where $f_i (\xx) \defeq \frac{1}{m} \sum_{j=1}^m  \frac{1}{2} \lin{\mA_{i,j} (\xx - \bb_{i,j}),\xx - \bb_{i,j}}$ where $\bb_{i,j} \in \R^d$ and $\mA_{i,j} \in \R^{d\times d}$.
The experimental details can be found in Appendix~\ref{sec:cq-details}. From \cref{fig:convex-intro}, we see that  \algname{S-DANE} converges as fast as \algname{DANE} in terms of communication rounds, but with much fewer local gradient oracle calls. \algname{Acc-S-DANE} achieves the best performance among the three methods. We also test 
\algname{S-DANE} and \algname{DANE} with the same fixed number of local steps. The result can be seen in \cref{fig:convex-same-local-steps} where \algname{S-DANE} is again more efficient. Finally, we report the strong performances of two adaptive variants (\cref{Alg:S-DANE-Line-Search,Alg:ADPP-Line-Search} with initial $\tilde{\lambda} = 10^{-3}$). We see from \cref{fig:convex-intro} that the method can automatically change $\lambda$ to adapt to the local second-order dissimilarity. (We use $\frac{\norm{\nabla f(\vv^{r+1}) - \nabla f(\vv^r)}}{\norm{\vv^{r+1} - \vv^r}}$ and $\sqrt{\frac{\Avg \norm{\nabla h_i (\vv^{r+1}) - \nabla h_i (\vv^r)}^2}{\norm{\vv^{r+1} - \vv^r}^2 }}$ to approximate the local smoothness and dissimilarity.)

\textbf{Strongly-convex polyhedron feasibility problem.} We now consider the problem of finding a feasible point $\xx^\star$ inside a polyhedron: 
$P = \cap_{i=1}^n P_i$, where 
$P_i = \{\xx : \lin{\aa_{i,j}, \xx} \le \bb_{i,j}, \forall j=1,\ldots,m_i\}$ and $\aa_{i,j}, \bb_{i,j} \in \R^d$. Each individual function is defined as $f_i \defeq \frac{n}{m}\sum_{j=1}^{m_{i}} [\lin{\aa_{i,j}, \xx} - \bb_{i,j}]_+^2$ where $\sum_{i=1}^n m_i = m$. 
We use $m = 10^5$ and $d = 10^3$. We first generate $\xx^\star$ randomly from a sphere with radius $10^6$. We then follow~\citep{rodomanov2024universality} to generate $(\aa_{i,j}, \bb_{i,j})$ such that $\xx^\star$ is a feasible point of $P$ and the initial point of all optimizers is outside the polyhedron. 
We choose the best $\lambda$ from $\{10^i\}_{i=-3}^3$.
We first consider the full client participation setting and use $n=s=10$.
We compare our proposed methods with \algname{GD}, \algname{DANE} with \algname{GD}~\citep{dane}, \algname{Scaffold} with control variate of option~I~\citep{scaffold}, \algname{Scaffnew}~\citep{scaffnew}, \algname{FedProx} with \algname{GD}~\citep{fedprox} and \algname{Acc-Extragradient}~\citep{grad-sliding}. The result is shown in the first plot of \cref{fig:polyhedron} where our proposed methods are consistently the best among all these algorithms. We next experiment with client sampling and use $n=100$. We decrease the number of sampled clients from $s=80$ to $s=10$. In addition to our methods, we also report the performances of \algname{Scaffold} and \algname{FedProx} with client sampling. From the same figure, 
we see that the improvement of \algname{Acc-S-DANE} over \algname{S-DANE} gradually disappears as $s$ decreases.

\begin{figure*}[tb!]
    \centering
    \includegraphics[width=1\textwidth]{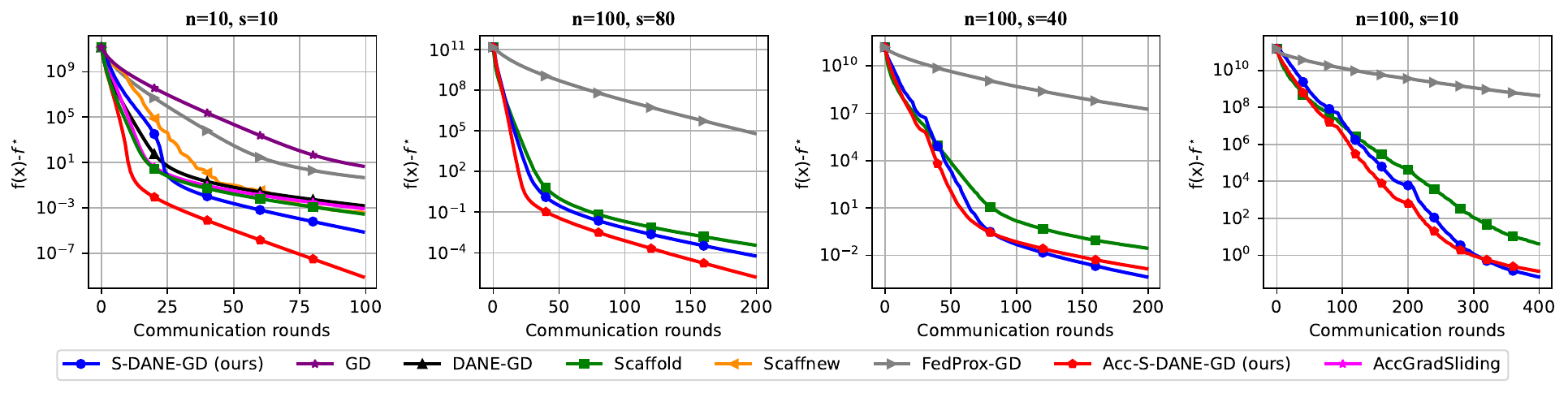}  
    \vspace*{-6mm}
    \caption{Comparisons of different algorithms for solving the polyhedron feasibility problem.}
    \label{fig:polyhedron}
\end{figure*}

\textbf{Adaptive choice of $\lambda$.} 
We consider the standard regularized logistic regression: $f(\xx)=\Avg f_i(\xx)$ with 
$
f_i(\xx) 
\defeq 
\frac{n}{M}\sum_{j=1}^{m_i} 
\log(1 + \exp(-y_{i,j} \lin{\aa_{i,j}, \xx}))
+
\frac{1}{2 M}
\norm{\xx}^2
$
where $(\aa_{i,j},y_{i,j})\in\R^{d+1}$ 
are features and labels and $M \defeq \sum_{i=1}^n{m_i}$ is the total number of data points in the training dataset. We use the ijcnn dataset from LIBSVM~\citep{libsvm}. We split the dataset into 10 subsets according to the Dirichlet distribution with $\alpha = 2$ (i.i.d) and $\alpha = 0.2$ (non-i.i.d). From \cref{fig:ijcnn},  Adaptive \algname{(Acc-)S-DANE} (\cref{Alg:S-DANE-Line-Search} and~\ref{Alg:ADPP-Line-Search}) converge much faster than the other best-tuned algorithms for both cases. (We set the initial $\tilde{\lambda} = 10^{-4}$ for non-i.i.d and $\lambda = 10^{-5}$ for i.i.d respectively.)

\begin{minipage}{0.48\textwidth}
\textbf{Deep learning task.} Finally, we consider the multi-class classification tasks with CIFAR10~\citep{cifar10} using ResNet-18~~\citep{resnet}.
The details can be found in Appendix~\ref{sec:DL-details}. From \cref{fig:cifar10}, we see that \algname{S-DANE (DL)}~\ref{Alg:S-DANE-DL}  reaches $90\%$ accuracy within 50 communication rounds while all the other methods are still below $90\%$ after $80$ epochs. 
The effectiveness of \algname{S-DANE} on the training of other deep learning models such as Transformer requires further exploration.

\end{minipage}
\hfill
\begin{minipage}{0.5\textwidth}
  \begin{center}
    \includegraphics[width=\linewidth]{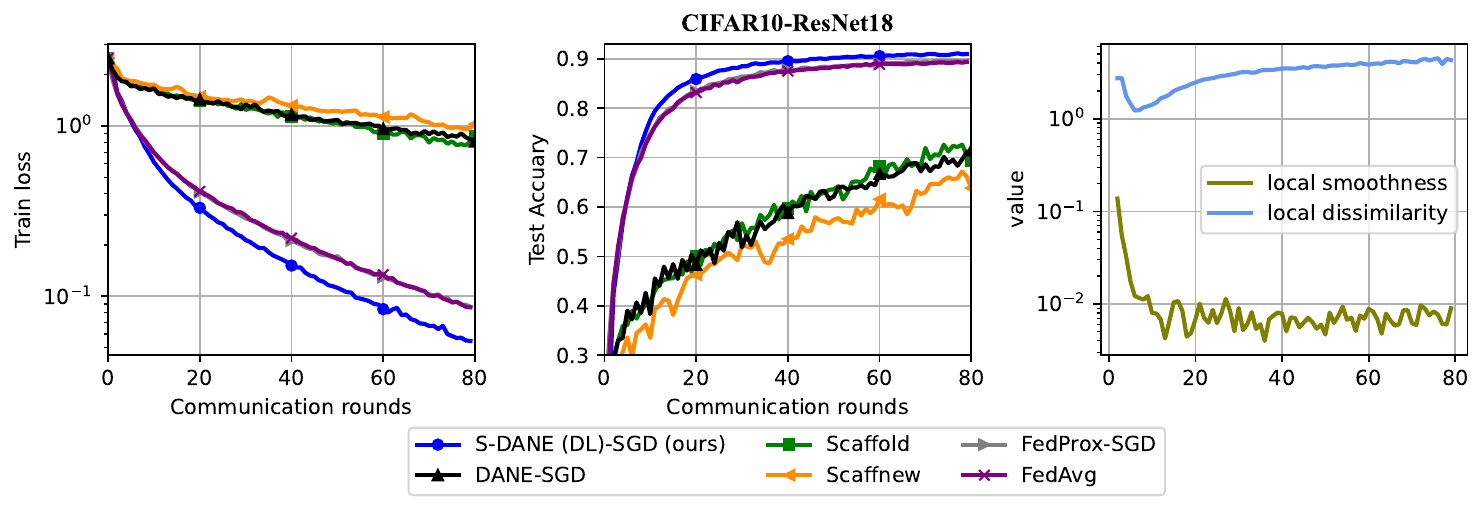}
    \captionof{figure}{ Comparison of \algname{S-DANE} without control variate against other popular optimizers on multi-class classification tasks with CIFAR10 datasets using ResNet18.}
    \label{fig:cifar10}
  \end{center}
\end{minipage}

\begin{figure*}[tb!]
    \centering
    \includegraphics[width=1\textwidth]{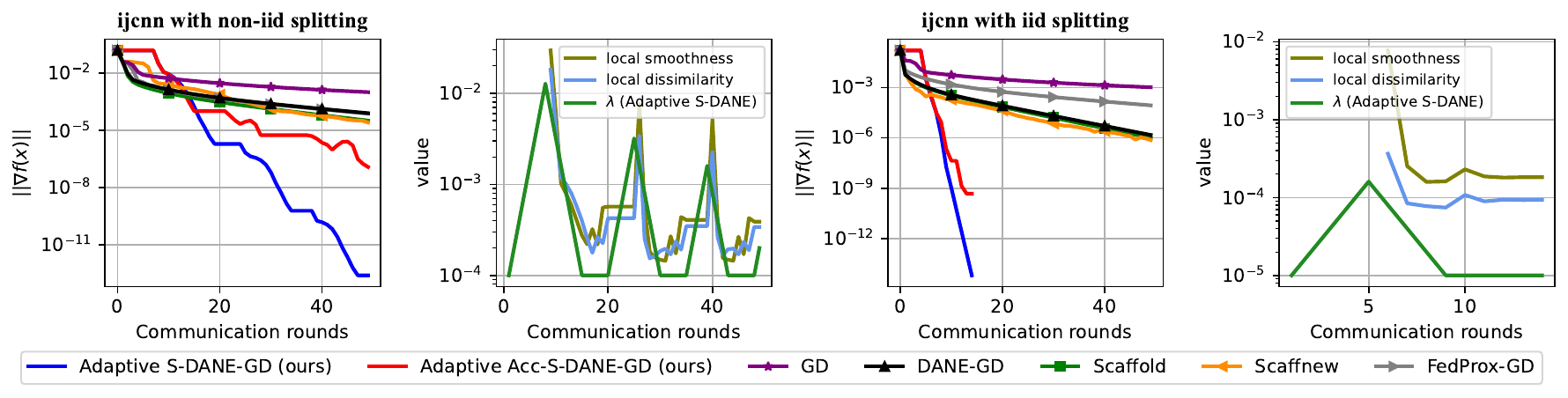}  
    \vspace*{-6mm}
    \caption{Illustration of the impact of adaptive $\lambda$ used in \algname{(Acc-)S-DANE} on the convergence of a regularized logistic regression problem on the ijcnn dataset~\citep{libsvm}.}
    \label{fig:ijcnn}
\end{figure*}

\section{Conclusion}
\label{conclusion}

\vspace{-0.5em}
We have proposed new federated optimization methods (both basic and accelerated) that simultaneously achieve the best-known communication and local computation complexities. 
The new methods allow partial participation and arbitrary stochastic local solvers, making them attractive in practice. We further equip both algorithms with line search and the resulting schemes can adapt to the local dissimilarity without knowing the corresponding similarity constant.
However, we assume that each function $f_i$ is $\mu$-strongly convex in all the theorems. This is stronger than assuming only $\mu$-strongly convexity of $f$, which is used in some prior works. Possible directions for future research include consideration of weaker assumptions as well as empirical and theoretical analyses for non-convex problems.

\newpage
\section*{Acknowledgments}

The authors are grateful to Adrien Taylor and Thomas Pethick for the reference to \cite{Solodov1999}. The authors are thankful to the anonymous reviewers for their valuable comments and suggestions.

\bibliographystyle{plainnat}
{\small
\bibliography{reference}
}
\appendix
\numberwithin{equation}{section}
\numberwithin{figure}{section}
\numberwithin{table}{section}

\newpage
\small
{\Huge\textbf{Appendix}}
\vspace{0.5cm}
\tableofcontents

\newpage
\section{More Related Work}
\label{sec:MoreRelatedWork}

In the first several years of the development for federated learning algorithms, the convergence guarantees are focused on the smoothness parameter $L$.
The de facto standard algorithm for federated learning is~\algname{FedAvg}. It reduces the communication frequency by doing multiple \algname{SGD} steps on available clients before communication, which works well in practice~\citep{fedavg}. However, in theory, if the heterogeneity among clients is large, then it suffers from the so-called client drift phenomenon~\citep{scaffold} and might be worse than centralized mini-batch \algname{SGD}~\citep{woodworth2020minibatch, analysislocalGD}. Numerous efforts have been made to mitigate this drift impact. \algname{FedProx} adds an additional regularizer to the subproblem of each client based on the idea of centralized proximal point method to limit the drift of each client. However, the communication complexity still depends on the heterogeneity. The celebrated algorithm \algname{Scaffold} applies drift correction (similar to variance-reduction) to the update of \algname{FedAvg} and it successfully removes the impact of the heterogeneity.
Afterwards, the idea of drift correction is employed in many other works~\cite{feddc,adabest,fedpvr,feddyn}.
\algname{Scaffnew} uses a special choice of control variate\cite{scaffnew} and first
illustrates the usefulness of taking standard local gradient steps under 
strongy-convexity, followed with more advanced methods with refined analysis and features such as client sampling and compression~\cite{tighterproxskip,tamuna,randprox}.
5gCS~\cite{5thgeneration} uses an approximate proximal-point step at each iteration and derives the convergence rate that is as good as \algname{Scaffnew}, and it also supports client sampling. Later, \citet{improvescaffnew} proposed \algname{APDA with Inexact Prox} that retains the same communication complexity as \algname{Scaffnew}, but further provably reduces the local computation complexity. FedAC~\citep{fedac} applies nesterov's acceleration in the local steps and shows provably better convergence than \algname{FedAvg} under certain assumptions.

More recent works try to develop algorithms with guarantees that rely on a potentially smaller constant than~$L$. \algname{Scaffold} first illustrates the usefulness of taking local steps for quadratics under Bounded Hessian Dissimilarity $\delta_{\max}$~\citep{scaffold}. SONATA~\cite{sonata} and its accelerated version~\cite{acc-sonata} prove explicit communication reduction in terms of $\delta_{\max}$ under strong convexity. \algname{Mime}~\citep{mime} and CE-LGD~\citep{celgd} work on non-convex settings and show the communication improvement on $\delta_{\max}$ and the latter achieves the min-max optimal rates. 
\algname{Accelerated ExtraGradient sliding}~\cite{grad-sliding} applies gradient sliding~\cite{grad-sliding} technique and shows communication reduction in terms of $\delta_{\max}$ for strongly-convex and convex functions, the local computation of which is also efficient without logarithmic dependence on the target accuracy, \algname{DANE} with inexact local solvers~\cite{dane,fedred,adie} has been shown recently to achieve the communication dependency on $\delta$ under convexity and $\delta_{\max}$ under non-convexity. For solving convex problems, the local computation efficiency depends on the target accuracy $\epsilon$. Otherwise, the accuracy condition for the subproblem should increase across the communication rounds. 
\citet{spag} proposed \algname{SPAG}, an accelerated method, and prove a better uniform concentration bound of the conditioning number when solving strongly-convex problems.
\algname{SVRP} and \algname{Catalyzed SVRP}~\citep{svrp} transfer the idea of using the centralized proximal point method to the distributed setting and they achieve communication complexity (with a different notion) w.r.t $\delta$. \citet{AccSVRS} further improves these two methods   either with a better rate or with weaker assumptions based on the \algname{Accelerated ExtraGradient sliding} method. ~\citet{beznosikov2023similarity} uses compression to reduce the bits required to communicate and the more general problem of Variational Inequalities is considered. Under the same settings but for non-convex optimization,
\algname{SABER}~\cite{mishchenko2024federated} achieves communication complexity reduction with better dependency on $\delta_{\max}$ and $n$.  \citet{karagulyan2024spam} proposed \algname{SPAM} that allows partial client particiption.

\begin{remark}
    \citet{svrp} and \citet{AccSVRS}  consider the total amount of information transmitted between the server and clients as the main metric, which is similar to reducing the total stochastic oracle calls in centralized learning settings.  This is a particularly meaningful setting if the server prefers to or has to receive/transmit vectors one by one and can set up communications very fast. The term 'client sampling' in these works refers to sampling one client to do the local computation. However, all the clients still need to participate in the communication from time to time to provide the full gradient information.
    This is orthogonal to the setup of this work since we assume each device can do the calculation in parallel. In the scenarios where the number of devices is too large such that receiving all the updates becomes problematic, we consider instead the standard partial participation setting. 
    \label{remark-svrp}
\end{remark}

\section{Technical Preliminaries}
\label{sec:TechnicalPreliminaries}

\subsection{Basic Definitions}
We use the following definitions throughout the paper.

\begin{definition}
A differentiable function $f:\R^d\to\mathbb{R}$ is called $\mu$-convex for some $\mu \ge 0$ if for all $\xx,\yy\in\R^d$,
\begin{equation}
    f(\yy)
    \ge 
    f(\xx) + \lin{\nabla f(\xx),\yy-\xx} + \frac{\mu}{2}\norm{ \xx-\yy}^2 .
    \label{df:stconvex}
\end{equation}
If $f$ is $\mu$-convex, then, for any $\xx, \yy \in \R^d$, we have (\citet{nesterov-book}, Theorem~2.1.10):
\begin{equation}
    \mu \norm{\xx - \yy} \le \norm{\nabla f(\xx)
    - \nabla f(\yy)} .
    \label{eq:NormGradConvex}
\end{equation}

\end{definition}

\begin{definition}
A differentiable function $f:\R^d\to\mathbb{R}$ is called $L$-smooth for some $L \ge 0$ if for all $\xx,\yy\in\R^d$,
\begin{equation}
    \norm{ \nabla f(\xx)-\nabla f(\yy)}\le L\norm{ \xx-\yy}.
    \label{df:smooth}
\end{equation}
If $f$ is $L$-smooth, then, for any $\xx, \yy \in \R^d$, we have (\citet{nesterov-book}, Lemma~1.2.3)
\begin{equation}
    f(\yy) \le f(\xx)+\lin{\nabla f(\xx),\yy-\xx} +\frac{L}{2}\norm{ \yy-\xx }^2 .
    \label{eq:SmoothUpperBound}
\end{equation}
\end{definition}

\begin{lemma}[\citet{nesterov-book}, Theorem 2.1.5]
    Let $f : \R^d \to \R$ be convex and $L$-smooth. 
    Then, for any $\xx, \yy \in \R^d$, we have
\begin{equation}
    \frac{1}{L}
    \norm{\nabla f(\xx) - \nabla f(\yy)}^2 \le \lin{\nabla f(\xx) - \nabla f(\yy), \xx - \yy} .
    \label{eq:SmoothConvexGradNorm}
\end{equation}
\label{thm:SmoothConvexGradNorm}
\end{lemma}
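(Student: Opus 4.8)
The plan is to reduce the two-sided co-coercivity bound to a one-sided inequality by means of an auxiliary function, and then symmetrize. First I would fix $\yy \in \R^d$ and introduce the shifted function $\phi(\zz) := f(\zz) - \lin{\nabla f(\yy), \zz}$. Since $\phi$ differs from $f$ only by a linear term, it is again convex and $L$-smooth, and its gradient $\nabla \phi(\zz) = \nabla f(\zz) - \nabla f(\yy)$ vanishes at $\zz = \yy$. By convexity, this first-order stationarity implies that $\yy$ is a \emph{global} minimizer of $\phi$, so $\phi(\yy) = \min_{\zz} \phi(\zz)$.

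Next I would bound this minimum from above using the smoothness upper bound~\eqref{eq:SmoothUpperBound} centered at an arbitrary point $\xx$: for every $\zz$ we have $\phi(\zz) \le \phi(\xx) + \lin{\nabla\phi(\xx), \zz - \xx} + \frac{L}{2}\norm{\zz-\xx}^2$. Minimizing the right-hand quadratic over $\zz$ (the minimizer is $\zz = \xx - \frac{1}{L}\nabla\phi(\xx)$) yields $\min_{\zz}\phi(\zz) \le \phi(\xx) - \frac{1}{2L}\norm{\nabla\phi(\xx)}^2$. Combining this with $\phi(\yy) = \min_{\zz}\phi(\zz)$ and substituting the definitions of $\phi$ and $\nabla\phi(\xx) = \nabla f(\xx) - \nabla f(\yy)$, a short rearrangement gives the one-sided estimate
\[
    f(\yy) - f(\xx) + \lin{\nabla f(\yy), \xx - \yy} \le -\frac{1}{2L}\norm{\nabla f(\xx) - \nabla f(\yy)}^2 \;.
\]

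Finally I would exchange the roles of $\xx$ and $\yy$ to obtain the analogous inequality with the two points swapped, and add the two. The function values cancel, the two inner-product terms combine into $-\lin{\nabla f(\xx) - \nabla f(\yy), \xx - \yy}$, and the two squared-norm terms on the right add up to $-\frac{1}{L}\norm{\nabla f(\xx) - \nabla f(\yy)}^2$; multiplying through by $-1$ and flipping the inequality produces exactly~\eqref{eq:SmoothConvexGradNorm}. The only conceptual step---and the one I expect to require the most care---is the passage from stationarity to global optimality of $\yy$ for $\phi$, which is precisely where convexity (rather than mere smoothness) is used; the remainder is the explicit minimization of the quadratic upper bound and careful bookkeeping of signs.
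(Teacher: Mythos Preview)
Your proof is correct and is precisely the classical argument from Nesterov's textbook (Theorem~2.1.5), which is what the paper defers to via citation rather than reproving. The paper itself gives no proof for this lemma, so there is nothing further to compare.
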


\subsection{Useful Lemmas}
\label{sec:UsefulLemmas}
We frequently use the following helpful lemmas for the proofs. 
\begin{lemma}
    For any $\xx,\yy\in \R^d$ and any $\gamma > 0$, we have
    \begin{equation}
        \abs{\lin{\xx, \yy}} \le \frac{\gamma}{2} \norm{\xx}^2 
        + \frac{1}{2 \gamma}\norm{ \yy}^2 ,
        \label{eq:BasicInequality1}
    \end{equation}
    \begin{equation}
        \norm{ \xx + \yy}^2 \le (1 + \gamma) \norm{ \xx}^2 
        + \Bigl( 1 + \frac{1}{\gamma} \Bigr) \norm{ \yy}^2.
        \label{eq:BasicInequality2}
    \end{equation}
\end{lemma}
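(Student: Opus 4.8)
The final statement to prove is the elementary inequality lemma:
\begin{equation}
\abs{\lin{\xx, \yy}} \le \frac{\gamma}{2} \norm{\xx}^2 + \frac{1}{2\gamma}\norm{\yy}^2,
\qquad
\norm{\xx+\yy}^2 \le (1+\gamma)\norm{\xx}^2 + (1+\tfrac1\gamma)\norm{\yy}^2.
\end{equation}

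\textbf{Approach.} Both bounds are immediate consequences of the nonnegativity of a suitable square, i.e.\ Young's inequality. The plan is to start from the obvious fact that $\norm{\aa - \bb}^2 \ge 0$ for a cleverly scaled pair $\aa,\bb$ built from $\xx$ and $\yy$, expand the square using $\norm{\aa-\bb}^2 = \norm{\aa}^2 - 2\lin{\aa,\bb} + \norm{\bb}^2$, and rearrange. There is essentially no obstacle here; the only ``choice'' is picking the right scaling so the cross term comes out as exactly $\lin{\xx,\yy}$.

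\textbf{Step 1 (first inequality).} Apply $0 \le \norm{\sqrt{\gamma}\,\xx - \tfrac{1}{\sqrt{\gamma}}\,\yy}^2 = \gamma\norm{\xx}^2 - 2\lin{\xx,\yy} + \tfrac{1}{\gamma}\norm{\yy}^2$, which rearranges to $\lin{\xx,\yy} \le \tfrac{\gamma}{2}\norm{\xx}^2 + \tfrac{1}{2\gamma}\norm{\yy}^2$. Replacing $\xx$ by $-\xx$ (or $\yy$ by $-\yy$) gives the same bound for $-\lin{\xx,\yy}$, hence for $\abs{\lin{\xx,\yy}}$. This uses only that $\gamma>0$ so the square roots are real and the division is legal.

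\textbf{Step 2 (second inequality).} Expand $\norm{\xx+\yy}^2 = \norm{\xx}^2 + 2\lin{\xx,\yy} + \norm{\yy}^2$ and bound the cross term $2\lin{\xx,\yy} \le 2\abs{\lin{\xx,\yy}}$ using the first inequality (with the same $\gamma$): $2\lin{\xx,\yy} \le \gamma\norm{\xx}^2 + \tfrac{1}{\gamma}\norm{\yy}^2$. Adding $\norm{\xx}^2 + \norm{\yy}^2$ yields $\norm{\xx+\yy}^2 \le (1+\gamma)\norm{\xx}^2 + (1+\tfrac1\gamma)\norm{\yy}^2$, as claimed. The main ``obstacle'' is purely bookkeeping: making sure the scaling in Step 1 is chosen so that no stray constant appears, and that the sign flip argument genuinely covers the absolute value; both are routine.
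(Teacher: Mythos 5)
Your proof is correct: the nonnegativity of $\norm{\sqrt{\gamma}\,\xx - \tfrac{1}{\sqrt{\gamma}}\,\yy}^2$ with the sign-flip argument gives \eqref{eq:BasicInequality1}, and expanding $\norm{\xx+\yy}^2$ and applying it to the cross term gives \eqref{eq:BasicInequality2}. The paper states this lemma without proof (it is standard Young's inequality), and your argument is exactly the standard derivation one would supply.
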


\begin{lemma}[\citet{fedred}, Lemma 14]
    Let $\{x_i\}_{i=1}^n$ be a set of vectors in $\R^d$ and 
    let $\Bar{\xx} \defeq \Avg \xx_i$.
    Let 
    $\vv \in \R^d$ be an arbitrary vector. Then,
    \begin{equation}
        \Avg \norm{ \xx_{i} - \vv }^2 \
        =
        \norm{ \Bar{\xx} - \vv }^2 + \Avg \norm{ \xx_{i} - \Bar{\xx} }^2 .
        \label{eq:AverageOfSquaredDifference}
    \end{equation}
\end{lemma}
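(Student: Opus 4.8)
The plan is to establish the identity by the standard ``add and subtract the mean'' trick followed by an elementary expansion of the squared norm. First I would fix $i \in [n]$ and write $\xx_i - \vv = (\xx_i - \Bar\xx) + (\Bar\xx - \vv)$, so that
\[
\norm{\xx_i - \vv}^2 = \norm{\xx_i - \Bar\xx}^2 + 2\lin{\xx_i - \Bar\xx, \Bar\xx - \vv} + \norm{\Bar\xx - \vv}^2,
\]
using $\norm{\aa + \bb}^2 = \norm{\aa}^2 + 2\lin{\aa,\bb} + \norm{\bb}^2$.

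Next I would average this identity over $i = 1,\dots,n$. The first term on the right contributes $\Avg\norm{\xx_i - \Bar\xx}^2$, and the last term, being independent of $i$, contributes $\norm{\Bar\xx - \vv}^2$. The cross term vanishes: by linearity of $\lin{\cdot,\cdot}$ in its first argument together with the definition $\Bar\xx = \Avg\xx_i$,
\[
\Avg \lin{\xx_i - \Bar\xx, \Bar\xx - \vv} = \lin{\Bigl(\Avg\xx_i\Bigr) - \Bar\xx, \Bar\xx - \vv} = \lin{\0, \Bar\xx - \vv} = 0.
\]
Combining these three contributions yields exactly \eqref{eq:AverageOfSquaredDifference}.

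There is no real obstacle here: the statement is a Pythagorean / bias--variance decomposition, and the whole proof is two lines of algebra. The only point worth noting is that the cancellation of the cross term relies crucially on $\Bar\xx$ being the unweighted arithmetic mean of the $\xx_i$ (it fails for an arbitrary reference point), and that nothing in the argument depends on the dimension $d$ or on any regularity of the vectors, so it goes through verbatim in $\R^d$.
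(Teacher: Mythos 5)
Your proof is correct: the add-and-subtract-the-mean expansion with the vanishing cross term is exactly the standard argument, and it matches the approach of the cited reference (the paper itself states this lemma without proof, deferring to Lemma 14 of the \algname{FedRed} paper). Nothing further is needed.
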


\begin{lemma}
    \label{thm:SimpleSequence}
    Let $(F_k)_{k=1}^{\infty}$ and $(D_k)_{k=0}^{\infty}$ be two non-negative sequences such that, for any $k \geq 0$, it holds that
    \[
        F_{k+1} + D_{k+1} \le q D_k + \epsilon,
    \]
    where $q \in (0, 1]$ and $\epsilon \geq 0$ are some constants.
    Then for all $K \ge 1$ and $S_K \defeq \sum_{k=1}^K \frac{1}{q^k}$, we have
    \[
        \frac{1}{S_K} \sum_{k=1}^K \frac{F_k}{q^k}
        + \frac{1-q}{1-q^K} D_K \le \frac{1-q}{\frac{1}{q^K} - 1}D_0 + \epsilon.
    \]
\end{lemma}
\begin{proof}
    Indeed, for any $k \ge 0$, we have
    \begin{equation*}
        \frac{F_{k+1}}{q^{k+1}}
        + \frac{D_{k+1}}{q^{k+1}}
        \le \frac{D_k}{q^k} + \frac{\epsilon}{q^{k+1}} .
    \end{equation*}
    Summing up from $k=0$ to $k = K-1$, we get
    \begin{equation*}
        \sum_{k=1}^K \frac{F_{k}}{q^{k}}
        + \frac{D_K}{q^K}
        \le 
        D_0 + S_K\epsilon .
    \end{equation*}
    Dividing both sides by $S_K$ and substituting 
    $S_K = \frac{1}{1 - q} (\frac{1}{q^K} - 1)$, we get the claim.
\end{proof}

\begin{lemma}[c.f. Lemma 2.2.4 in \cite{nesterov-book}]
    \label{thm:GrowthOfAr}
    Let $(A_r)_{r = 0}^\infty$ be a non-negative non-decreasing sequence such that
    $A_0 = 0$ and, for any $r \ge 0$,
    \begin{equation*}
        A_{r+1} \le \frac{c(A_{r+1} - A_r)^2}{1 + \mu A_r} ,
    \end{equation*}
    where $c > 0$ and $\mu \ge 0$ are some constants. 
    If $\mu \le 4c$,
    then for any $R \ge 0$, 
    we have
    \begin{equation}
        A_{R} \ge \frac{1}{4 \mu}
        \biggl[ 
        \biggl( 1+\sqrt{\frac{\mu}{4 c}} \biggr)^{\! R}
        -
        \biggl( 1-\sqrt{\frac{\mu}{4c}} \biggr)^{\! R} \,
        \biggr]^2 \ge \frac{R^2}{4c}.
    \end{equation}
    Otherwise, for any $R \ge 1$, it holds that
    \begin{equation}
        A_R \ge \frac{1}{4 c}
        \biggl( 1+\sqrt{\frac{\mu}{4 c}} \biggr)^{2(R-1)}
        .
    \end{equation}
\end{lemma}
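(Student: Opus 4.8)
The final statement to prove is Lemma~\ref{thm:GrowthOfAr}, which is a standard recursive-sequence growth bound in the spirit of Nesterov's Lemma 2.2.4. Let me sketch how I would attack it.

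\textbf{Setup and strategy.} The plan is to reduce the given nonlinear recursion $A_{r+1}(1+\mu A_r) \le c(A_{r+1}-A_r)^2$ to a manageable form by taking square roots. Writing $a_{r+1} := A_{r+1}-A_r \ge 0$ (so that $A_{r+1} = \sum_{j=1}^{r+1} a_j$), the inequality becomes $\sqrt{A_{r+1}}\sqrt{1+\mu A_r} \le \sqrt{c}\, a_{r+1}$, hence $a_{r+1} = A_{r+1}-A_r \ge \frac{1}{\sqrt c}\sqrt{A_{r+1}}\sqrt{1+\mu A_r} \ge \frac{1}{\sqrt c}\sqrt{A_{r+1}}\sqrt{1+\mu A_{r+1}} \cdot \sqrt{\tfrac{1+\mu A_r}{1+\mu A_{r+1}}}$. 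The cleaner route is: since $A_{r+1}\ge A_r$, we get $A_{r+1}-A_r \ge \frac{1}{\sqrt c}\sqrt{A_{r+1}(1+\mu A_r)}$. From here I would compare $\sqrt{A_{r+1}}-\sqrt{A_r}$ from below: using $\sqrt{A_{r+1}}-\sqrt{A_r} = \frac{A_{r+1}-A_r}{\sqrt{A_{r+1}}+\sqrt{A_r}} \ge \frac{A_{r+1}-A_r}{2\sqrt{A_{r+1}}} \ge \frac{1}{2\sqrt c}\sqrt{1+\mu A_r}$.

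\textbf{The two regimes.} In the case $\mu = 0$ this immediately gives $\sqrt{A_{r+1}}-\sqrt{A_r}\ge \frac{1}{2\sqrt c}$, hence $\sqrt{A_R}\ge \frac{R}{2\sqrt c}$ and $A_R \ge \frac{R^2}{4c}$, the weaker bound. For $\mu>0$, I would aim to show that the quantity $B_r := 1+\mu A_r$ grows geometrically. From $\sqrt{A_{r+1}}-\sqrt{A_r}\ge \frac{1}{2\sqrt c}\sqrt{1+\mu A_r}$, multiply by $\sqrt\mu$ and note $\sqrt{1+\mu A_r}\ge \sqrt{\mu A_r} = \sqrt{\mu}\sqrt{A_r}$; more usefully, set $t_r := \sqrt{1+\mu A_r}$, so $t_r^2 = 1+\mu A_r$ and $A_r = (t_r^2-1)/\mu$, $\sqrt{A_r} = \sqrt{(t_r^2-1)/\mu}$. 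The recursion should transform into something like $t_{r+1}\ge t_r + \text{(positive increment)}$; after squaring and simplifying one expects a relation of the form $t_{r+1} \ge \big(1+\sqrt{\tfrac{\mu}{4c}}\big) t_r - (\text{small correction})$, or — cleaner — a second-order linear recursion whose characteristic roots are $1\pm\sqrt{\mu/(4c)}$ (this is exactly why the condition $\mu\le 4c$ appears, keeping the smaller root nonnegative). The standard trick here is to guess $A_R = \frac{1}{4\mu}\big[(1+\sqrt{\mu/4c})^R - (1-\sqrt{\mu/4c})^R\big]^2$ as the extremal sequence and verify by induction that the actual $A_R$ dominates it; equivalently, show $\sqrt{\mu A_R + \tfrac14}\pm\tfrac12$ or a similar shifted quantity satisfies a clean linear inequality.

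\textbf{Inductive verification and the $\mu > 4c$ case.} Concretely I would proceed by induction on $R$: assume $A_R \ge \phi(R)$ where $\phi(R)$ is the claimed lower bound, and use the recursion $A_{R+1}\ge A_R + \frac{1}{\sqrt c}\sqrt{A_{R+1}(1+\mu A_R)}$ together with monotonicity of the right-hand side in $A_R$ to conclude $A_{R+1}\ge\phi(R+1)$. The base case $A_0 = 0 = \phi(0)$ is immediate, and one should also check $\phi(1)$ directly from the recursion with $A_0=0$: $A_1 \ge \frac{1}{\sqrt c}\sqrt{A_1}$, i.e. $A_1\ge 1/c$, matching $\phi(1) = \frac{1}{4\mu}[(1+\sqrt{\mu/4c})-(1-\sqrt{\mu/4c})]^2 = \frac{1}{4\mu}\cdot\frac{4\mu}{4c} = \frac{1}{4c}$... wait, that gives $1/(4c)$, so one needs the sharper $A_1\ge 1/c$ which is consistent since $1/c \ge 1/(4c)$; the bound is not tight at $r=1$ but that is fine. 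For the regime $\mu > 4c$, the same reduction $\sqrt{A_{r+1}}-\sqrt{A_r}\ge \frac{1}{2\sqrt c}\sqrt{1+\mu A_r}\ge \frac{\sqrt\mu}{2\sqrt c}\sqrt{A_r}$ gives $\sqrt{A_{r+1}}\ge (1+\sqrt{\mu/4c})\sqrt{A_r}$ for $r\ge 1$, hence $A_R \ge A_1 (1+\sqrt{\mu/4c})^{2(R-1)} \ge \frac{1}{c}(1+\sqrt{\mu/4c})^{2(R-1)}$, which is even stronger than the claimed $\frac{1}{4c}(1+\sqrt{\mu/4c})^{2(R-1)}$.

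\textbf{Main obstacle.} The routine but delicate part is the algebra in the $\mu \le 4c$ case: verifying that the closed-form expression $\phi(R) = \frac{1}{4\mu}[(1+\sqrt{\mu/4c})^R - (1-\sqrt{\mu/4c})^R]^2$ actually satisfies the recursion as an equality (so it is the extremal trajectory), which requires expanding the square of a difference of geometric sequences and checking a trigonometric-like identity — essentially recognizing that $(x^R - y^R)$ with $x = 1+\theta$, $y = 1-\theta$, $\theta = \sqrt{\mu/4c}$, $xy = 1 - \mu/4c$, satisfies the linear recursion $u_{R+1} = 2u_R - (1-\mu/4c)u_{R-1}$, and translating that back into the quadratic recursion for $A_R$. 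The $\ge R^2/(4c)$ lower bound then follows from $x^R - y^R \ge R\theta(x^{R-1}+y^{R-1})\cdot$(constant) or more simply from the earlier elementary argument $\sqrt{A_R}\ge \sum \frac{1}{2\sqrt c} = R/(2\sqrt c)$, which bypasses the closed form entirely. I would present the elementary $\sqrt{A_{r+1}}-\sqrt{A_r}$ argument first to get both the $R^2/(4c)$ bound and the geometric growth, and invoke the closed-form extremal sequence only to sharpen the constant in the geometric regime.
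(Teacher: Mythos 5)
Your proposal follows essentially the same route as the paper: the paper substitutes $C_r=\sqrt{\mu A_r}$ and derives the per-step bound $C_{r+1}-C_r\ge\sqrt{\mu/4c}\,\sqrt{1+C_r^2}$, which is exactly your inequality $\sqrt{A_{r+1}}-\sqrt{A_r}\ge\frac{1}{2\sqrt{c}}\sqrt{1+\mu A_r}$ up to the factor $\sqrt{\mu}$, and then, just as you propose, it telescopes for the $R^2/(4c)$ bound, proves $C_R\ge\frac{1}{2}\bigl[(1+\sqrt{\mu/4c})^R-(1-\sqrt{\mu/4c})^R\bigr]$ by induction against the closed-form extremal sequence (details deferred to a cited reference, matching the part you flag as the delicate algebra), and uses the geometric recursion $C_{r+1}\ge(1+\sqrt{\mu/4c})C_r$ with a direct bound on the first term when $\mu>4c$. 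So the proposal is correct and coincides with the paper's argument in all essential steps.
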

\begin{proof}
    
    Denote $C_r = \sqrt{\mu A_r}$. For any $r \ge 0$,
   it holds that
   \begin{equation*}
       \mu C_{r+1}^2 (1 + C_r^2)
       \le
       c (C_{r+1}^2 - C_r^2)^2
       \le
       c \bigl( 
       2(C_{r+1} - C_r)C_{r+1} \bigr)^2
       =
       4c (C_{r+1} - C_r)^2 C_{r+1}^2 .
   \end{equation*}
   Therefore, for any $r \ge 0$:
   \begin{equation*}
       C_{r+1} - C_r 
       \ge
       \sqrt{\frac{\mu}{4c}} \sqrt{1 + C_r^2} .
   \end{equation*}
   When $\mu \le 4c$, 
   by induction, one can show that, for any $R \ge 0$
   (see the proof of Theorem 1 in~\cite{seb-acc} for details):
   \begin{equation*}
       C_R \ge \frac{1}{2} \Bigl[
       \Bigl( 1 + \sqrt{\frac{\mu}{4c}} \Bigr)^{R}
       - \Bigl( 1 - \sqrt{\frac{\mu}{4c}} \Bigr)^{R} \biggr]
       \ge \sqrt{\frac{\mu}{4c}} R .
   \end{equation*}
   When $\mu > 4c$, we have $C_{r+1} - C_r \ge \sqrt{\frac{\mu}{4c}} C_r$. 
   It follows that, for any $R \ge 1$,
   \begin{equation*}
        C_R \ge \Bigl( 1 + \sqrt{\frac{\mu}{4c}} \Bigr)^{R-1} C_1 
        \ge \Bigl(1 + \sqrt{\frac{\mu}{4c}} \Bigr)^{R-1} \sqrt{\frac{\mu}{4c}}. 
   \end{equation*}
   Plugging in the definition of $C_R$, we get the claims.
\end{proof}

\begin{lemma}
      Let $\{\xx_i\}_{i=1}^n$ be vectors in $\R^d$
      with $n \ge 2$. 
      Let $s \in [n]$ and 
      let $S \in \binom{[n]}{s}$ be sampled uniformly at random without replacement. 
      Let $\Bar{\xx} \defeq \Avg \xx_i$, $\zeta^2 \defeq \Avg \norm{\xx_i - \Bar{\xx}}^2$,
      and $\Bar{\xx}_S \defeq \frac{1}{s}\sum_{j \in S} \xx_j$.
      Then,
      \begin{equation}
            \E[\Bar{\xx}_S] = \Bar{\xx} \qquad
            \text{and} \qquad
          \E[\norm{ \Bar{\xx}_S - \Bar{\xx} }^2] 
          = \frac{n - s}{n - 1} \frac{\zeta^2}{s}.
          \label{eq:Variance-SampleWithoutReplacement}
      \end{equation}
\end{lemma}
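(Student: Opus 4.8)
The plan is to prove the two claims about sampling $s$ elements from $\{1,\dots,n\}$ uniformly without replacement: unbiasedness of the subset mean, and the exact variance formula.

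\textbf{Unbiasedness.} First I would write $\Bar{\xx}_S = \frac{1}{s}\sum_{i=1}^n \1[i \in S]\,\xx_i$ and take expectations. By symmetry of uniform sampling without replacement, $\Pb{i \in S} = s/n$ for every $i$, so $\E[\Bar{\xx}_S] = \frac{1}{s}\sum_{i=1}^n \frac{s}{n}\xx_i = \Bar{\xx}$. This is the easy part.

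\textbf{Variance.} The cleanest route is to reduce to the centered vectors. Let $\yy_i := \xx_i - \Bar{\xx}$, so $\Avg \yy_i = \0$ and $\Avg \norm{\yy_i}^2 = \zeta^2$. Then $\Bar{\xx}_S - \Bar{\xx} = \frac{1}{s}\sum_{i \in S}\yy_i$, and I would expand
\begin{equation*}
    \E\Bigl[\bignorm{\tfrac{1}{s}\textstyle\sum_{i\in S}\yy_i}^2\Bigr]
    = \frac{1}{s^2}\Bigl(\sum_{i=1}^n \Pb{i\in S}\norm{\yy_i}^2 + \sum_{i \ne j}\Pb{i,j\in S}\lin{\yy_i,\yy_j}\Bigr).
\end{equation*}
Using $\Pb{i\in S} = s/n$ and $\Pb{i,j\in S} = \frac{s(s-1)}{n(n-1)}$ for $i\ne j$, and the identity $\sum_{i\ne j}\lin{\yy_i,\yy_j} = \bignorm{\sum_i \yy_i}^2 - \sum_i\norm{\yy_i}^2 = -\sum_i\norm{\yy_i}^2 = -n\zeta^2$ (since the $\yy_i$ sum to zero), the expression collapses. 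Substituting $\sum_i\norm{\yy_i}^2 = n\zeta^2$ gives $\frac{1}{s^2}\bigl(\frac{s}{n}\cdot n\zeta^2 - \frac{s(s-1)}{n(n-1)}\cdot n\zeta^2\bigr) = \frac{\zeta^2}{s}\bigl(1 - \frac{s-1}{n-1}\bigr) = \frac{n-s}{n-1}\frac{\zeta^2}{s}$, which is the claim.

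\textbf{Main obstacle.} There is no deep obstacle here; the only thing to be careful about is computing the pairwise inclusion probability $\Pb{i,j\in S} = \binom{n-2}{s-2}/\binom{n}{s} = \frac{s(s-1)}{n(n-1)}$ correctly, and remembering to exploit that the centered vectors sum to zero so that the cross-term $\bignorm{\sum_i\yy_i}^2$ vanishes — without that reduction one would be left with an awkward term. Everything else is a routine second-moment expansion.
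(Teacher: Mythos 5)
Your proof is correct and follows essentially the same route as the paper: expand $\Bar{\xx}_S$ via indicator variables and use the inclusion probabilities $s/n$ and $\frac{s(s-1)}{n(n-1)}$ for the first and second moments. The only (minor) difference is how the cross-term is handled: you evaluate $\sum_{i\ne j}\lin{\yy_i,\yy_j} = -n\zeta^2$ directly from the fact that the centered vectors sum to zero, whereas the paper determines this same sum indirectly by specializing its general formula to $s=n$ (where the variance must vanish) and substituting back—your direct identity is a slightly cleaner way to finish the identical computation.
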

\begin{proof}
    Let $\binom{n}{m} = \frac{n !}{m! (n-m)!}$ be the binomial coefficient for any 
    $ n \ge m \ge 1$. By the definition of $\Bar{\xx}_S$, we have
    \begin{equation*}
        \Bar{\xx}_S = \frac{1}{s}\sum_{j \in S} \xx_j
        = \frac{1}{s} \sum_{i=1}^n \mathds{1}[i \in S] \xx_i,
    \end{equation*}
    where $\mathds{1}[E]$ denotes the $\{0, 1\}$-indicator of the event~$E$.
    Taking the expectation on both sides, we get
    \begin{equation*}
        \E[\Bar{\xx}_S] = 
        \frac{1}{s} \sum_{i=1}^n
        \prob{i \in S} \xx_i=
        \frac{1}{s} \sum_{i=1}^n
        \frac{\binom{n-1}{s-1}}{\binom{n}{s}} \xx_i
        =
        \frac{1}{s} \sum_{i=1}^n\frac{s}{n}\xx_i
        = \Bar{\xx} .
    \end{equation*}
    Further,
    \begin{align*}
        \E[\norm{\Bar{\xx}_S - \bar{\xx}}^2]
        &=
        \E\Bigl[ \frac{1}{s^2} \sum_{i \in S} \sum_{j \in S}
        \lin{\xx_i - \bar{\xx}, \xx_j - \bar{\xx}} \Bigr]
        \\
        &=
        \E\Bigl[ \frac{1}{s^2} \sum_{i \in S}
        \norm{\xx_i - \Bar{\xx}}^2
        + \frac{1}{s^2}
        \sum_{i, j \in S, i \neq j}
        \lin{\xx_i - \bar{\xx}, \xx_j - \bar{\xx}} \Bigr]
        \\
        &=
        \E\Bigl[ \frac{1}{s^2} \sum_{i = 1}^n
        \mathds{1}[i \in S]
        \norm{\xx_i - \Bar{\xx}}^2
        + \frac{1}{s^2}
        \sum_{i, j \in [n], i \neq j}
        \mathds{1}[i,j \in S]
        \lin{\xx_i - \bar{\xx}, \xx_j - \bar{\xx}} \Bigr]
        \\
        &=
        \frac{1}{s^2} \sum_{i = 1}^n
        \prob{ i \in S }
        \norm{\xx_i - \Bar{\xx}}^2
        + \frac{1}{s^2}
        \sum_{i, j \in [n], i \neq j}
        \prob{ i,j \in S}
        \lin{\xx_i - \bar{\xx}, \xx_j - \bar{\xx}}
        \\
        &=
        \frac{1}{s^2} \sum_{i = 1}^n
        \frac{\binom{n-1}{s-1}}{\binom{n}{s}}
        \norm{\xx_i - \Bar{\xx}}^2
        + \frac{1}{s^2}
        \sum_{i, j \in [n], i \neq j}
        \frac{\binom{n-2}{s-2}}{\binom{n}{s}}
        \lin{\xx_i - \bar{\xx}, \xx_j - \bar{\xx}}
        \\
        &=
        \frac{\zeta^2}{s}
        +\frac{s - 1}{s n (n - 1)}
        \sum_{i, j \in [n], i \neq j}
        \lin{\xx_i - \bar{\xx}, \xx_j - \bar{\xx}}. 
    \end{align*}
    Note that
    \[
        \sum_{i, j \in [n], i \neq j} \lin{\xx_i - \bar{\xx}, \xx_j - \bar{\xx}}
        =
        \sum_{i, j \in [n]} \lin{\xx_i - \bar{\xx}, \xx_j - \bar{\xx}}
        -
        \sum_{i = 1}^n \norm{\xx_i - \xx}^2
        =
        -n \zeta^2.
    \]
    Thus,
    \[
        \E[\norm{\Bar{\xx}_S - \bar{\xx}}^2]
        =
        \frac{\zeta^2}{s} - \frac{(s - 1) \zeta^2}{s (n - 1)}
        =
        \frac{n - s}{n - 1} \frac{\zeta^2}{s}.
        \qedhere
    \]
\end{proof}

\begin{lemma}
    Suppose $\{f_i\}_{i=1}^n$ satisfy $\Delta_s$-ED of size $s \in [n]$ and $\zeta$-BGV with $n \ge 2$.  
    Let $f \defeq \frac{1}{n} \sum_{i = 1}^n f_i$ and $f_S \defeq \frac{1}{s}\sum_{i \in S} f_i$, where 
    $S \in \binom{[n]}{s}$ is sampled uniformly at random without replacement.  
    Further, let $\yy \in \R^d$ be a fixed point, and let $\xx_S \in \R^d$ be a random point defined by a deterministic function of~$S$. Then, for any $\gamma > 0$, it holds that
    \begin{equation}
        \E_{S}[f(\xx_S) - f_S(\xx_S)] 
        \leq
        \frac{n - s}{n - 1}
        \frac{\gamma \zeta^2}{2 s} + \Bigl( \frac{1}{2 \gamma} + \frac{\Delta_s}{2} \Bigr) 
        \E_{S}[ \norm{ \xx_S - \yy }^2] 
        .
    \end{equation}
    \label{thm:FunctionDiff-Lowerbound}
\end{lemma}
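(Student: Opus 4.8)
The plan is to bound $f(\xx_S) - f_S(\xx_S)$ pointwise in $S$ and then take the expectation over $S$. Introduce $m_S := f - f_S$, so that the left-hand side is exactly $\E_S[m_S(\xx_S)]$. Assumption $\Delta_s$-ED \eqref{eq:BoundedHessianSimilarity} states precisely that $m_S$ satisfies the smoothness condition \eqref{df:smooth} with constant $\Delta_s$ (note that $m_S$ is generally \emph{not} convex, and convexity will not be needed). Applying \eqref{eq:SmoothUpperBound} to $m_S$, expanded around $\yy$ and evaluated at $\xx_S$, gives
\[
m_S(\xx_S) \;\le\; m_S(\yy) + \lin{\nabla m_S(\yy),\, \xx_S - \yy} + \frac{\Delta_s}{2}\norm{\xx_S - \yy}^2 .
\]

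Next I would split the cross term with Young's inequality \eqref{eq:BasicInequality1} with parameter $\gamma$, i.e.\ $\lin{\nabla m_S(\yy),\, \xx_S - \yy} \le \frac{\gamma}{2}\norm{\nabla m_S(\yy)}^2 + \frac{1}{2\gamma}\norm{\xx_S - \yy}^2$ (this is applied for each fixed $S$, so no independence between $S$ and $\xx_S$ is used), and then take $\E_S$ of the resulting inequality. Merging the two $\norm{\xx_S - \yy}^2$ contributions into the coefficient $\frac{1}{2\gamma} + \frac{\Delta_s}{2}$, it remains only to establish $\E_S[m_S(\yy)] = 0$ and $\E_S[\norm{\nabla m_S(\yy)}^2] \le \frac{n-s}{n-1}\frac{\zeta^2}{s}$, which both concern the fixed point $\yy$ alone.

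The first identity is immediate: since $S$ is uniform over $\binom{[n]}{s}$ drawn without replacement, each index belongs to $S$ with probability $s/n$, hence $\E_S[f_S(\yy)] = f(\yy)$ and so $\E_S[m_S(\yy)] = 0$. For the second, write $\nabla m_S(\yy) = \nabla f(\yy) - \nabla f_S(\yy) = \nabla f(\yy) - \frac{1}{s}\sum_{i \in S}\nabla f_i(\yy)$ and apply the sampling variance identity \eqref{eq:Variance-SampleWithoutReplacement} with $\xx_i := \nabla f_i(\yy)$ (so that $\bar\xx = \nabla f(\yy)$ and $\bar\xx_S = \nabla f_S(\yy)$); this gives $\E_S[\norm{\nabla m_S(\yy)}^2] = \frac{n-s}{n-1}\frac{1}{s}\,\bigl(\Avg\norm{\nabla f_i(\yy) - \nabla f(\yy)}^2\bigr)$, and $\zeta$-BGV \eqref{eq:GradSimilarity} bounds that average by $\zeta^2$. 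Substituting both facts into the expectation of the displayed inequality yields the claim.

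The argument is essentially routine; the only points to watch are that $m_S$ need not be convex, so one must use \eqref{eq:SmoothUpperBound} (the quadratic upper bound implied by \eqref{df:smooth}) rather than any convexity-based inequality, and that the variance identity \eqref{eq:Variance-SampleWithoutReplacement} must be invoked with the vectors $\nabla f_i(\yy)$, its internal quantity ``$\zeta^2$'' (there equal to $\Avg\norm{\nabla f_i(\yy) - \nabla f(\yy)}^2$) being subsequently relaxed to the BGV constant $\zeta^2$. I do not anticipate any genuine obstacle.
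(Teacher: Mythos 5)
Your proposal is correct and follows essentially the same route as the paper's proof: a $\Delta_s$-smoothness quadratic upper bound on $m_S = f - f_S$ around $\yy$, Young's inequality with parameter $\gamma$ on the cross term, then expectation over $S$ using $\E_S[m_S(\yy)]=0$ and the without-replacement variance identity~\eqref{eq:Variance-SampleWithoutReplacement} applied to the vectors $\nabla f_i(\yy)$, bounded via $\zeta$-BGV. No gaps.
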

\begin{proof}
    Let $h_S \defeq f - f_S$. 
    Since $\{f_i\}$ satisfy $\Delta_s$-ED (\cref{assump:BoundedHessianSimilarity}), we have, in view of inequality~\eqref{eq:SmoothUpperBound},
    \begin{align*}
        h_S(\xx_S) 
        &\le 
        h_S(\yy)
        + \lin{\nabla h_S (\yy), \xx_S - \yy}
        + \frac{\Delta_s}{2} \norm{ \xx_S - \yy }^2 
        \\
        &\stackrel{\eqref{eq:BasicInequality1}}{\le}
        h_S(\yy)
        + \frac{\gamma}{2}\norm{\nabla h_S (\yy)}^2 
        + \frac{1}{2 \gamma} \norm{\xx_S - \yy}^2
        + \frac{\Delta_s}{2} \norm{ \xx_S - \yy }^2 
    \end{align*}
    Rearranging and taking the expectation on both sides,
    we get, for any $\gamma > 0$,
    \begin{align*}
        \E_{S}[h_S(\xx_S) - h_S(\yy)]
        &\le
        \frac{\gamma}{2}\E_S[\norm{ \nabla h_S (\yy)}^2]
        +\frac{1}{2\gamma} \E_S[\norm{ \xx_S - \yy}^2]
        + \frac{\Delta_s}{2} \E_S[\norm{ \xx_S - \yy}^2]
        \\
        &\stackrel{\eqref{eq:GradSimilarity}}{\le} 
        \frac{n - s}{n - 1} \frac{\gamma \zeta^2}{2 s} + \Bigl( \frac{1}{2 \gamma} + \frac{\Delta_s}{2} \Bigr)
        \E_{S}[ \norm{ \xx_S - \yy}^2] ,
    \end{align*}
    where the last inequality is due 
    to~\eqref{eq:Variance-SampleWithoutReplacement}.
    Using the fact that $\E_S[f(\yy) - f_S(\yy)] = 0$,
    we get the claim.
\end{proof}

\begin{lemma}
    Suppose $\{f_i\}_{i=1}^n$ satisfy $\delta_s$-SOD of size $s \in [n]$. 
    Let $f_S \defeq \frac{1}{s}\sum_{i \in S} f_i$ where 
    $s \in [n]$ and 
    $S  \in \binom{[n]}{s}$. 
    Let $\vv \in \R^d$ be a fixed point, $\lambda > \delta_s$, 
    and let
    $$F_{i}(\xx) \defeq f_i(\xx) + \langle \nabla h_i^S(\vv), \xx \rangle 
    + \frac{\lambda}{2} \norm{ \xx-\vv }^2 , $$
    where $h_i^S \defeq f_{S} - f_i$.
    Let $\{\xx_i\}_{i \in S}$ be a set of points in $\R^d$
    (such that $\xx_i \approx \argmin_{\xx} F_i(\xx)$ in the sense that $\norm{\nabla F_i(\xx_i)}$ is sufficiently small),
    and let $\Bar{\xx}_S = \AvgS \xx_i$.
    Then,
    \begin{multline*}
         \AvgS \bigl\langle \nabla f_i (\xx_{i}) + \nabla h_i^S (\Bar{\xx}_S), \vv - \xx_i \bigr\rangle
        - 
        \frac{1}{2 \lambda} \Bigl\lVert \AvgS \nabla f_i(\xx_{i}) \Bigr\rVert^2
        \\
        \ge
        \frac{\lambda - \delta_s}{2} \AvgS 
         \norm{ \vv - \xx_i }^2
         - \frac{1}{\lambda} 
         \AvgS \norm{ \nabla F_i (\xx_i) }^2 .
    \end{multline*}
    \label{thm:ErrorLowerBound}
\end{lemma}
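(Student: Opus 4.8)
The plan is to prove this as pure algebra, relying on three ingredients already available: the exact identity $\AvgS \nabla h_i^S(\xx) = \0$, which holds for \emph{every} $\xx$ since $\sum_{i\in S} h_i^S \equiv 0$; the decomposition~\eqref{eq:AverageOfSquaredDifference} (in its obvious subset form); and the $\delta_s$-SOD assumption~\eqref{eq:HessianSimilarity}. First I would unfold $F_i$: from $\nabla F_i(\xx) = \nabla f_i(\xx) + \nabla h_i^S(\vv) + \lambda(\xx - \vv)$, i.e.\ $\nabla f_i(\xx_i) = \nabla F_i(\xx_i) - \nabla h_i^S(\vv) + \lambda(\vv - \xx_i)$, one gets
\begin{equation*}
\AvgS \lin{\nabla f_i(\xx_i) + \nabla h_i^S(\Bar{\xx}_S),\, \vv - \xx_i}
= T_1 + T_2 + \lambda \AvgS \norm{\vv - \xx_i}^2,
\end{equation*}
with $T_1 := \AvgS \lin{\nabla F_i(\xx_i),\, \vv - \xx_i}$ and $T_2 := \AvgS \lin{\nabla h_i^S(\Bar{\xx}_S) - \nabla h_i^S(\vv),\, \vv - \xx_i}$. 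In parallel, since $\AvgS \nabla h_i^S(\vv) = \0$, I have $\AvgS \nabla f_i(\xx_i) = \Bar{\mF} + \lambda(\vv - \Bar{\xx}_S)$ where $\Bar{\mF} := \AvgS \nabla F_i(\xx_i)$, so expanding the square gives $-\tfrac{1}{2\lambda}\norm{\AvgS \nabla f_i(\xx_i)}^2 = -\tfrac{1}{2\lambda}\norm{\Bar{\mF}}^2 - \lin{\Bar{\mF},\, \vv - \Bar{\xx}_S} - \tfrac{\lambda}{2}\norm{\vv - \Bar{\xx}_S}^2$.

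Next I would exploit a cancellation. Splitting $\vv - \xx_i = (\vv - \Bar{\xx}_S) - (\xx_i - \Bar{\xx}_S)$ and using $\AvgS \lin{\nabla F_i(\xx_i),\, \Bar{\xx}_S} = \lin{\Bar{\mF}, \Bar{\xx}_S}$ rewrites $T_1 = \lin{\Bar{\mF},\, \vv - \Bar{\xx}_S} - \AvgS \lin{\nabla F_i(\xx_i),\, \xx_i - \Bar{\xx}_S}$, whose leading term kills the $\lin{\Bar{\mF}, \vv - \Bar{\xx}_S}$ appearing from the square; moreover $\lambda \AvgS \norm{\vv - \xx_i}^2 - \tfrac{\lambda}{2}\norm{\vv - \Bar{\xx}_S}^2 = \tfrac{\lambda}{2}\norm{\vv - \Bar{\xx}_S}^2 + \lambda \AvgS \norm{\xx_i - \Bar{\xx}_S}^2$ by~\eqref{eq:AverageOfSquaredDifference}. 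Hence the whole left-hand side equals
\begin{equation*}
- \AvgS \lin{\nabla F_i(\xx_i),\, \xx_i - \Bar{\xx}_S} + T_2 - \tfrac{1}{2\lambda}\norm{\Bar{\mF}}^2 + \tfrac{\lambda}{2}\norm{\vv - \Bar{\xx}_S}^2 + \lambda \AvgS \norm{\xx_i - \Bar{\xx}_S}^2 .
\end{equation*}

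Then I would estimate the three non-quadratic pieces. For the first term, apply~\eqref{eq:BasicInequality1} with $\gamma = 1/\lambda$; for $\norm{\Bar{\mF}}^2$, use Jensen, $\norm{\Bar{\mF}}^2 \le \AvgS \norm{\nabla F_i(\xx_i)}^2$; and for $T_2$, rewrite it once more (again via $\AvgS(\nabla h_i^S(\Bar{\xx}_S) - \nabla h_i^S(\vv)) = \0$) as $-\AvgS \lin{\nabla h_i^S(\Bar{\xx}_S) - \nabla h_i^S(\vv),\, \xx_i - \Bar{\xx}_S}$, then bound it by Cauchy--Schwarz together with $\delta_s$-SOD, $\AvgS \norm{\nabla h_i^S(\Bar{\xx}_S) - \nabla h_i^S(\vv)}^2 \le \delta_s^2 \norm{\Bar{\xx}_S - \vv}^2$, and Young, obtaining $T_2 \ge -\tfrac{\delta_s}{2}\norm{\Bar{\xx}_S - \vv}^2 - \tfrac{\delta_s}{2}\AvgS \norm{\xx_i - \Bar{\xx}_S}^2$. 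Adding everything, the $\norm{\nabla F_i}^2$ contributions sum to exactly $-\tfrac{1}{\lambda}\AvgS \norm{\nabla F_i(\xx_i)}^2$ and the remaining quadratics to $\tfrac{\lambda - \delta_s}{2}\bigl(\norm{\vv - \Bar{\xx}_S}^2 + \AvgS \norm{\xx_i - \Bar{\xx}_S}^2\bigr) = \tfrac{\lambda - \delta_s}{2}\AvgS \norm{\vv - \xx_i}^2$ by~\eqref{eq:AverageOfSquaredDifference}, which is the claim.

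The hard part is not conceptual but getting the constants to come out on the nose. The choice $\gamma = 1/\lambda$ in the Young step and the decision to split $T_2$ along the $\xx_i - \Bar{\xx}_S$ directions (rather than the $\vv - \xx_i$ directions, which would cost a spurious factor $2$ on $\delta_s$) are both forced if the final coefficients are to be precisely $-\tfrac{1}{\lambda}$ and $\tfrac{\lambda - \delta_s}{2}$; likewise, the cross term $\lin{\Bar{\mF},\, \vv - \Bar{\xx}_S}$ only cancels because one first reduces $\AvgS \nabla f_i(\xx_i)$ using $\AvgS \nabla h_i^S \equiv \0$ and then re-expands $T_1$ around $\Bar{\xx}_S$ rather than around $\vv$.
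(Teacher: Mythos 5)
Your proposal is correct and follows essentially the same route as the paper's own proof: unfold $\nabla F_i$, use $\AvgS \nabla h_i^S \equiv \0$ to get $\AvgS \nabla f_i(\xx_i) = \AvgS \nabla F_i(\xx_i) + \lambda(\vv - \Bar{\xx}_S)$, cancel the cross term, shift the $h$- and $F$-cross terms onto the $\xx_i - \Bar{\xx}_S$ directions, and close with Young ($\gamma = 1/\lambda$ and $\gamma$ matched to $\delta_s$), the $\delta_s$-SOD bound, Jensen, and the decomposition~\eqref{eq:AverageOfSquaredDifference} — exactly the steps and constants of the paper's argument, differing only in the order of the algebraic bookkeeping.
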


\begin{proof}
    Using the definition of $F_{i}$, we get
    \begin{equation*}
        \nabla F_{i}(\xx_{i})
        =
        \nabla f_i(\xx_{i}) + \nabla h_i^S (\vv) 
        + \lambda (\xx_{i} - \vv)  .
    \end{equation*}
    Hence,
    \[
         \lin{\nabla f_i (\xx_{i}) + \nabla h_i^S (\Bar{\xx}_S), \vv - \xx_i}
         = \lambda  \norm{ \vv - \xx_i }^2
         + \lin{\nabla h_i^S (\Bar{\xx}_S) - \nabla h_i^S (\vv), \vv - \xx_i}
         +  \lin{\nabla F_i (\xx_i), \vv - \xx_i} .
    \]
    Taking the average over $i$ on both sides of the first display, we have
    \begin{equation}
        \AvgS \nabla f_i(\xx_{i}) 
         =  \lambda (\vv - \Bar{\xx}_S) + \AvgS \nabla F_{i}(\xx_{i}) .
        \label{eq:identity-proof-2}
    \end{equation}
    Therefore,
    \begin{align*}
        \frac{1}{2\lambda}
        \Bigl\lVert \AvgS \nabla f_i(\xx_{i}) \Bigl\rVert^2
        &=
        \frac{1}{2\lambda} 
        \Bigl\lVert \lambda (\vv - \Bar{\xx}_S) + \AvgS \nabla F_{i}(\xx_{i}) \Bigl\rVert^2
        \\
        &= \frac{\lambda}{2} \norm{ \vv - \Bar{\xx}_S}^2
        + \AvgS \lin{\nabla F_i(\xx_i), \vv - \Bar{\xx}_S}
        + \frac{1}{2\lambda} \Bigl\lVert \AvgS \nabla F_{i}(\xx_{i}) \Bigl\rVert^2 .
    \end{align*}
    It follows that
    \allowdisplaybreaks{
    \begin{align*}
        \hspace{2em}&\hspace{-2em}
        \AvgS 
        \lin{\nabla f_i (\xx_{i}) + \nabla h_i^S (\Bar{\xx}_S), \vv - \xx_i}
        - 
        \frac{1}{2\lambda}
        \Bigl\lVert \AvgS \nabla f_i(\xx_{i}) \Bigl\rVert^2
        \\
        &= \lambda \AvgS \norm{ \vv - \xx_i }^2
        - \frac{\lambda}{2} \norm{ \vv - \Bar{\xx}_S }^2
        +\AvgS \lin{\nabla h_i^S (\Bar{\xx}_S) - \nabla h_i^S (\vv), \vv - \xx_i}
         \\
         &\qquad 
         + \AvgS \lin{\nabla F_i (\xx_i), \Bar{\xx}_S - \xx_i}
         - \frac{1}{2\lambda} \Bigl\lVert \AvgS \nabla F_{i}(\xx_{i}) \Bigl\rVert^2
         \\
         &\stackrel{\eqref{eq:AverageOfSquaredDifference}}{=}
         \frac{\lambda}{2} \norm{ \vv - \Bar{\xx}_S }^2
         +\lambda \AvgS \norm{ \xx_i - \Bar{\xx}_S }^2 
         +\AvgS \lin{\nabla h_i^S (\Bar{\xx}_S) - \nabla h_i^S (\vv), \Bar{\xx}_S - \xx_i}
         \\
         &\qquad 
         + \AvgS \lin{\nabla F_i (\xx_i), \Bar{\xx}_S - \xx_i}
         - \frac{1}{2\lambda} \Bigl\lVert \AvgS \nabla F_{i}(\xx_{i}) \Bigl\rVert^2
         \\
         &\stackrel{\eqref{eq:BasicInequality1}}{\ge}
         \frac{\lambda}{2} \norm{ \vv - \Bar{\xx}_S }^2
         + \lambda \AvgS \norm{ \xx_i - \Bar{\xx}_S }^2 
         - \frac{1}{2 \delta_s} 
         \AvgS \norm{\nabla h_i^S(\Bar{\xx}_S) - \nabla h_i^S (\vv)}^2
         - \frac{\delta_s}{2} \AvgS \norm{ \xx_i - \Bar{\xx}_S }^2
         \\
         &\qquad
         -\frac{\lambda}{2} \AvgS \norm{ \xx_i - \Bar{\xx}_S }^2
         - \frac{1}{2 \lambda} \AvgS \norm{ \nabla F_i (\xx_i)}^2
         - \frac{1}{2\lambda} \Bigl\lVert \AvgS \nabla F_{i}(\xx_{i}) \Bigl\rVert^2
         \\
         &\stackrel{\eqref{eq:HessianSimilarity}, \eqref{eq:AverageOfSquaredDifference}}{\ge}
         \frac{\lambda - \delta_s}{2} \norm{ \vv - \Bar{\xx}_S }^2
         + \frac{\lambda - \delta_s}{2} \AvgS 
         \norm{ \xx_i - \Bar{\xx}_S }^2 - \frac{1}{\lambda} 
         \AvgS \norm{ \nabla F_i (\xx_i) }^2 
         \\
         &\stackrel{\eqref{eq:AverageOfSquaredDifference}}{=}
         \frac{\lambda - \delta_s}{2} \AvgS 
         \norm{ \vv - \xx_i }^2
         - \frac{1}{\lambda} 
         \AvgS \norm{ \nabla F_i (\xx_i) }^2 ,
    \end{align*}}
    where in the second equality, we use the fact that 
    $ \AvgS [\nabla h_i^S(\Bar{\xx}_S) - \nabla h_i^S (\vv)] = 0$.
\end{proof}

\section{Proofs for S-DANE (\cref{Alg:S-DANE})}

\subsection{One-Step Recurrence}
\begin{lemma}
    \label{thm:S-DANE-OneStepRecurrence2}
    Consider \cref{Alg:S-DANE}. Let $f_i : \R^d \to \R$ be $\mu$-convex with $\mu \ge 0$ for any $i \in [n]$. 
    Assume that $\{f_i\}_{i=1}^n$ have $\delta_s$-SOD.
    Then, for any $r \ge 0$, we have
    \begin{multline*}
        \frac{1}{\lambda} 
        [f_{S_r} (\xx^{r+1}) - f_{S_r} (\xx^\star)]
        + \frac{1 + \mu / \lambda}{2}  \norm{ \vv^{r+1} - \xx^\star }^2
        \\
        \le 
        \frac{1}{2} \norm{ \vv^r - \xx^\star }^2
        - \frac{1 - \delta_s / \lambda}{2}
         \AvgSr \norm{ \vv^r - \xx_{i,r+1} }^2  
        +
        \frac{1}{\lambda^2} \AvgSr \norm{ \nabla F_{i,r} (\xx_{i,r+1}) }^2.
    \end{multline*}
\end{lemma}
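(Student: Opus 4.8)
The plan is to first trade the distance term $\norm{\vv^{r+1}-\xx^\star}^2$ for a statement about the auxiliary function $G_r$ of Algorithm~\ref{Alg:S-DANE}. Since $G_r$ is a quadratic with Hessian $(\lambda+\mu)\mI$ whose exact minimizer is $\vv^{r+1}$, we have the identity $\frac{\lambda+\mu}{2}\norm{\vv^{r+1}-\xx^\star}^2 = G_r(\xx^\star) - G_r(\vv^{r+1})$. Hence, after multiplying the claimed inequality through by $\lambda$, it suffices to show $f_{S_r}(\xx^{r+1}) - f_{S_r}(\xx^\star) + G_r(\xx^\star) - G_r(\vv^{r+1}) \le \frac{\lambda}{2}\norm{\vv^r-\xx^\star}^2 - \frac{\lambda-\delta_s}{2}\AvgSr \norm{\vv^r-\xx_{i,r+1}}^2 + \frac{1}{\lambda}\AvgSr \norm{\nabla F_{i,r}(\xx_{i,r+1})}^2$.

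First I would expand $G_r(\xx^\star) - G_r(\vv^{r+1})$ directly from the definition of $G_r$. In each linear term I split $\xx^\star - \vv^{r+1} = (\xx^\star - \xx_{i,r+1}) + (\xx_{i,r+1}-\vv^r) + (\vv^r-\vv^{r+1})$ and apply $\mu$-convexity of $f_i$ to the first chunk, $\lin{\nabla f_i(\xx_{i,r+1}),\xx^\star-\xx_{i,r+1}} + \frac{\mu}{2}\norm{\xx^\star-\xx_{i,r+1}}^2 \le f_i(\xx^\star) - f_i(\xx_{i,r+1})$, which also absorbs the $\frac{\mu}{2}\norm{\xx^\star-\xx_{i,r+1}}^2$ piece coming from $G_r$. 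Averaging over $i\in S_r$, the $f_i(\xx^\star)$ contributions assemble into $f_{S_r}(\xx^\star)$ and cancel the corresponding term on the left, while the $-f_i(\xx_{i,r+1})$ contributions pair with $f_{S_r}(\xx^{r+1}) = \AvgSr f_i(\xx^{r+1})$ to form the Jensen-type gap $\AvgSr[f_i(\xx^{r+1}) - f_i(\xx_{i,r+1})]$. I bound this gap by convexity, $f_i(\xx^{r+1}) - f_i(\xx_{i,r+1}) \le \lin{\nabla f_i(\xx^{r+1}),\xx^{r+1}-\xx_{i,r+1}}$, then substitute $\nabla f_i(\xx^{r+1}) = \nabla f_{S_r}(\xx^{r+1}) - \nabla h_i^{S_r}(\xx^{r+1})$ and use $\AvgSr(\xx_{i,r+1}-\xx^{r+1}) = \0$ to kill the $\nabla f_{S_r}(\xx^{r+1})$ part.

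The crucial bookkeeping step is to observe that the $\nabla h_i^{S_r}(\xx^{r+1})$ terms left over from the Jensen gap combine with the $\lin{\nabla f_i(\xx_{i,r+1}),\xx_{i,r+1}-\vv^r}$ terms (using $\AvgSr \nabla h_i^{S_r}(\xx^{r+1}) = \0$ to reintroduce $\vv^r$) to reconstruct exactly $-\AvgSr\lin{\nabla f_i(\xx_{i,r+1}) + \nabla h_i^{S_r}(\xx^{r+1}),\vv^r-\xx_{i,r+1}}$, which is the quantity bounded in Lemma~\ref{thm:ErrorLowerBound}. Applying that lemma with $\vv=\vv^r$, $\xx_i=\xx_{i,r+1}$, $\Bar{\xx}_S=\xx^{r+1}$, $F_i=F_{i,r}$ yields the desired $-\frac{\lambda-\delta_s}{2}\AvgSr\norm{\vv^r-\xx_{i,r+1}}^2$ and $+\frac{1}{\lambda}\AvgSr\norm{\nabla F_{i,r}(\xx_{i,r+1})}^2$ terms, leaving behind a single $-\frac{1}{2\lambda}\norm{\gg}^2$ with $\gg := \AvgSr\nabla f_i(\xx_{i,r+1})$. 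What remains are then the purely quadratic leftovers $-\frac{1}{2\lambda}\norm{\gg}^2 + \lin{\gg,\vv^r-\vv^{r+1}} - \frac{\lambda}{2}\norm{\vv^{r+1}-\vv^r}^2$ (the middle term coming from the split above), together with the nonpositive $-\frac{\mu}{2}\AvgSr\norm{\vv^{r+1}-\xx_{i,r+1}}^2$ from $G_r$, plus the surviving $\frac{\lambda}{2}\norm{\vv^r-\xx^\star}^2$. Using the first-order optimality condition for $\vv^{r+1}$, namely $\gg = \mu(\xx^{r+1}-\vv^{r+1}) + \lambda(\vv^r-\vv^{r+1})$, a short calculation shows the first three leftovers collapse to $-\frac{\mu^2}{2\lambda}\norm{\xx^{r+1}-\vv^{r+1}}^2 \le 0$, so they and the $-\frac{\mu}{2}\AvgSr\norm{\vv^{r+1}-\xx_{i,r+1}}^2$ term may simply be discarded; dividing by $\lambda$ and rewriting $\frac{\lambda+\mu}{2\lambda} = \frac{1+\mu/\lambda}{2}$ and $\frac{\lambda-\delta_s}{2\lambda} = \frac{1-\delta_s/\lambda}{2}$ then gives the claim. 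I expect the reorganization to be the main obstacle: the argument only closes because the $\nabla h_i^{S_r}(\xx^{r+1})$ terms produced by the Jensen gap cancel precisely against those needed to form the inner product that Lemma~\ref{thm:ErrorLowerBound} is built around, and without this exact cancellation one is left with stray cross-terms that cannot be absorbed into the target bound.
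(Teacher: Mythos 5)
Your proof is correct and follows essentially the same route as the paper's: $\mu$-convexity applied at the points $\xx_{i,r+1}$, the convexity/Jensen step combined with $\AvgSr \nabla h_i^{S_r}(\xx^{r+1})=\0$ to rewrite the gap as exactly the inner product treated in Lemma~\ref{thm:ErrorLowerBound}, and that lemma supplying the $-\frac{\lambda-\delta_s}{2}$ and $+\frac{1}{\lambda}$ terms. The only (immaterial) difference is bookkeeping around the $\vv^{r+1}$-update: you use the exact quadratic identity for $G_r$ together with its first-order optimality condition to collapse the leftover terms to $-\frac{\mu^2}{2\lambda}\norm{\xx^{r+1}-\vv^{r+1}}^2\le 0$, whereas the paper invokes $(1+\mu/\lambda)$-strong convexity of the same quadratic in $\xx^\star$ and a Young-inequality step—both lead to the identical final bound.
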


\begin{proof}
    By $\mu$-convexity of $f_i$, for any $r \ge 0$, it holds that
    \begin{multline*}
        \frac{1}{\lambda} f_{S_r}(\xx^\star) + \frac{1}{2} \norm{ \vv^r - \xx^\star }^2
        =
        \frac{1}{\lambda} \AvgSr f_i (\xx^\star)
        + \frac{1}{2} \norm{ \vv^r - \xx^\star }^2
        \\
        \stackrel{\eqref{df:stconvex}}{\ge} 
        \frac{1}{\lambda} \AvgSr \Bigl[
        f_i(\xx_{i,r+1}) + \lin{\nabla f_i(\xx_{i,r+1}),
        \xx^\star - \xx_{i,r+1}} + \frac{\mu}{2} \norm{ \xx_{i,r+1} - \xx^\star }^2 
        \Bigr] + \frac{1}{2} \norm{ \vv^r - \xx^\star }^2 .
    \end{multline*}
    Recall that $\vv^{r+1}$ is the minimizer of the
    final expression in $\xx^\star$.
    This expression is a $(1 + \mu / \lambda)$-convex function in
    $\xx^\star$. We can then estimate it by:
    \begin{align*}
        \hspace{2em}&\hspace{-2em}
        \frac{1}{\lambda} f_{S_r}(\xx^\star) + \frac{1}{2} \norm{ \vv^r - \xx^\star }^2
        \\
        &\stackrel{\eqref{df:stconvex}}{\ge}
        \frac{1}{\lambda} \AvgSr \Bigl[
        f_i(\xx_{i,r+1}) + \lin{\nabla f_i(\xx_{i,r+1}),
        \vv^{r+1} - \xx_{i,r+1}} + \frac{\mu}{2} \norm{ \xx_{i,r+1} - \vv^{r+1} }^2 
        \Bigr]
        \\
        &\qquad
        +
        \frac{1}{2} \norm{ \vv^r - \vv^{r+1} }^2
        +
        \frac{1 + \mu / \lambda}{2} \norm{\vv^{r+1} - \xx^\star}^2.
    \end{align*}
    Using the convexity of $f_i$ and dropping the non-negative 
    $\frac{\mu}{2 \lambda} \AvgSr \norm{\xx_{i,r+1} - \vv^{r+1} }^2$
    , we further get
    \allowdisplaybreaks{
    \begin{align*}
        \hspace{2em}&\hspace{-2em}
        \frac{1}{\lambda} f_{S_r}(\xx^\star) + \frac{1}{2} \norm{ \vv^r - \xx^\star }^2
        \nonumber
        \\
        &\stackrel{\eqref{df:stconvex}}{\ge} 
        \frac{1}{\lambda} \AvgSr [f_i (\xx^{r+1}) + \lin{\nabla f_i(\xx^{r+1}), \xx_{i,r+1} - \xx^{r+1}}] + \frac{1 + \mu / \lambda}{2} \norm{ \vv^{r+1} - \xx^\star }^2 
        \nonumber
        \\
        &\qquad 
        +
        \frac{1}{\lambda} \AvgSr \lin{\nabla f_i(\xx_{i,r+1}), \vv^r - \xx_{i,r+1}} 
        +
        \frac{1}{\lambda} \Bigl\langle \AvgSr \nabla f_i(\xx_{i,r+1}), \vv^{r+1} - \vv^r \Bigr\rangle 
        \\
        &\qquad
        + \frac{1}{2} \norm{ \vv^{r+1} - \vv^r }^2 
        \\
        &\stackrel{\eqref{eq:BasicInequality1}}{\ge}
        \frac{1}{\lambda} f_{S_r}(\xx^{r+1})
        + \frac{1 + \mu / \lambda}{2} \norm{ \vv^{r+1} - \xx^\star }^2
        + \frac{1}{\lambda} \AvgSr \lin{\nabla f_i(\xx^{r+1}), \xx_{i,r+1} - \xx^{r+1}}
        \nonumber
        \\
        &\qquad 
        + 
        \frac{1}{\lambda} \AvgSr \lin{\nabla f_i(\xx_{i,r+1}), \vv^r - \xx_{i,r+1}} 
        - 
        \frac{1}{2\lambda^2} \Bigl\lVert \AvgSr \nabla f_i(\xx_{i,r+1}) \Bigr\rVert^2 .
    \end{align*}}
    Denote $h_{i}^r \defeq f_{S_r} - f_i$.
    Note that
    \allowdisplaybreaks{
    \begin{align*}
        &\qquad 
        \sum_{i \in S_r} \lin{\nabla f_i(\xx^{r+1}), \xx_{i,r+1} - \xx^{r+1}}
        =
        \sum_{i \in S_r} \lin{-\nabla h_i^r (\xx^{r+1}), \xx_{i,r+1} - \vv^r} ,
    \end{align*}}
    where we have used:
    \begin{equation*}
        \xx^{r+1} = \AvgSr \xx_{i,r+1}
        \qquad
        \text{and}
        \qquad
        \sum_{i \in S^r}
        \nabla h_i^r (\xx^{r+1}) = 0 .
    \end{equation*}
    It follows that
    \begin{multline*}
        \frac{1}{\lambda} f_{S_r}(\xx^\star) + \frac{1}{2} \norm{ \vv^r - \xx^\star }^2
        \ge
        \frac{1}{\lambda} f_{S_r} (\xx^{r+1}) 
        + \frac{1 + \mu / \lambda}{2} \norm{ \vv^{r+1} - \xx^\star }^2
        \\
        +
        \frac{1}{\lambda} \AvgSr \lin{\nabla f_i (\xx_{i,r+1}) + \nabla h_i^r (\xx^{r+1}), \vv^r - \xx_{i,r+1}}
        - 
        \frac{1}{2\lambda^2} \Bigl\lVert \AvgSr \nabla f_i(\xx_{i,r+1}) \Bigr\rVert^2.
    \end{multline*}
    We now apply \cref{thm:ErrorLowerBound} (with 
    $\xx_i = \xx_{i,r+1}$, $\vv = \vv^r$, $S=S_r$ and 
    $\xx = \xx^{r+1}$) to get
    \begin{multline*}
        \AvgSr \lin{\nabla f_i (\xx_{i,r+1}) + \nabla h_i^r (\xx^{r+1}), \vv^r - \xx_{i,r+1}}
        - 
        \frac{1}{2\lambda} \Bigl\lVert \AvgSr \nabla f_i(\xx_{i,r+1}) \Bigr\rVert^2
        \\
        \ge 
        \frac{\lambda - \delta_s}{2} \AvgSr 
         \norm{ \vv^r - \xx_{i,r+1} }^2
         - \frac{1}{\lambda} 
         \AvgSr\norm{\nabla F_{i,r} (\xx_{i,r+1})}^2.
    \end{multline*}
    Substituting this lower bound into the previous display, we get the claim.
\end{proof}

\subsection{Full Client Participation (Proof of \cref{thm:S-DANE-Main})}
\label{sec:Proof-S-DANE-Full-Client}

\begin{proof}
    Applying \cref{thm:S-DANE-OneStepRecurrence2} 
    and using $\sum_{i=1}^n \norm{ \nabla F_{i,r}(\xx_{i,r+1}) }^2
    \le \delta^2
    \sum_{i=1}^n \norm{ \xx_{i,r+1} - \vv^r }^2
    $ and $\lambda = 2\delta$, for any $r \ge 0$,
    we have
    \begin{align*}
        \hspace{2em}&\hspace{-2em}
        \frac{1}{\lambda} [f (\xx^{r+1}) - f^{\star}]
        + \frac{1 + \mu / \lambda}{2}  \norm{ \vv^{r+1} - \xx^\star }^2
        \\
        & \le 
        \frac{1}{2} \norm{ \vv^r - \xx^\star }^2
        - \frac{1 - \delta / \lambda }{2}
         \Avg \norm{ \vv^r - \xx_{i,r+1} }^2  
        +
        \frac{1}{\lambda^2} \Avg \norm{ \nabla F_{i,r} (\xx_{i,r+1}) }^2
        \\
        & \le 
        \frac{1}{2} \norm{ \vv^r - \xx^\star }^2
        -
        \Bigl( \frac{1 - 1/2}{2} - \frac{1}{4} \Bigr)
        \norm{ \vv^r - \xx_{i,r+1} }^2  
        =
        \frac{1}{2} \norm{ \vv^r - \xx^\star }^2.
    \end{align*}

    Rearranging, we get
    \begin{equation*}
        \frac{2 q}{\lambda}  [f(\xx^{r+1}) - f^{\star}]
        \le q \norm{ \vv^r - \xx^\star }^2
        - \norm{ \vv^{r+1} - \xx^\star }^2 .
    \end{equation*}
    where $q \defeq \frac{1}{1 + \mu / \lambda}$.
    Applying \cref{thm:SimpleSequence} with $\epsilon = 0$ and using convexity of $f$,  we obtain
    \begin{equation*}
        \frac{2q}{\lambda} [f(\Bar{\xx}^R) - f^{\star}]
        + \frac{1 -q}{1 - q^R} \norm{\vv^R - \xx^\star}^2
        \le \frac{1 - q}{(1/q)^R -1} \norm{ \vv^0 - \xx^\star}^2
        =
        \frac{1 - q}{(1/q)^R -1} D^2.
        \label{eq:basic_nonaccelerated_recurrence_end}
    \end{equation*}
    Dropping the non-negative term $\frac{1 -q}{1 - q^R} \norm{\vv^R - \xx^\star}^2$ and
    rearranging, we get
    \begin{equation*}
        f(\Bar{\xx}^R) - f^{\star} 
        \le \frac{(1 - q)\lambda}{2q\bigl[(1/q)^R -1\bigr]} D^2.
    \end{equation*}
    Plugging in the choice of $\lambda$ and the definition of $q$, we get the claim.
\end{proof}

\begin{corollary}
    Under the same setting as in \cref{thm:S-DANE-Main}, 
    to achieve $f(\Bar{\xx}^R) - f^{\star} \le \epsilon$
    , we need at most the following number of communication rounds:
    \begin{equation*}
        R = \cO \Bigl(
        \frac{\mu + \delta}{\mu} \log\Bigl(1 + \frac{\mu D^2}{\epsilon} \Bigr) \Bigr)  .
    \end{equation*}
\end{corollary}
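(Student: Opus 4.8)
The plan is to obtain the claimed bound on $R$ by simply "inverting" the convergence rate already proved in Theorem~\ref{thm:S-DANE-Main}. That theorem supplies, under the stated subproblem-accuracy condition and with $\lambda = 2\delta$, the linear-type estimate
\[
f(\Bar{\xx}^R) - f^\star \le \frac{\mu D^2}{2\bigl[(1+\tfrac{\mu}{2\delta})^R - 1\bigr]}
\]
for $\mu>0$, together with its sublinear counterpart $f(\Bar{\xx}^R)-f^\star\le \tfrac{\delta D^2}{R}$ in the limiting case $\mu = 0$. So the corollary is purely a matter of solving $(\text{rate})\le\epsilon$ for $R$.

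First, for $\mu > 0$, I would require the right-hand side above to be at most $\epsilon$; this rearranges to $(1+\tfrac{\mu}{2\delta})^R \ge 1 + \tfrac{\mu D^2}{2\epsilon}$, i.e.\ $R \ge \ln\!\bigl(1+\tfrac{\mu D^2}{2\epsilon}\bigr)\big/\ln\!\bigl(1+\tfrac{\mu}{2\delta}\bigr)$. I would then lower-bound the denominator by the elementary inequality $\ln(1+x)\ge \tfrac{x}{1+x}$ (valid for $x>-1$) applied at $x=\tfrac{\mu}{2\delta}$, which gives $\ln(1+\tfrac{\mu}{2\delta})\ge \tfrac{\mu}{2\delta+\mu}$, hence $1/\ln(1+\tfrac{\mu}{2\delta})\le \tfrac{2\delta+\mu}{\mu}$. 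Combining this with $\ln(1+\tfrac{\mu D^2}{2\epsilon})\le \ln(1+\tfrac{\mu D^2}{\epsilon})$ shows that any integer $R \ge \tfrac{2\delta+\mu}{\mu}\ln\!\bigl(1+\tfrac{\mu D^2}{\epsilon}\bigr)$ suffices, i.e.\ $R = \cO\bigl(\tfrac{\delta+\mu}{\mu}\ln(1+\tfrac{\mu D^2}{\epsilon})\bigr)$ after rounding up.

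For the edge case $\mu = 0$ I would instead use the sublinear bound, so that $R=\lceil \delta D^2/\epsilon\rceil$ rounds are enough; this matches the $\mu\downarrow 0$ limit of the previous expression, since $\tfrac{\delta+\mu}{\mu}\ln(1+\tfrac{\mu D^2}{\epsilon})\to \tfrac{\delta D^2}{\epsilon}$, so a single $\cO$-formula covers both regimes. I do not anticipate a genuine obstacle here — the statement is a routine consequence of Theorem~\ref{thm:S-DANE-Main}; the only point needing mild care is keeping the constant in front of the logarithm uniformly bounded in $\mu$ (so the estimate remains continuous as $\mu\downarrow 0$), which is precisely what the $\ln(1+x)\ge x/(1+x)$ step achieves.
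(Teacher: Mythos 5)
Your proposal is correct and follows essentially the same route as the paper: the paper's proof uses $(1+q)^k \ge \exp\bigl(\tfrac{q}{1+q}k\bigr)$ with $q=\tfrac{\mu}{2\delta}$, which is exactly your inequality $\ln(1+x)\ge \tfrac{x}{1+x}$ applied to lower-bound the geometric growth before solving for $R$. Your explicit treatment of the $\mu=0$ limit via the sublinear bound is a harmless addition that the paper leaves implicit.
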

\begin{proof}
Using the fact that $(1 + q)^k \ge \exp(\frac{q}{1+q}k)$
for any $q \ge 0$ and $k > 0$, we get
\begin{equation}
    \begin{split}
        f(\Bar{\xx}^R) - f^\star
        \le 
        \frac{\mu D^2}{2[(1+\frac{\mu}{2\delta})^R -1]} 
        \le\frac{\mu D^2}{2[\exp(\frac{\mu}{\mu + 2\delta}R)-1]} \le \epsilon .
        \nonumber
    \end{split}
    \end{equation}
    Rearranging, we get the claim.
\end{proof}

\paragraph{Proof of \cref{thm:S-DANE-local-efficiency-exact}.}

\begin{proof}
    To achieve $\sum_{i=1}^n \norm{ \nabla F_{i,r}(\xx_{i,r+1}) }^2
    \le \delta^2
    \sum_{i=1}^n \norm{ \xx_{i,r+1} - \yy^r }^2
    $, for each $i \in [n]$, it is sufficient to ensure that
    $\norm{ \nabla F_{i,r}(\xx_{i,r+1})}
    \le \delta \norm{ \xx_{i,r+1} - \vv^r }$.
    Let $\xx_{i,r}^\star \defeq \argmin_{\xx} F_{i,r}(\xx)$.
    Since
    \[
        \norm{ \xx_{i,r+1} - \vv^r} \ge 
        \norm{ \vv^r - \xx_{i,r}^\star} -
        \norm{ \xx_{i,r+1} - \xx_{i,r}^\star}
        \stackrel{\eqref{eq:NormGradConvex}}{\ge} 
        \norm{ \vv^r - \xx_{i,r}^\star}
        - \frac{1}{\lambda}\norm{ \nabla F_{i,r}(\xx_{i,r+1})}
    \]
    and $\lambda = 2 \delta$, it suffices to ensure that
    \begin{equation}
        \norm{ \nabla F_{i,r} (\xx_{i,r+1})} \le \frac{2 \delta}{3} \norm{ \vv^r - \xx_{i,r}^\star}
        \label{eq:S-DANE-AccuracyConditionDeterministic}
    \end{equation}
    for any $i \in [n]$. According to Theorem 2 from~\cite{grad-sliding} (or Theorem 3.2 from~\cite{lan2023optimal}), there exists a certain algorithm such that when started from the point $\vv^r$, after $K$ queries to $\nabla F_{i,r}$, 
    it generates the point $\vv_{i,r+1}$ such that
    \[
        \norm{ \nabla F_{i,r}(\xx_{i,r+1}) }
        \le
        \cO\biggl( \frac{(L+\lambda)\norm{ \vv^r - \xx_{i,r}^\star}}{K^2}\biggr)
        =
        \cO\biggl( \frac{L\norm{ \vv^r - \xx_{i,r}^\star}}{K^2}\biggr)
    \]
    (recall that $\delta \le L$).
    Setting $K = \Theta(\sqrt{\frac{L}{\delta}})$ concludes the proof.
\end{proof}

\begin{remark}
    Recall that $F_{i,r}$ is $(L + \lambda)$-smooth and $(\mu + \lambda)$-convex, $\lambda = \Theta(\delta)$
    and $\delta \le L$.
    Suppose worker $i$ uses the standard \algname{GD} to approximately solve the local subproblem at round $r$ starting at $\vv^r$ for $K$ steps and return the last point, then by \cref{thm:GD-GradNorm},
    we have that 
    $\norm{ \nabla F_{i,r} (\xx_{i,r+1}) }^2 
    \le 
    \cO\bigl( \frac{(L + \lambda)^2 \norm{ \vv^r - \xx_{i,r}^\star}^2}{K^2} \bigr)
    $.
    To satisfy the accuracy condition~\eqref{eq:S-DANE-AccuracyConditionDeterministic}, it is sufficient to make 
    $K = \Theta(\frac{L}{\delta})$ local steps. 
    Suppose worker $i$ uses the fast gradient method, then by Theorem 3.18 from~\cite{bubeck2015convex},
    we have that 
    $\norm{ \nabla F_{i,r} (\xx_{i,r+1}) }^2 
    \le 2(L+\lambda) \bigl( F_{i,r}(\xx_{i,r+1}) - 
    F_{i,r}(\xx_{i,r}^\star) \bigr) 
    \le 
    \cO\bigl( (L + \lambda)^2 \exp(-\sqrt{\frac{\mu + \lambda}{L + \lambda}} K) \norm{ \vv^r - \xx_{i,r}^\star }^2 \bigr)
    $.
    To satisfy the accuracy condition~\eqref{eq:S-DANE-AccuracyConditionDeterministic}, it suffices to make 
    $K = \Theta(\sqrt{\frac{L + \delta}{\mu + \delta}} \log (\frac{L + \delta}{\delta}))
    = \Theta(\sqrt{\frac{L}{\mu + \delta}} \log (\frac{L}{\delta}) )$ gradient oracle calls. 
    \label{rm:LocalComputationGD}
\end{remark}

\begin{lemma}[Theorem 2.2.5 in~\cite{nesterov-book}]
    \label{thm:GD-GradNorm}
    Let $f: \R^d \to \R$ be a convex and $L$-smooth function.
    Consider the gradient method with constant stepsize:
    \[
        \xx_{k+1} = \xx_k - \frac{1}{L} \nabla f(\xx_k),
        \qquad
        k \geq 0,
    \]
    started from some $\xx_0 \in \R^d$.
    Then, for any $K \ge 1$, it holds that
    \begin{equation}
        \norm{\nabla f(\xx_K)} \le \cO\biggl( \frac{L \norm{\xx_0 - \xx^\star} }{K} \biggr) .
        \label{eq:GDGradNorm}
    \end{equation}
\end{lemma}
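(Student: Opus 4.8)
The plan is to combine the elementary properties of gradient descent on a convex $L$-smooth function with a telescoping argument over the second half of the run. The naive bound $\norm{\nabla f(\xx_K)}^2 \le 2L\bigl(f(\xx_K)-f^\star\bigr)$ together with the standard $\cO(1/K)$ rate for the function gap would only give $\norm{\nabla f(\xx_K)} = \cO\bigl(L\norm{\xx_0-\xx^\star}/\sqrt{K}\bigr)$; to recover the missing $1/\sqrt{K}$ I would additionally use that the gradient norms are monotonically non-increasing along the iterates.

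First I would record three short facts. (i) \emph{Descent.} Plugging $\yy=\xx_{k+1}=\xx_k-\tfrac1L\nabla f(\xx_k)$ and $\xx=\xx_k$ into \eqref{eq:SmoothUpperBound} gives $f(\xx_{k+1})\le f(\xx_k)-\tfrac{1}{2L}\norm{\nabla f(\xx_k)}^2$; in particular $(f(\xx_k))_k$ is non-increasing. (ii) \emph{Function-gap rate.} Expanding $\norm{\xx_{k+1}-\xx^\star}^2=\norm{\xx_k-\xx^\star}^2-\tfrac2L\lin{\nabla f(\xx_k),\xx_k-\xx^\star}+\tfrac1{L^2}\norm{\nabla f(\xx_k)}^2$, using $\lin{\nabla f(\xx_k),\xx_k-\xx^\star}\ge f(\xx_k)-f^\star$ from convexity \eqref{df:stconvex} with $\mu=0$, and $\tfrac1{L^2}\norm{\nabla f(\xx_k)}^2\le\tfrac2L\bigl(f(\xx_k)-f(\xx_{k+1})\bigr)$ from (i), one gets $\norm{\xx_{k+1}-\xx^\star}^2\le\norm{\xx_k-\xx^\star}^2-\tfrac2L\bigl(f(\xx_{k+1})-f^\star\bigr)$; telescoping and invoking the monotonicity of $f(\xx_k)$ yields $f(\xx_K)-f^\star\le\tfrac{L\norm{\xx_0-\xx^\star}^2}{2K}$. (iii) \emph{Monotone gradient norms.} Applying \eqref{eq:SmoothConvexGradNorm} with $\xx=\xx_k$, $\yy=\xx_{k+1}$, so that $\xx_k-\xx_{k+1}=\tfrac1L\nabla f(\xx_k)$, gives $\norm{\nabla f(\xx_k)-\nabla f(\xx_{k+1})}^2\le\lin{\nabla f(\xx_k)-\nabla f(\xx_{k+1}),\nabla f(\xx_k)}$; expanding both sides and cancelling reduces this to $\norm{\nabla f(\xx_{k+1})}^2\le\lin{\nabla f(\xx_{k+1}),\nabla f(\xx_k)}\le\norm{\nabla f(\xx_{k+1})}\,\norm{\nabla f(\xx_k)}$ by Cauchy--Schwarz, hence $\norm{\nabla f(\xx_{k+1})}\le\norm{\nabla f(\xx_k)}$.

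Finally I would combine these. Summing the descent inequality (i) from $k=\lfloor K/2\rfloor$ to $K-1$ and applying (ii) at index $\lfloor K/2\rfloor$, I get $\tfrac1{2L}\sum_{k=\lfloor K/2\rfloor}^{K-1}\norm{\nabla f(\xx_k)}^2\le f(\xx_{\lfloor K/2\rfloor})-f(\xx_K)\le f(\xx_{\lfloor K/2\rfloor})-f^\star=\cO\bigl(L\norm{\xx_0-\xx^\star}^2/K\bigr)$. The sum has $\Theta(K)$ terms, each at least $\norm{\nabla f(\xx_K)}^2$ by (iii), so $\norm{\nabla f(\xx_K)}^2=\cO\bigl(L^2\norm{\xx_0-\xx^\star}^2/K^2\bigr)$, and taking square roots gives the claim.

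I do not expect a genuine obstacle; the only slightly non-routine ingredient is the gradient-norm monotonicity in (iii), and the only care needed is the integer bookkeeping with $\lfloor K/2\rfloor$ for small $K$ (for $K=1$ the bound is immediate since $\norm{\nabla f(\xx_1)}\le\norm{\nabla f(\xx_0)}=\norm{\nabla f(\xx_0)-\nabla f(\xx^\star)}\le L\norm{\xx_0-\xx^\star}$ by (iii) and \eqref{df:smooth}).
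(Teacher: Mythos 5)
Your proposal is correct. It shares the paper's key ingredient---the monotonicity $\norm{\nabla f(\xx_{k+1})}\le\norm{\nabla f(\xx_k)}$, which you prove via co-coercivity \eqref{eq:SmoothConvexGradNorm} in essentially the same way the paper does---but differs in how the $\cO(LD/K)$ rate itself is obtained. The paper simply cites Nesterov's Theorem 2.2.5 for the bound $\min_{k\in[K]}\norm{\nabla f(\xx_k)}\le\cO(L\norm{\xx_0-\xx^\star}/K)$ and uses monotonicity only to identify the minimum with the last iterate, whereas you re-derive the rate from scratch: the descent inequality summed over the suffix $k\in\{\lfloor K/2\rfloor,\dots,K-1\}$, the standard $\cO(L\norm{\xx_0-\xx^\star}^2/K)$ function-gap bound applied at $\lfloor K/2\rfloor$, and monotonicity to lower-bound each of the $\Theta(K)$ summands by $\norm{\nabla f(\xx_K)}^2$. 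Your route buys a fully self-contained argument (no external black box) at the cost of a couple of extra standard steps and the small-$K$ bookkeeping, which you handle correctly ($K=1$ follows from monotonicity plus $\norm{\nabla f(\xx_0)}\le L\norm{\xx_0-\xx^\star}$); the paper's route is shorter but leans on the cited theorem. Both yield the same constant-factor-level conclusion \eqref{eq:GDGradNorm}.
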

\begin{proof}
    By Theorem 2.2.5 in~\cite{nesterov-book}, we have that
    \begin{equation*}
        \min_{k\in[K]}
        \norm{\nabla f(\xx_k)} \le \cO\biggl( \frac{L \norm{\xx_0 - \xx^\star}}{K} \biggr) .
    \end{equation*}
    It remains to note that the algorithm generates non-increasing $\norm{\nabla f(\xx_k)}$ since
    \begin{align*}
        \norm{\nabla f(\xx_{k+1})}^2 
        &= \norm{\nabla f(\xx_{k+1}) - \nabla f(\xx_{k})
        + \nabla f(\xx_{k})}^2
        \\
        &= \norm{\nabla f(\xx_{k+1}) - \nabla f(\xx_{k})}^2
        + 2\lin{\nabla f(\xx_{k+1}) - \nabla f(\xx_{k}), \nabla f(\xx_k)} + \norm{\nabla f(\xx_k)}^2
        \\
        &= \norm{\nabla f(\xx_{k+1}) - \nabla f(\xx_{k})}^2
        -2L \lin{\nabla f(\xx_{k}) - \nabla f(\xx_{k+1}),
        \xx_k - \xx_{k+1}}
        + \norm{\nabla f(\xx_k)}^2
        \\
        &\stackrel{\eqref{eq:SmoothConvexGradNorm}}{\le}
        \norm{\nabla f(\xx_k)}^2
        - \norm{\nabla f(\xx_{k+1}) - \nabla f(\xx_{k})}^2 
        \le \norm{\nabla f(\xx_k)}^2 .
        \qedhere
    \end{align*}
\end{proof}

\subsection{Partial Client Participation (Proof of \cref{thm:S-DANE-MainThm-Sampling})}
\label{sec:S-DANE-sampling}

The following theorem is a slight extension of 
\cref{thm:S-DANE-MainThm-Sampling}, which includes 
the use of stochastic local solvers.

\begin{theorem}
    Consider \cref{Alg:S-DANE}. Let $f_i : \R^d \to \R$ be $\mu$-convex with $\mu \ge 0$ for any $i \in [n]$
    and let $n \ge 2$. 
    Assume that $\{f_i\}_{i=1}^n$ have $\delta_s$-SOD,
    $\Delta_s$-ED
    and $\zeta$-BGV.
    Let $\lambda = \frac{4 (n-s)}{s (n-1)} \frac{\zeta^2}{\epsilon} + 2 (\delta_s + \Delta_s)$.
    For any
    $r \ge 0$, suppose we have
    \begin{equation}
    \AvgSr \E_{\xi_{i,r}}[\norm{ \nabla F_{i,r}(\xx_{i,r+1}) }^2]
    \le    
    \frac{\lambda^2}{4} \AvgSr \E_{\xi_{i,r}}[\norm{ \xx_{i,r+1} - \vv^r }^2] + \frac{\lambda \epsilon}{4} ,
    \label{eq:Accuracy-S-DANE-Partial-Participation-Appendix}
    \end{equation}
    for some $\epsilon > 0$,
    where $\xi_{i,r}$ denotes the randomness coming from device $i$ when solving its subproblem at round~$r$.
    We assume that $\{ \xi_{i,r} \}$ are independent random variables.
    To reach $\E[f(\Bar{\xx}^R) - f^{\star}] \le \epsilon$, we need at most the following number of communication rounds:
    \[
        R  
        =
        \Theta\biggl(
        \biggl[\frac{\delta_s + \Delta_s + \mu}{\mu} + \frac{n-s}{n-1} \frac{\zeta^2}{s \epsilon \mu} \biggr]
        \log\Bigl( 1 + \frac{\mu D^2}{\epsilon} \Bigr)
        \bigr)
        \le
        \Theta\biggl(
        \frac{(\delta_s + \Delta_s) D^2}{\epsilon}
        +
        \frac{n-s}{n-1} \frac{\zeta^2 D^2}{s \epsilon^2}
        \biggr),
    \]
    where $\Bar{\xx}^R \defeq \sum_{r=1}^{R} 
    p^r \xx^r / \sum_{r=1}^R p^r$, $p \defeq 1 + \frac{\mu}{\lambda}$, 
    and $D \defeq \norm{\xx^0 - \xx^\star}$.
    \label{thm:S-DANE-AppendixThm-Sampling}
\end{theorem}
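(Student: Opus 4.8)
The plan is to turn the deterministic one-step estimate of Lemma~\ref{thm:S-DANE-OneStepRecurrence2} into a recursion in expectation by averaging first over the local-solver randomness $\{\xi_{i,r}\}$ (with $S_r$ fixed) and then over the sampled set $S_r$, conditioning throughout on the history before round $r$, and then to unroll the resulting recursion with Lemma~\ref{thm:SimpleSequence}. Applying Lemma~\ref{thm:S-DANE-OneStepRecurrence2} and taking the conditional expectation over $\{\xi_{i,r}\}$, the accuracy condition~\eqref{eq:Accuracy-S-DANE-Partial-Participation-Appendix} lets me replace the gradient-error term $\frac{1}{\lambda^2}\AvgSr\E_{\xi}\norm{\nabla F_{i,r}(\xx_{i,r+1})}^2$ by $\frac{1}{4}\AvgSr\E_{\xi}\norm{\xx_{i,r+1}-\vv^r}^2 + \frac{\epsilon}{4\lambda}$. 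Since $\lambda \ge 2(\delta_s+\Delta_s) \ge 2\delta_s$, the coefficient $\frac{1-\delta_s/\lambda}{2}-\frac{1}{4} = \frac{1}{4} - \frac{\delta_s}{2\lambda}$ multiplying the surviving $-\AvgSr\E_{\xi}\norm{\vv^r-\xx_{i,r+1}}^2$ term is nonnegative, and I will keep it: it is precisely what absorbs the external-dissimilarity term produced in the next step.

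Next I would average over $S_r$. On the right $\norm{\vv^r-\xx^\star}^2$ is deterministic given the history; on the left, writing $f_{S_r}(\xx^{r+1}) - f_{S_r}(\xx^\star) = [f(\xx^{r+1})-f^\star] + [f^\star - f_{S_r}(\xx^\star)] - [f(\xx^{r+1}) - f_{S_r}(\xx^{r+1})]$, the middle bracket has zero expectation and the last is controlled by Lemma~\ref{thm:FunctionDiff-Lowerbound} with $\yy = \vv^r$ (applied conditionally on $\{\xi_{i,r}\}$, so that $\xx^{r+1}$ is a deterministic function of $S_r$), which gives, for any $\gamma > 0$, $\E[f(\xx^{r+1}) - f_{S_r}(\xx^{r+1})] \le \frac{n-s}{n-1}\frac{\gamma\zeta^2}{2s} + \bigl(\frac{1}{2\gamma}+\frac{\Delta_s}{2}\bigr)\E\norm{\xx^{r+1}-\vv^r}^2$. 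Using $\norm{\xx^{r+1}-\vv^r}^2 \le \AvgSr\norm{\xx_{i,r+1}-\vv^r}^2$ (Jensen), this quadratic contribution is cancelled by the term kept above as soon as $\lambda \ge 2(\delta_s+\Delta_s) + \frac{2}{\gamma}$; choosing $\gamma := \frac{(n-1)s\epsilon}{2(n-s)\zeta^2}$ makes this hold with equality for the stated $\lambda$ and reduces the residual $\zeta^2$-term to $\frac{\epsilon}{4\lambda}$ (for $s=n$ the $\zeta^2$-terms vanish and this whole step is vacuous, recovering Theorem~\ref{thm:S-DANE-Main}). After taking the remaining expectation this produces the clean one-step recursion
\begin{equation*}
\frac{1}{\lambda}\E[f(\xx^{r+1}) - f^\star] + \frac{1+\mu/\lambda}{2}\E\norm{\vv^{r+1}-\xx^\star}^2 \le \frac{1}{2}\E\norm{\vv^r-\xx^\star}^2 + \frac{\epsilon}{2\lambda}.
\end{equation*}

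To finish, I would rewrite this as $F_{r+1} + D_{r+1} \le q D_r + \epsilon'$ with $q = \frac{\lambda}{\lambda+\mu}$, $F_r = \frac{2}{\lambda+\mu}(f(\xx^r) - f^\star)$, $D_r = \E\norm{\vv^r-\xx^\star}^2$, $\epsilon' = \frac{\epsilon}{\lambda+\mu}$, apply Lemma~\ref{thm:SimpleSequence} (for $\mu = 0$, where $q = 1$, simply sum the recursion over $r$), drop the nonnegative $D_R$-term, and invoke convexity of $f$ with $\Bar{\xx}^R = \sum_{r=1}^R p^r\xx^r / \sum_{r=1}^R p^r$ and $p = 1/q = 1+\mu/\lambda$ to get $\E[f(\Bar{\xx}^R)] - f^\star \le \frac{\mu D^2}{2[(1+\mu/\lambda)^R-1]} + \frac{\epsilon}{2}$ (and $\frac{\lambda D^2}{2R} + \frac{\epsilon}{2}$ when $\mu = 0$). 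Forcing the first summand below $\frac{\epsilon}{2}$, i.e. $(1+\mu/\lambda)^R \ge 1 + \mu D^2/\epsilon$, together with $\ln(1+x) \ge \frac{x}{1+x}$ yields $R = \cO\bigl(\frac{\lambda+\mu}{\mu}\ln(1+\mu D^2/\epsilon)\bigr)$, and substituting $\lambda = \frac{4(n-s)}{s(n-1)}\frac{\zeta^2}{\epsilon} + 2(\delta_s+\Delta_s)$ gives both displayed complexity bounds. I expect the main difficulty to be the bookkeeping in the second step: ordering the conditional expectations so that the assumed independence of $\{\xi_{i,r}\}$ is used legitimately and $\xx^{r+1}$ is only treated as a deterministic function of $S_r$ after conditioning on the local randomness, and tuning $\gamma$ so that the external-dissimilarity quadratic is absorbed exactly while the leftover noise term stays of order $\epsilon/\lambda$.
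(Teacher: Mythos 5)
Your proposal is correct and follows essentially the same route as the paper's proof: the one-step recurrence of Lemma~\ref{thm:S-DANE-OneStepRecurrence2}, the accuracy condition to absorb the gradient-error term, Lemma~\ref{thm:FunctionDiff-Lowerbound} with $\yy=\vv^r$ and the same choice $\gamma = \frac{s(n-1)\epsilon}{2(n-s)\zeta^2}$ so that the stated $\lambda$ exactly absorbs the quadratic dissimilarity terms and leaves a residual $\frac{\epsilon}{2\lambda}$, followed by Lemma~\ref{thm:SimpleSequence} and convexity to conclude. The only (immaterial) difference is the order in which you take the expectations over $\{\xi_{i,r}\}$ and $S_r$, which you justify correctly via independence and conditioning.
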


\begin{proof}
    According to \cref{thm:S-DANE-OneStepRecurrence2}, we have for any $r \ge 0$,
    \begin{multline*}
        \frac{1}{\lambda} 
        [f_{S_r} (\xx^{r+1}) - f_{S_r} (\xx^\star)]
        + \frac{1 + \mu / \lambda}{2}  \norm{ \vv^{r+1} - \xx^\star }^2
        \\
        \le 
        \frac{1}{2} \norm{ \vv^r - \xx^\star }^2
        - \frac{1 - \delta_s / \lambda}{2}
         \AvgSr \norm{ \vv^r - \xx_{i,r+1} }^2  
        +
        \frac{1}{\lambda^2} \AvgSr \norm{ \nabla F_{i,r} (\xx_{i,r+1}) }^2.
    \end{multline*}
    
    According to \cref{thm:FunctionDiff-Lowerbound} (with $S = S_r$, $\xx = \xx^{r+1}$
    and $\yy = \vv^r$),
    for any $\gamma > 0$, we have
    \allowdisplaybreaks{
    \begin{align*}
        \qquad \E_{S_r}[ f(\xx^{r+1})- f_{S_r}(\xx^{r+1}) ]
        &\le
        \frac{n-s}{n-1}\frac{\gamma \zeta^2}{2 s} + \Bigl( \frac{1}{2 \gamma} + \frac{\Delta_s}{2} \Bigr)
        \E_{S_r} [\norm{ \xx^{r+1} - \vv^r }^2] 
        \\
        &\stackrel{\eqref{eq:AverageOfSquaredDifference}}{\le}
        \frac{n-s}{n-1}\frac{\gamma \zeta^2}{2 s} + \Bigl( \frac{1}{2 \gamma} + \frac{\Delta_s}{2} \Bigr)
        \E_{S_r} \Bigl[\AvgSr \norm{ \xx_{i,r+1} - \vv^r }^2 \Bigr] .
    \end{align*}}

    Adding $\frac{1}{\lambda}  f(\xx^{r+1})$ to both sides of the first display, taking the expectation over $S_r$ on both sides, substituting the previous upper bound and 
    setting $\gamma = \frac{s (n-1) \epsilon}{2 \zeta^2 (n-s)}$, we get
    \begin{align*}
        \hspace{2em}&\hspace{-2em}
        \frac{1}{\lambda} \E_{S_r} [f(\xx^{r+1}) - f^{\star}]
        + \frac{1 + \mu / \lambda}{2} \E_{S_r}[ \norm{ \vv^{r+1} - \xx^\star }^2 ]
        \\
        &\le 
        \frac{1}{2} \norm{ \vv^r - \xx^\star }^2 
        - 
        \Bigl(\frac{1}{2}
            -
            \frac{\delta_s + \Delta_s}{2\lambda}
            - \frac{1}{2\gamma \lambda} 
        \Bigr) 
        \E_{S_r} \Bigl[\AvgSr \norm{ \xx_{i,r+1} - \vv^r }^2 \Bigr] 
        \\
        &\qquad
        + \frac{\epsilon}{4 \lambda} 
        +\frac{1}{\lambda^2} \E_{S_r}\Bigl[ 
        \AvgSr \norm{ \nabla F_{i,r}(\xx_{i,r+1}) }^2  \Bigr].
    \end{align*}

    Denote all the randomness $\{\xi_{i,r}\}_{i \in S_r}$ by $\xi_r$. 
    Since $\xi_{i,r}$ is independent of the choice of $S_r$ for any $i \in [n]$, taking the expectation over $\xi_r$ on both sides of the previous display and using our assumption~\eqref{eq:Accuracy-S-DANE-Partial-Participation-Appendix}, we obtain
    \begin{multline*}
        \frac{1}{\lambda}  \E_{S_r, \xi_r} [f(\xx^{r+1}) - f^{\star}]
        + \frac{1 + \mu / \lambda}{2}
        \E_{S_r, \xi_r}[ \norm{ \vv^{r+1} - \xx^\star }^2 ]
        \\
        \le
         \frac{1}{2} \norm{ \vv^r - \xx^\star }^2 
        - \Bigl(
        \frac{1}{4}
        -
        \frac{ \delta_s + \Delta_s}{2\lambda}
        - \frac{1}{2\gamma \lambda}
         \Bigr) 
        \E_{S_r, \xi_r} \Bigl[\AvgSr \norm{ \xx_{i,r+1} - \vv^r }^2 \Bigr] + \frac{\epsilon}{2 \lambda} .
    \end{multline*}

    By our choice of $\lambda$, we have 
    $\frac{\lambda}{4}
        - \frac{\delta_s + \Delta_s}{2}
        - \frac{1}{2\gamma} \ge 0$.
        Taking the full expectation on both sides, we get
    \begin{equation*}
    \begin{split}
         \frac{1}{\lambda}
         \E [f(\xx^{r+1}) - f^\star] + \frac{1 + \mu / \lambda}{2}
        \E[ \norm{ \vv^{r+1} - \xx^\star }^2 ] 
        \le 
        \frac{1}{2} \E[ \norm{ \vv^r - \xx^\star }^2 ]
        + \frac{\epsilon}{2 \lambda} 
        .
    \end{split}
    \end{equation*}

    According to \cref{thm:SimpleSequence} and the fact that $\norm{\vv^0 - \xx^\star} = D$, we get
    \begin{equation*}
        \frac{2}{\mu + \lambda} \E\bigl[
        f(\Bar{\xx}^R) - f^{\star} \bigr]
        + (1 -q) \E\bigl[ \bigl\lVert \vv^R - \xx^\star \bigr\rVert^2 \bigr]
        \le \frac{1 - q}{(1/q)^R -1} D^2
        + \frac{1}{\mu + \lambda} \epsilon,
    \end{equation*}
    where $q \defeq \frac{1}{1 + \mu / \lambda}$.
    Rearranging and dropping the non-negative $\E[\norm{ \vv^R - \xx^\star}^2]$, we get, for any $R \ge 1$,
    \begin{equation*}
        \E\bigl[ f(\Bar{\xx}^R) - f^{\star} \bigr]
        \le \frac{\mu D^2}{2[(\frac{\mu}{\lambda} + 1)^R - 1]} + \frac{\epsilon}{2}
        \leq
        \frac{\mu D^2}{2[ \exp(\frac{\mu}{\mu + \lambda}R ) - 1]} + \frac{\epsilon}{2}.
    \end{equation*}
    To reach $\epsilon$-accuracy, it suffices to let
    $
        \frac{\mu D^2}{2[ \exp(
        \frac{\mu}{\mu + \lambda}R 
        ) - 1]}
        \le \frac{\epsilon}{2} 
    $. Rearranging gives the claim.
\end{proof}

\subsubsection{Stochastic Local Solver}
\label{sec:StochasticLocalSolver-S-DANE}

Note that there exist many stochastic optimization algorithms that can also achieve the accuracy condition~\eqref{eq:Accuracy-S-DANE-Partial-Participation-Appendix} such as variance reduction methods~\citep{svrg,saga}, adaptive SGD methods~\citep{adagrad}, etc. 
Here, we take the simplest algorithm: SGD with constant stepsize as an example.

\begin{corollary}
    Consider \cref{Alg:S-DANE} under the same settings as in \cref{thm:S-DANE-AppendixThm-Sampling}. Further assume that each $f_i$ is $L$-smooth and each device has access to mini-batch stochastic gradient $g_{i}(\xx, \Bar{\xi}_i)$ such that
    \[
        \E_{\Bar{\xi}_i}[g_{i}(\xx, \Bar{\xi}_i)] = \nabla f_i(\xx),
        \qquad
        \E_{\Bar{\xi}_i}[\norm{  g_{i}(\xx, \Bar{\xi}_i) - \nabla f_i(\xx) }^2] \le \sigma^2.
    \]
    Suppose for any $r \ge 0$, each device $i \in S_r$ solves its subproblem approximately by using mini-batch SGD:
    \[
        \zz_{k+1} = \zz_k - \frac{1}{H} \bigl[ g_i(\xx, \Bar{\xi}_{i,k}^r) - \nabla f_i (\vv^r) + \nabla f_{S_r}(\vv^r) + \lambda (\zz_{k} - \vv^r) \bigr],
        \qquad
        0 \leq k \leq K,
    \]
    where $\zz_0 = \vv^r$ and $H > L+\lambda$ is the stepsize coefficient. Let $\xi_{i,r}$ denote
    $(\Bar{\xi}_{i,k}^r)_{k}$. To achieve accuracy condition~\eqref{eq:Accuracy-S-DANE-Partial-Participation-Appendix} for an appropriately chosen $H$,
    each device $i$ requires at most the following number of 
    stochastic mini-batch oracle calls:
    \begin{equation*}
        K = \Theta\biggl(
        \biggl[
            \frac{L + \lambda}{\mu + \lambda}
            +
            \frac{(L + \lambda)\sigma^2}{(\mu + \lambda)\lambda \epsilon}
        \biggr] \log \frac{L + \lambda}{\lambda}
        \biggr).
    \end{equation*}
    \label{thm:S-DANE-StochasticLocalComputation}
\end{corollary}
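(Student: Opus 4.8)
The plan is to notice that, at a fixed round $r$ and device $i$, the inner loop is literally constant‑stepsize SGD run on the ordinary strongly convex, smooth subproblem $F_{i,r}$, so the corollary reduces to an off‑the‑shelf SGD complexity estimate once we translate the accuracy requirement \eqref{eq:Accuracy-S-DANE-Partial-Participation-Appendix} into a target accuracy for that inner solver and tune the stepsize $1/H$.

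First I would record the elementary facts about the inner problem. The function $F_{i,r}$ is $(\mu+\lambda)$‑convex and $(L+\lambda)$‑smooth, so with $\xx_{i,r}^\star:=\argmin_\xx F_{i,r}(\xx)$ we have $\tfrac{\mu+\lambda}{2}\norm{\zz-\xx_{i,r}^\star}^2\le F_{i,r}(\zz)-F_{i,r}(\xx_{i,r}^\star)$, $\norm{\nabla F_{i,r}(\zz)}^2\le 2(L+\lambda)\bigl(F_{i,r}(\zz)-F_{i,r}(\xx_{i,r}^\star)\bigr)$ (minimize the bound \eqref{eq:SmoothUpperBound} over the argument), and $F_{i,r}(\vv^r)-F_{i,r}(\xx_{i,r}^\star)\le\tfrac{L+\lambda}{2}\norm{\vv^r-\xx_{i,r}^\star}^2$. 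Next, the vector subtracted in the update, $g_i(\zz_k,\bar\xi_{i,k}^r)-\nabla f_i(\vv^r)+\nabla f_{S_r}(\vv^r)+\lambda(\zz_k-\vv^r)$, is an unbiased estimate of $\nabla F_{i,r}(\zz_k)$ with conditional variance $\E\norm{g_i(\zz_k,\cdot)-\nabla f_i(\zz_k)}^2\le\sigma^2$; hence the inner loop is SGD on $F_{i,r}$ with stepsize $1/H$, started at $\zz_0=\vv^r$, run for $K$ steps at noise level $\sigma^2$, with $\xx_{i,r+1}=\zz_K$. For such SGD I would invoke the sharp constant‑stepsize guarantee $\E[F_{i,r}(\zz_K)-F_{i,r}^\star]\le c_1\bigl(1-\tfrac{\mu+\lambda}{H}\bigr)^K\bigl(F_{i,r}(\vv^r)-F_{i,r}^\star\bigr)+\tfrac{c_2\sigma^2}{H}$, valid for $H\gtrsim L+\lambda$ and absolute constants $c_1,c_2$ (the key point being that the additive noise term is $\Theta(\sigma^2/H)$, not inflated by the condition number).

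It then remains to connect \eqref{eq:Accuracy-S-DANE-Partial-Participation-Appendix} to a target on $\E[F_{i,r}(\zz_K)-F_{i,r}^\star]$ and balance $H$. Since the left‑hand side of \eqref{eq:Accuracy-S-DANE-Partial-Participation-Appendix} is an average over $i\in S_r$, it suffices to secure the per‑device inequality $\E\norm{\nabla F_{i,r}(\zz_K)}^2\le\tfrac{\lambda^2}{4}\E\norm{\zz_K-\vv^r}^2+\tfrac{\lambda\epsilon}{4}$. Using $\norm{\zz_K-\vv^r}^2\ge\tfrac12\norm{\vv^r-\xx_{i,r}^\star}^2-\norm{\zz_K-\xx_{i,r}^\star}^2$, the smoothness bound for $\norm{\nabla F_{i,r}}^2$, the strong‑convexity bound for $\norm{\zz_K-\xx_{i,r}^\star}^2$, and $\lambda^2/(\mu+\lambda)\le L+\lambda$, this per‑device condition follows once $\E[F_{i,r}(\zz_K)-F_{i,r}^\star]\le\tfrac{\lambda^2}{c_3(L+\lambda)}\norm{\vv^r-\xx_{i,r}^\star}^2+\tfrac{\lambda\epsilon}{c_3(L+\lambda)}$ for an absolute constant $c_3$; crucially, the $\tfrac{\lambda\epsilon}{4}$ slack in \eqref{eq:Accuracy-S-DANE-Partial-Participation-Appendix} is exactly what lets this target be demanded uniformly, with no case split on how close $\vv^r$ already is to $\xx_{i,r}^\star$. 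Plugging in the SGD bound together with $F_{i,r}(\vv^r)-F_{i,r}^\star\le\tfrac{L+\lambda}{2}\norm{\vv^r-\xx_{i,r}^\star}^2$, the contraction part forces $(1-\tfrac{\mu+\lambda}{H})^K\lesssim\lambda^2/(L+\lambda)^2$, i.e.\ $K\gtrsim\tfrac{H}{\mu+\lambda}\ln\tfrac{L+\lambda}{\lambda}$, while the noise floor forces $\tfrac{c_2\sigma^2}{H}\lesssim\tfrac{\lambda\epsilon}{L+\lambda}$, i.e.\ $H\gtrsim\tfrac{(L+\lambda)\sigma^2}{\lambda\epsilon}$. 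Choosing $H=\Theta\bigl(\max\{L+\lambda,\ (L+\lambda)\sigma^2/(\lambda\epsilon)\}\bigr)$ then gives $K=\Theta\Bigl(\bigl(\tfrac{L+\lambda}{\mu+\lambda}+\tfrac{(L+\lambda)\sigma^2}{(\mu+\lambda)\lambda\epsilon}\bigr)\ln(1+\tfrac{L}{\lambda})\Bigr)$ via $\ln\tfrac{L+\lambda}{\lambda}=\ln(1+\tfrac{L}{\lambda})$; since this is a fixed (deterministic) number of steps that already makes the expected accuracy condition hold, the ``in expectation'' phrasing in the statement is automatic.

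The step I expect to be delicate is the constant‑matching in the reduction: one must route everything through the function value $F_{i,r}(\zz_K)-F_{i,r}^\star$ (rather than through $\norm{\zz_K-\xx_{i,r}^\star}^2$ directly) and invoke the sharp form of the constant‑stepsize SGD bound whose additive term is $\Theta(\sigma^2/H)$ — the cruder bound with additive term $\Theta\bigl(\tfrac{(L+\lambda)\sigma^2}{(\mu+\lambda)H}\bigr)$ would still give a valid estimate but with an extra $\tfrac{L+\lambda}{\mu+\lambda}$ factor on the noise term. Everything else is routine bookkeeping.
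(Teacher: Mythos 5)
Your overall route coincides with the paper's: the same reduction of the averaged condition \eqref{eq:Accuracy-S-DANE-Partial-Participation-Appendix} to a per-device condition, the same triangle-inequality/strong-convexity argument to replace $\norm{\xx_{i,r+1}-\vv^r}$ by $\norm{\vv^r-\xx_{i,r}^\star}$ up to the $\lambda\epsilon$ slack (the paper's intermediate condition \eqref{eq:stronger-condition-inexact-solution}), the same passage through $\norm{\nabla F_{i,r}}^2\le 2(L+\lambda)\bigl(F_{i,r}-F_{i,r}^\star\bigr)$, and the same tuning $H=\Theta\bigl(L+\lambda+\tfrac{(L+\lambda)\sigma^2}{\lambda\epsilon}\bigr)$, $K=\Theta\bigl(\tfrac{H}{\mu+\lambda}\ln(1+\tfrac{L}{\lambda})\bigr)$.

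The one genuine gap is exactly the step you flagged as delicate: you set $\xx_{i,r+1}=\zz_K$ (the \emph{last} iterate) and invoke, as an off-the-shelf fact, a constant-stepsize SGD bound of the form $\E[F_{i,r}(\zz_K)-F_{i,r}^\star]\le c_1\bigl(1-\tfrac{\mu+\lambda}{H}\bigr)^K\bigl(F_{i,r}(\vv^r)-F_{i,r}^\star\bigr)+\tfrac{c_2\sigma^2}{H}$. No such sharp last-iterate guarantee is standard for a general $(L+\lambda)$-smooth, $(\mu+\lambda)$-strongly convex objective with a bounded-variance oracle: the usual last-iterate analyses (descent lemma plus PL, or the distance recursion converted to function values by smoothness) give a noise floor of order $\tfrac{(L+\lambda)\sigma^2}{(\mu+\lambda)H}$, i.e.\ precisely the crude bound you mention, and with it your balancing of $H$ and $K$ produces an extra $\tfrac{L+\lambda}{\mu+\lambda}$ factor on the noise term, missing the stated complexity. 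The $\Theta(\sigma^2/H)$ function-value floor is, however, readily available for a \emph{weighted-average} output, and this is how the paper closes the argument: its Lemma~\ref{thm:SGD-ConvexSmooth} (proved via Lemma~\ref{thm:SimpleSequence}) returns $\Bar{\zz}_K:=\sum_{k=1}^K \zz_k/q^k\big/\sum_{k=1}^K 1/q^k$ and yields the floor $\tfrac{\sigma^2}{2(H-L-\lambda)}$, after which the choice $H=(L+\lambda)+\tfrac{5(L+\lambda)\sigma^2}{\lambda\epsilon}$ makes the constants work. Since the corollary does not force the device to return the last iterate, the fix is simply to output $\Bar{\zz}_K$ and use (or reprove) that averaged-iterate bound; with $\zz_K$ as stated, your argument as written does not go through without supplying a proof of the sharp last-iterate estimate.
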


\begin{proof}
    To get~\eqref{eq:Accuracy-S-DANE-Partial-Participation-Appendix}, it suffices to ensure that, for any $i \in S_r$, we have
    \[
        E_{\xi_{i,r}}[\norm{ \nabla F_{i,r}(\xx_{i,r+1})}^2]
        \le    
        \frac{\lambda^2}{4} \E_{\xi_{i,r}}[\norm{ \xx_{i,r+1} - \vv^r}^2] + \frac{\lambda \epsilon}{4}.
    \]
    For this, it suffices to ensure that
    \begin{equation}
        \E_{\xi_{i,r}} [\norm{ \nabla F_{i,r}(\xx_{i,r+1}) }^2] \le \frac{\lambda^2}{10} \norm{ \vv^r - \xx_{i,r}^\star }^2 + \frac{\lambda \epsilon}{5} . 
        \label{eq:stronger-condition-inexact-solution}
    \end{equation}
    where $\xx_{i,r}^\star \defeq \argmin_{\xx} F_{i,r}(\xx)$.
    Indeed, suppose~\eqref{eq:stronger-condition-inexact-solution} holds, then we have 
    \[
        \norm{ \xx_{i,r+1} - \vv^r } \ge 
        \norm{ \vv^r - \xx_{i,r}^\star } -
        \norm{ \xx_{i,r+1} - \xx_{i,r}^\star}
        \stackrel{\eqref{eq:NormGradConvex}}{\ge} 
        \norm{ \vv^r - \xx_{i,r}^\star }
        - \frac{1}{\lambda}\norm{ \nabla F_{i,r}(\xx_{i,r+1})}.
    \]
    Hence,
    \[
        \norm{ \vv^r - \xx_{i,r}^\star }^2 
        \le \frac{2}{\lambda^2} \norm{ \nabla F_{i,r} (\xx_{i,r+1}) }^2 + 2 \norm{ \xx_{i,r+1} - \vv^r }^2.
    \]
    Plugging in this inequality into~\eqref{eq:stronger-condition-inexact-solution} and taking expectation w.r.t $\xi_{i,r}$ on both sides, we get 
    \[
        \E_{\xi_{i,r}} [\norm{ \nabla F_{i,r}(\xx_{i,r+1}) }^2] \le \frac{1}{5} 
        \E_{\xi_{i,r}} [\norm{ \nabla F_{i,r}(\xx_{i,r+1}) }^2] + \frac{\lambda^2}{5}
        \E_{\xi_{i,r}} [\norm{ \xx_{i,r+1} - \vv^r }^2] + \frac{\lambda}{5} \epsilon.
    \]
    Rearranging gives the weaker condition. 
    
    We next consider the number of mini-batch stochastic gradient oracles required for SGD to achieve~\eqref{eq:stronger-condition-inexact-solution}. Since $F_{i,r}$ is $(L + \lambda)$-smooth
    and $(\mu + \lambda)$-convex, according to \cref{thm:SGD-ConvexSmooth},
    we have
    \begin{equation*}
    \begin{split}
    \E_{\xi_{i,r}}[\norm{ \nabla F_{i,r}(\Bar{\zz}_K) }^2]
    &\le
    2(L + \lambda) \E_{\xi_{i,r}}[F_{i,r} (\Bar{\zz}_K) - F_{i,r}^\star]
    \\
    &\le
    2(L + \lambda) \biggl( 
    \frac{(\mu + \lambda) \norm{\vv^r - \xx_{i,r}^\star}^2}{2[\exp\bigl( (\mu + \lambda)K / H \bigr) - 1]}
    + \frac{\sigma^2}{2(H - L - \lambda)}
    \biggr) ,
    \end{split}
    \end{equation*}
    where $\bar{\zz}_K \defeq \frac{1}{\sum_{k=1}^K \frac{1}{q^k}} \sum_{k=1}^K \frac{\zz_k}{q^k}$
    and $q = \frac{H - \mu - \lambda}{H}$.
    Choosing now $H = (L + \lambda) + \frac{5(L + \lambda)\sigma^2}{\lambda \epsilon}$, and letting 
    the coefficient of the first part in the previous display be $\leq \frac{\lambda^2}{10}$, we get the claim.
\end{proof}

\begin{lemma}
    Let $f$ be a $\mu$-convex and $L$-smooth function.
    Consider SGD with constant stepsize $H > L$: 
    \[
    \xx_{k+1}
    \defeq \argmin_{\xx \in \R^d} \Bigl\{
    \lin{g_k, \xx} + \frac{H}{2} \norm{\xx - \xx_k}^2 
    \Bigr\} ,
    \]
    where $g_k \defeq g(\xx_k, \xi_k)$ with 
    $\E_{\xi} [g(\xx,\xi)] = \nabla f(\xx)$
    and $\E_{\xi} [ \norm{g(\xx,\xi) - \nabla f(\xx)}^2 ]\le \sigma^2$ for any $\xx \in \R^d$.
    Then for any $K \ge 1$, we have
    \begin{equation}
    \E[f(\Bar{\xx}_K)] - f^\star
    \le
    \frac{\mu \norm{\xx_0 - \xx^\star}^2}{2 \bigl[ \exp(\mu K / H)- 1 \bigr]} + \frac{\sigma^2}{2 (H-L)}
    .
    \end{equation}
     where $\bar{\xx}_K \defeq \frac{1}{\sum_{k=1}^K \frac{1}{q^k}} \sum_{k=1}^K \frac{\xx_k}{q^k}$
    and $q = \frac{H - \mu}{H}$.
    \label{thm:SGD-ConvexSmooth}
\end{lemma}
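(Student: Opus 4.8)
The plan is to derive a one-step recurrence coupling the squared distance $\E\norm{\xx_k-\xx^\star}^2$ with the function residual and then feed it into Lemma~\ref{thm:SimpleSequence}. I would start from the fact that $\xx_{k+1}$ minimizes the $H$-strongly convex quadratic $\xx\mapsto\lin{g_k,\xx}+\frac{H}{2}\norm{\xx-\xx_k}^2$, which gives, taking $\uu=\xx^\star$,
\begin{equation*}
\lin{g_k,\xx_{k+1}-\xx^\star}\le\tfrac{H}{2}\norm{\xx^\star-\xx_k}^2-\tfrac{H}{2}\norm{\xx_{k+1}-\xx_k}^2-\tfrac{H}{2}\norm{\xx^\star-\xx_{k+1}}^2 .
\end{equation*}
Writing $g_k=\nabla f(\xx_k)+\delta_k$ with $\E[\delta_k\mid\xx_k]=0$ and $\E[\norm{\delta_k}^2\mid\xx_k]\le\sigma^2$, I would lower-bound the left-hand side in conditional expectation: split $\lin{\nabla f(\xx_k),\xx_{k+1}-\xx^\star}$ into $\lin{\nabla f(\xx_k),\xx_{k+1}-\xx_k}+\lin{\nabla f(\xx_k),\xx_k-\xx^\star}$, estimate the first piece via the smoothness upper bound~\eqref{eq:SmoothUpperBound} (it is $\ge f(\xx_{k+1})-f(\xx_k)-\frac{L}{2}\norm{\xx_{k+1}-\xx_k}^2$) and the second via $\mu$-convexity~\eqref{df:stconvex} (it is $\ge f(\xx_k)-f^\star+\frac{\mu}{2}\norm{\xx_k-\xx^\star}^2$), and use $\E[\lin{\delta_k,\xx_k-\xx^\star}\mid\xx_k]=0$ together with $\E[\lin{\delta_k,\xx_{k+1}-\xx_k}\mid\xx_k]=-\frac{1}{H}\E[\norm{\delta_k}^2\mid\xx_k]$ (since $\xx_{k+1}-\xx_k=-\frac{1}{H}g_k$).

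Combining these, cancelling the $f(\xx_k)$ terms, and rearranging, I obtain
\begin{equation*}
\E[f(\xx_{k+1})\mid\xx_k]-f^\star+\tfrac{H}{2}\E[\norm{\xx_{k+1}-\xx^\star}^2\mid\xx_k]\le\tfrac{H-\mu}{2}\norm{\xx_k-\xx^\star}^2-\tfrac{H-L}{2}\E[\norm{\xx_{k+1}-\xx_k}^2\mid\xx_k]+\tfrac{1}{H}\E[\norm{\delta_k}^2\mid\xx_k].
\end{equation*}
Since $H>L$ and $\E[\norm{\xx_{k+1}-\xx_k}^2\mid\xx_k]=\frac{1}{H^2}(\norm{\nabla f(\xx_k)}^2+\E[\norm{\delta_k}^2\mid\xx_k])\ge\frac{1}{H^2}\E[\norm{\delta_k}^2\mid\xx_k]$, the $-\frac{H-L}{2}\E\norm{\xx_{k+1}-\xx_k}^2$ term absorbs part of the stochastic error, and the residual noise coefficient is $\frac{1}{H}-\frac{H-L}{2H^2}=\frac{H+L}{2H^2}\le\frac{1}{2(H-L)}$. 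Dividing by $H/2$, taking full expectations, and setting $q:=\frac{H-\mu}{H}<1$ (using $\mu\le L<H$; the degenerate case $\mu=0$ is the limit $q\uparrow1$), I reach $\frac{2}{H}\E[f(\xx_{k+1})-f^\star]+\E\norm{\xx_{k+1}-\xx^\star}^2\le q\,\E\norm{\xx_k-\xx^\star}^2+\epsilon$ with $\epsilon:=\frac{(H+L)\sigma^2}{H^3}$.

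Finally I would apply Lemma~\ref{thm:SimpleSequence} with $F_k:=\frac{2}{H}\E[f(\xx_k)-f^\star]$ and $D_k:=\E\norm{\xx_k-\xx^\star}^2$; convexity of $f$ gives $\frac{2}{H}\E[f(\bar\xx_K)-f^\star]\le\frac{1}{S_K}\sum_{k=1}^K q^{-k}F_k$ with $S_K=\sum_{k=1}^K q^{-k}$, so dropping the nonnegative $D_K$-term leaves $\E[f(\bar\xx_K)-f^\star]\le\frac{H}{2}\bigl(\frac{1-q}{1/q^K-1}D_0+\epsilon\bigr)=\frac{\mu\norm{\xx_0-\xx^\star}^2}{2(1/q^K-1)}+\frac{(H+L)\sigma^2}{2H^2}$. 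The claim then follows from $1/q^K=(1-\mu/H)^{-K}\ge\exp(\mu K/H)$ (equivalently $-\ln(1-x)\ge x$) and $\frac{H+L}{2H^2}\le\frac{1}{2(H-L)}$. The main obstacle I anticipate is the noise bookkeeping in the middle step: one must retain the $\norm{\xx_{k+1}-\xx_k}^2$ term rather than discarding it immediately, and orient the smoothness/$\mu$-convexity inequalities so that $f(\xx_k)$ cancels while $f(\xx_{k+1})$ survives — this is precisely what makes the averaged iterate $\bar\xx_K$ weight $\xx_1,\dots,\xx_K$ as in the statement (a direct expansion of $\norm{\xx_{k+1}-\xx^\star}^2$ would instead leave $f(\xx_k)$ and the wrong index range).
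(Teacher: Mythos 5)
Your proof is correct and follows essentially the same route as the paper: the three-point optimality property of the prox step at $\xx^\star$, smoothness and $\mu$-convexity to build the one-step recurrence $F_{k+1}+D_{k+1}\le qD_k+\epsilon$, then Lemma~\ref{thm:SimpleSequence}, Jensen with the $q^{-k}$ weights, and $(1-\mu/H)^{-K}\ge\exp(\mu K/H)$. The only (harmless) deviation is in the noise bookkeeping: the paper bounds the cross term $\lin{g_k-\nabla f(\xx_k),\xx_{k+1}-\xx_k}$ pathwise via Young's inequality~\eqref{eq:BasicInequality1} to get the constant $\frac{\sigma^2}{2(H-L)}$ directly, whereas you compute it exactly in conditional expectation using $\xx_{k+1}-\xx_k=-\frac{1}{H}g_k$, obtaining the slightly tighter constant $\frac{(H+L)\sigma^2}{2H^2}\le\frac{\sigma^2}{2(H-L)}$.
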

\begin{proof}
    Indeed, for any $k \ge 0$, we have
    \begin{align*}
        \hspace{2em}&\hspace{-2em}
        f(\xx_k) + \lin{g_k, \xx^\star - \xx_k}
        + \frac{H}{2} \norm{\xx_k - \xx^\star}^2 
        \\
        &\ge 
        f(\xx_k) + \lin{g_k, \xx_{k+1} - \xx_k}
        + \frac{H}{2} \norm{\xx_{k+1} - \xx_k}^2
        +\frac{H}{2} \norm{\xx_{k+1} - \xx^\star}^2
        \\
        &\stackrel{\eqref{df:smooth}}{\ge}
        f(\xx_{k+1}) 
        +
        \lin{g_k - \nabla f(\xx_k), \xx_{k+1} - \xx_k}
        + \frac{H - L}{2} \norm{\xx_{k+1} - \xx_k}^2
        +\frac{H}{2} \norm{\xx_{k+1} - \xx^\star}^2
        \\
        &\stackrel{\eqref{eq:BasicInequality1}}{\ge} 
        f(\xx_{k+1}) - \frac{\norm{g_k - \nabla f(\xx_k)}^2}{2 (H-L)} +\frac{H}{2} \norm{\xx_{k+1} - \xx^\star}^2 .
    \end{align*}
    Taking the expectation on both sides and using $\mu$-convexity of $f$, we get
    \begin{align*}
        \E[f(\xx_{k+1}) - f^\star]  
        +\frac{H}{2} \E[ \norm{\xx_{k+1} - \xx^\star}^2 ]
        \le
        \frac{H - \mu}{2} \E[ \norm{\xx_{k} - \xx^\star}^2
        + \frac{\sigma^2}{2 (H-L)} .
    \end{align*}
    Applying \cref{thm:SimpleSequence}, we have
    for any $K \ge 1$:
    \[
    \E[ f(\Bar{\xx}_K) - f^\star]
    \le \frac{\mu \norm{\xx_0 - \xx^\star}^2}{2 \bigl[ (1/q)^K - 1 \bigr]} + \frac{\sigma^2}{2 (H-L)}
    \le
    \frac{\mu \norm{\xx_0 - \xx^\star}^2}{2 \bigl[ \exp(\mu K / H)- 1 \bigr]} + \frac{\sigma^2}{2 (H-L)}
    .
    \qedhere
    \]
    \end{proof}

\section{Proofs for Accelerated S-DANE (\cref{Alg:ADPP})}
\subsection{One-Step Recurrence}
\begin{lemma}
    \label{thm:OneStepRecurrence2}
    Consider \cref{Alg:ADPP}. Let $f_i : \R^d \to \R$ be $\mu$-convex with $\mu \ge 0$ for any $i \in [n]$. 
    Assume that $\{f_i\}_{i=1}^n$ have $\delta_s$-SOD.
    For any
    $r \ge 0$, we have 
    \begin{align*}
        \hspace{2em}&\hspace{-2em}
        A_r f_{S_r}(\xx^r) + a_{r+1} f_{S_r} (\xx^\star) 
        + \frac{B_r}{2} \norm{ \vv^r - \xx^\star }^2
        \\
        &\ge A_{r+1} f_{S_r} (\xx^{r+1}) 
        + \frac{B_{r+1}}{2} \norm{ \vv^{r+1} - \xx^\star }^2
        \\
        &\qquad
        + A_{r+1} \biggl( 
        \frac{\lambda - \delta_s}{2}
        \AvgSr \norm{ \xx_{i,r+1} - \yy^r }^2
        - \frac{1}{\lambda} \AvgSr \norm{ \nabla F_{i,r} (\xx_{i,r+1}) }^2
        \biggr) .
    \end{align*}
\end{lemma}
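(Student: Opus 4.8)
The plan is to run the estimating-sequences argument underlying the non-accelerated Lemma~\ref{thm:S-DANE-OneStepRecurrence2}, now weighted by $A_r$, $a_{r+1}$, $B_r$, and to let the two parameter identities from lines~4 and~6 of Algorithm~\ref{Alg:ADPP} do the bookkeeping. Fix the round $r$, abbreviate $h_i^r := f_{S_r} - f_i$ (so $\sum_{i \in S_r}\nabla h_i^r \equiv 0$ and $\AvgSr h_i^r \equiv 0$), and for each $i \in S_r$ introduce the quadratic lower model $\ell_i(\xx) := f_i(\xx_{i,r+1}) + \lin{\nabla f_i(\xx_{i,r+1}), \xx - \xx_{i,r+1}} + \frac{\mu}{2}\norm{\xx - \xx_{i,r+1}}^2$, which satisfies $\ell_i \le f_i$ by $\mu$-convexity~\eqref{df:stconvex}; set $\ell_{S_r} := \AvgSr \ell_i$. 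The starting point is the identity $a_{r+1}\ell_{S_r}(\xx) + \frac{B_r}{2}\norm{\xx - \vv^r}^2 = G_r(\xx) + C_r$, where $C_r := a_{r+1}\AvgSr[f_i(\xx_{i,r+1}) - \lin{\nabla f_i(\xx_{i,r+1}), \xx_{i,r+1}}]$ does not depend on $\xx$; since $G_r$ is $B_{r+1}$-strongly convex ($B_{r+1} = B_r + \mu a_{r+1}$) with minimizer $\vv^{r+1}$ (line~11), evaluating at $\xx^\star$ and using $f_{S_r} \ge \ell_{S_r}$ yields
\begin{equation*}
a_{r+1} f_{S_r}(\xx^\star) + \frac{B_r}{2}\norm{\vv^r - \xx^\star}^2
\ge a_{r+1}\ell_{S_r}(\vv^{r+1}) + \frac{B_r}{2}\norm{\vv^{r+1} - \vv^r}^2 + \frac{B_{r+1}}{2}\norm{\vv^{r+1} - \xx^\star}^2.
\end{equation*}

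Next I would add $A_r f_{S_r}(\xx^r)$ to both sides and lower-bound it by convexity of each $f_i$ at $\xx_{i,r+1}$, i.e.\ $A_r f_{S_r}(\xx^r) \ge A_r\AvgSr[f_i(\xx_{i,r+1}) + \lin{\nabla f_i(\xx_{i,r+1}), \xx^r - \xx_{i,r+1}}]$, while dropping the nonnegative $\frac{\mu}{2}$-terms inside $\ell_{S_r}(\vv^{r+1})$. Collecting the $f_i(\xx_{i,r+1})$ terms with total coefficient $A_{r+1} = A_r + a_{r+1}$, the remaining inner products merge into $\AvgSr\lin{\nabla f_i(\xx_{i,r+1}),\, A_r\xx^r + a_{r+1}\vv^{r+1} - A_{r+1}\xx_{i,r+1}}$, and here the momentum identity $A_{r+1}\yy^r = A_r\xx^r + a_{r+1}\vv^r$ (line~6) gives $A_r\xx^r + a_{r+1}\vv^{r+1} - A_{r+1}\xx_{i,r+1} = A_{r+1}(\yy^r - \xx_{i,r+1}) + a_{r+1}(\vv^{r+1} - \vv^r)$. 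This splits the inner product into $A_{r+1}\AvgSr\lin{\nabla f_i(\xx_{i,r+1}), \yy^r - \xx_{i,r+1}}$ plus $a_{r+1}\lin{\AvgSr\nabla f_i(\xx_{i,r+1}), \vv^{r+1} - \vv^r}$, and I would bound the latter from below by Young's inequality~\eqref{eq:BasicInequality1} with parameter $\gamma = a_{r+1}/B_r$: this produces $-\frac{B_r}{2}\norm{\vv^{r+1} - \vv^r}^2$, which exactly cancels the term carried over from $G_r$, together with $-\frac{a_{r+1}^2}{2B_r}\norm{\AvgSr\nabla f_i(\xx_{i,r+1})}^2$, in which the defining relation $\lambda a_{r+1}^2 = A_{r+1}B_r$ (line~4) converts $\frac{a_{r+1}^2}{B_r}$ into $\frac{A_{r+1}}{\lambda}$.

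At this stage the accumulated bound reads $A_r f_{S_r}(\xx^r) + a_{r+1}f_{S_r}(\xx^\star) + \frac{B_r}{2}\norm{\vv^r - \xx^\star}^2 \ge A_{r+1}\AvgSr f_i(\xx_{i,r+1}) + \frac{B_{r+1}}{2}\norm{\vv^{r+1} - \xx^\star}^2 + A_{r+1}\bigl[\AvgSr\lin{\nabla f_i(\xx_{i,r+1}), \yy^r - \xx_{i,r+1}} - \frac{1}{2\lambda}\norm{\AvgSr\nabla f_i(\xx_{i,r+1})}^2\bigr]$. The last step is to replace $\AvgSr f_i(\xx_{i,r+1})$ by $f_{S_r}(\xx^{r+1})$ using convexity of each $f_i$ at $\xx^{r+1}$ together with $\xx^{r+1} = \AvgSr\xx_{i,r+1}$ and $\sum_{i\in S_r}\nabla h_i^r(\xx^{r+1}) = 0$ (exactly as in the proof of Lemma~\ref{thm:S-DANE-OneStepRecurrence2}); this contributes the extra term $A_{r+1}\AvgSr\lin{\nabla h_i^r(\xx^{r+1}), \yy^r - \xx_{i,r+1}}$, which I then merge into the existing inner product and handle by invoking Lemma~\ref{thm:ErrorLowerBound} with $\xx_i = \xx_{i,r+1}$, $\vv = \yy^r$, $S = S_r$, $\xx = \xx^{r+1}$ (noting that the $F_i$ in that lemma is precisely $F_{i,r}$). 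This lower-bounds the bracket by $\frac{\lambda - \delta_s}{2}\AvgSr\norm{\xx_{i,r+1} - \yy^r}^2 - \frac{1}{\lambda}\AvgSr\norm{\nabla F_{i,r}(\xx_{i,r+1})}^2$, which is exactly the claimed inequality.

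I do not expect a genuinely hard estimate here: the whole difficulty is the combinatorial bookkeeping of the auxiliary quadratics, and the two parameter identities $\lambda a_{r+1}^2 = A_{r+1}B_r$ and $A_{r+1}\yy^r = A_r\xx^r + a_{r+1}\vv^r$ are precisely what make the $\frac{B_r}{2}\norm{\vv^{r+1}-\vv^r}^2$ terms cancel and make the $\frac{A_{r+1}}{2\lambda}\norm{\AvgSr\nabla f_i(\xx_{i,r+1})}^2$ term line up with the $\frac{1}{2\lambda}$-term hidden inside Lemma~\ref{thm:ErrorLowerBound}. The one place to stay careful is the direction in which strong convexity is used: $\mu$-convexity to build $\ell_i$ and to certify $B_{r+1}$-strong convexity of $G_r$, but only plain convexity (the $\frac{\mu}{2}$-terms discarded) at $\xx^r$ and at $\xx^{r+1}$, so that the output matches the stated bound term-for-term.
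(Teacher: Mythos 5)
Your proposal is correct and follows essentially the same route as the paper's proof: lower-bound $f_{S_r}$ at $\xx^r$ and $\xx^\star$ by (strong) convexity at $\xx_{i,r+1}$, exploit that $\vv^{r+1}$ minimizes the $B_{r+1}$-strongly convex $G_r$ (your $\ell_{S_r}$/constant-$C_r$ packaging is just a cleaner way of saying this), split the inner product via $A_{r+1}\yy^r = A_r\xx^r + a_{r+1}\vv^r$, cancel $\tfrac{B_r}{2}\norm{\vv^{r+1}-\vv^r}^2$ with Young's inequality, convert $\tfrac{a_{r+1}^2}{2B_r}$ into $\tfrac{A_{r+1}}{2\lambda}$ via $\lambda a_{r+1}^2 = A_{r+1}B_r$, and finish with the $h_i^r$ substitution and Lemma~\ref{thm:ErrorLowerBound} at $\vv=\yy^r$. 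All steps, including where $\mu$-convexity versus plain convexity is used and which nonnegative terms are dropped, match the paper's argument.
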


\begin{proof}
    By $\mu$-convexity of $f_i$, for any $r \ge 0$, it holds that
    \begin{align*}
        \hspace{2em}&\hspace{-2em}
        A_r f_{S_r}(\xx^r) + a_{r+1} f_{S_r} (\xx^\star) 
        + \frac{B_r}{2} \norm{ \vv^r - \xx^\star }^2
        \nonumber
        \\
        &= A_r \AvgSr f_i (\xx^r)
           + a_{r+1} \AvgSr f_i (\xx^\star)
           + \frac{B_r}{2} \norm{ \vv^r - \xx^\star }^2
        \\
        &\stackrel{\eqref{df:stconvex}}{\ge}
        A_{r} \AvgSr [f_i(\xx_{i,r+1}) 
        + \lin{\nabla f_i (\xx_{i,r+1}), \xx^r - \xx_{i,r+1}}]
        + \frac{B_r}{2} \norm{ \vv^r - \xx^\star }^2 
        \nonumber
        \\
        &\qquad +a_{r+1} \AvgSr \Bigl[f_i(\xx_{i,r+1}) 
        + \lin{\nabla f_i(\xx_{i,r+1}), \xx^\star - \xx_{i,r+1}}
        + \frac{\mu}{2}\norm{ \xx_{i,r+1} - \xx^\star }^2  \Bigr]
        .
    \end{align*}

    Recall that $\vv^{r+1}$
    is the minimizer of the final expression 
    in $\xx^\star$. This expression is a
    $(\mu a_{r+1} + B_r)$-convex function in $\xx^\star$.
    By convexity and using the fact that
    $A_{r+1} = A_{r} + a_{r+1}$ and $B_{r+1} = \mu a_{r+1} + B_r$, we obtain
    \begin{align*}
        \hspace{2em}&\hspace{-2em}
        A_r f_{S_r}(\xx^r) + a_{r+1} f_{S_r} (\xx^\star) 
        + \frac{B_r}{2} \norm{ \vv^r - \xx^\star }^2
        \\
        &\ge A_{r+1} \AvgSr f_i (\xx_{i,r+1}) 
        + \frac{\mu a_{r+1}}{2} \AvgSr \norm{ \xx_{i,r+1} - \vv^{r+1} }^2 
        + \frac{B_r}{2} \norm{ \vv^r - \vv^{r+1} }^2
        \\
        &\qquad
        +
        \AvgSr \lin{\nabla f_i(\xx_{i,r+1}), A_r \xx^r + a_{r+1} \vv^{r+1} - A_{r+1}\xx_{i,r+1}} + \frac{B_{r+1}}{2} \norm{ \vv^{r+1} - \xx^\star }^2 .
    \end{align*}
    Recall that $\yy^r = \frac{A_r}{A_{r+1}} \xx^r + \frac{a_{r+1}}{A_{r+1}} \vv^r$. Therefore,
    \begin{align*}
        \hspace{2em}&\hspace{-2em}
        \frac{B_r}{2} \norm{ \vv^r - \vv^{r+1} }^2
        +
        \AvgSr \lin{\nabla f_i(\xx_{i,r+1}), A_r \xx^r + a_{r+1} \vv^{r+1} - A_{r+1}\xx_{i,r+1}}
        \nonumber
        \\
        &=
        \frac{B_r}{2} \norm{ \vv^r - \vv^{r+1} }^2
        +
        a_{r+1} \Bigl\langle \AvgSr \nabla f_i(\xx_{i,r+1}), \vv^{r+1} - \vv^r \Bigr\rangle 
        \\
        &\qquad
        +
        A_{r+1} \AvgSr
        \lin{\nabla f_i(\xx_{i,r+1}), \yy^r - \xx_{i,r+1}}
        \\
        &\stackrel{\eqref{eq:BasicInequality1}}{\ge}
        -\frac{a_{r+1}^2}{2 B_r}
        \Bigl\lVert \AvgSr \nabla f_i(\xx_{i,r+1}) \Bigr\rVert^2
        +
        A_{r+1} \AvgSr
        \lin{\nabla f_i(\xx_{i,r+1}), \yy^r - \xx_{i,r+1}}.
    \end{align*}
    Substituting this lower bound, using convexity of $f_i$
    and dropping the non-negative 
    $\frac{\mu a_{r+1}}{2} \AvgSr \norm{ \xx_{i,r+1} - \vv^{r+1} }^2$, we further get
    \begin{align*}
        \hspace{2em}&\hspace{-2em}
        A_r f_{S_r}(\xx^r) + a_{r+1} f_{S_r} (\xx^\star) 
        + \frac{B_r}{2} \norm{ \vv^r - \xx^\star }^2
        \\
        &\stackrel{\eqref{df:stconvex}}{\ge} A_{r+1} 
        \AvgSr
        [f_i (\xx^{r+1}) 
        + \lin{\nabla f_i(\xx^{r+1}), \xx_{i,r+1} - \xx^{r+1}} ]
        + \frac{B_{r+1}}{2} \norm{ \vv^{r+1} - \xx^\star }^2
        \\
        &\qquad
        + A_{r+1} \AvgSr 
        \lin{\nabla f_i(\xx_{i,r+1}), \yy^r - \xx_{i,r+1}}
        -\frac{a_{r+1}^2}{2 B_r}
        \norm[\Big]{ \AvgSr \nabla f_i(\xx_{i,r+1}) }^2.
    \end{align*}
    Denote $h_i^r \defeq f_{S_r} - f_i$. 
    Substituting
    \[
        \sum_{i \in S_r} \lin{\nabla f_i(\xx^{r+1}), \xx_{i,r+1} - \xx^{r+1}}
        =
        \sum_{i \in S_r} \lin{-\nabla h_i^r (\xx^{r+1}), \xx_{i,r+1} - \yy^r}
    \]
    into the previous display, we get
    \begin{align*}
        \hspace{2em}&\hspace{-2em}
        A_r f_{S_r}(\xx^r) + a_{r+1} f_{S_r} (\xx^\star) 
        + \frac{B_r}{2} \norm{ \vv^r - \xx^\star }^2
        \\
        &\ge A_{r+1}  
        f_{S_r} (\xx^{r+1}) 
        + \frac{B_{r+1}}{2} \norm{ \vv^{r+1} - \xx^\star }^2
        \\
        &\qquad
        +
        A_{r+1} \AvgSr \lin{\nabla f_i (\xx_{i,r+1}) + \nabla h_i^r (\xx^{r+1}), \yy^r - \xx_{i,r+1}}
        -\frac{a_{r+1}^2}{2 B_r}
        \Bigl\lVert \AvgSr \nabla f_i(\xx_{i,r+1}) \Bigr\rVert^2.
    \end{align*}

    We now apply \cref{thm:ErrorLowerBound} (with 
    $\xx_i = \xx_{i,r+1}$, $\vv = \yy^r$, $S=S_r$ and 
    $\xx = \xx^{r+1}$) to get
    \begin{multline*}
        \AvgSr \lin{\nabla f_i (\xx_{i,r+1}) + \nabla h_i^r (\xx^{r+1}), \yy^r - \xx_{i,r+1}}
        - 
        \frac{1}{2\lambda} \Bigl\lVert \AvgSr \nabla f_i(\xx_{i,r+1}) \Bigr\rVert^2
        \\
        \ge 
        \frac{\lambda - \delta_s}{2} \AvgSr 
         \norm{ \yy^r - \xx_{i,r+1} }^2
         - \frac{1}{\lambda} 
         \AvgSr \norm{ \nabla F_{i,r} (\xx_{i,r+1}) }^2 .
    \end{multline*}
    Substituting this lower bound into the previous display and 
    using $A_{r+1} = \frac{a_{r+1}^2 \lambda}{B_r}$, we get the claim.
\end{proof}

\subsection{Full Client Participation (Proof of \cref{thm:MainThm-Acc-DANE})}

\begin{proof}
    Applying \cref{thm:OneStepRecurrence2} 
    and using $\sum_{i=1}^n \norm{ \nabla F_{i,r}(\xx_{i,r+1}) }^2
    \le \delta^2
    \sum_{i=1}^n \norm{ \xx_{i,r+1} - \yy^r }^2$
    and $\lambda = 2\delta$, for any $r \ge 0$,
    we have
    \begin{align*}
        \hspace{2em}&\hspace{-2em}
        A_r f(\xx^r) + a_{r+1} f^{\star} 
        + \frac{B_r}{2} \norm{ \vv^r - \xx^\star }^2
        \\
        &\ge A_{r+1} f (\xx^{r+1}) 
        + \frac{B_{r+1}}{2} \norm{ \vv^{r+1} - \xx^\star }^2
        + A_{r+1} \Bigl(  
        \frac{\lambda - \delta}{2}- \frac{\delta^2}{\lambda} \Bigr)
        \AvgSr \norm{ \xx_{i,r+1} - \yy^r }^2
        \\
        &=
        A_{r+1} f (\xx^{r+1}) 
        + \frac{B_{r+1}}{2} \norm{ \vv^{r+1} - \xx^\star }^2
        .
    \end{align*}
    
    Subtracting $A_{r+1} f^{\star}$ on both sides, 
    we get
    \begin{equation*}
        A_{r+1} [f (\xx^{r+1}) - f^{\star}]
        + \frac{B_{r+1}}{2} \norm{ \vv^{r+1} - \xx^\star }^2
        \leq
        A_r [f(\xx^r) - f^{\star}]
        + \frac{B_r}{2} \norm{ \vv^r - \xx^\star }^2.
    \end{equation*}
    Recursively applying the previous display from $r = 0$ to $r = R-1$, we get
    \begin{equation*}
    \begin{split}
        A_R [f(\xx^R) - f^{\star}]
        + \frac{B_R}{2} \norm{ \vv^R - \xx^\star }^2
        &\le  A_{0} [f (\xx^{0}) - f^{\star}]
        + \frac{1}{2} \norm{ \vv^{0} - \xx^\star }^2
        =\frac{1}{2} \norm{ \xx^0 - \xx^\star }^2 .
    \end{split}
    \end{equation*}
    It remains to apply \cref{thm:GrowthOfAr} and plug in the estimation of the growth of $A_R$.
\end{proof}

\begin{corollary}
    \label{thm:Corollary-Acc-S-DANE}
    Under the same setting as in \cref{thm:MainThm-Acc-DANE}, 
    to achieve $f(\xx^R) - f^{\star} \le \epsilon$,
    we need at most the following number of communication rounds:
    \[
        R = \cO \Biggl(
        \sqrt{\frac{\delta + \mu}{\mu}} \log\biggl(1 + \sqrt{\frac{\min\{\mu,\delta\} D^2}{\epsilon}} \biggr) \Biggr).
    \]
\end{corollary}
\begin{proof}
    When $\mu \le 8\delta$, by using
    \[
        \Bigl( 1+\sqrt{\frac{\mu}{8 \delta}} \Bigr)^R 
        - \Bigl(1-\sqrt{\frac{\mu}{8 \delta}} \Bigr)^R  
        \ge \Bigl( 1+\sqrt{\frac{\mu}{8 \delta}} \Bigr)^R  - 1
        \ge
        \exp\Bigl(\frac{\sqrt{\mu} R}{\sqrt{8\delta} + \sqrt{\mu}} \Bigr) - 1,
    \]
    we get
    \begin{equation*}
        f(\xx^R) - f^\star
        \le 
        \frac{2 \mu D^2}{\bigl[
        \bigl( 1+\sqrt{\frac{\mu}{8 \delta}} \bigr)^R 
        - \bigl(1-\sqrt{\frac{\mu}{8 \delta}} \bigr)^R \bigr]^2}
        \le 
        \frac{2 \mu D^2}{\bigl[\exp(\frac{\sqrt{\mu} R}{\sqrt{8\delta} + \sqrt{\mu}}) - 1 \bigr]^2}.
    \end{equation*}
    Making the right-hand side $\leq \epsilon$ and rearranging, we get the claim.
    When $\mu \ge 8\delta$, it suffices to ensure that $\frac{\delta D^2}{4^{R-2}}\le \epsilon$.
\end{proof}

\subsection{Partial Client Participation}
\label{sec:Acc-S-DANE-sampling}

It is well known that accelerated stochastic gradient methods are not able to improve the complexity in the stochastic part compared with the basic methods~\citep{Devolder-11-StochasticFirstOrder}. A similar result is also shown for our accelerated distributed method. 

\begin{theorem}
    \label{thm:ADPP-MainThm-Sampling-Appendix}
    Consider \cref{Alg:ADPP} under the same setting as in \cref{thm:S-DANE-AppendixThm-Sampling}.
    Let
    \[
        \lambda = \Theta\biggl((\delta_s + \Delta_s) + \frac{(n-s) R}{s (n-1)} \frac{\zeta^2}{\epsilon} \biggr)
    \]
    and suppose that, for any $r \ge 0$, we have
    \[
        \AvgSr \E_{\xi_{i,r}}[\norm{ \nabla F_{i,r}(\xx_{i,r+1}) }^2]
        \le    
        \cO\biggl( \frac{\lambda^2}{4} \AvgSr \E_{\xi_{i,r}}[\norm{ \xx_{i,r+1} - \vv^r }^2] + \frac{\lambda \epsilon}{4 R} \biggr).
    \]
    Denote $D \defeq \norm{\xx^0 - \xx^\star}$.
    Then, to ensure that $\E[f(\xx^R)] - f^{\star} \le \epsilon$ for some $\epsilon > 0$, we need to perform at most the following number of communication rounds:
    \begin{align*}
        R &=
        \Theta\biggl(
            \frac{\sqrt{\delta_s + \Delta_s}+\sqrt{\mu}}{\sqrt{\mu}}
            \log\biggl(1 + 
            \sqrt{ \frac{\min\{\mu,\lambda\} D^2}{\epsilon}} \biggr)
            +
            \frac{n-s}{n-1} \frac{\zeta^2}{s \epsilon \mu} 
            \log^2\biggl( 1 + 
            \sqrt{\frac{\min\{\mu,\lambda\} D^2}{\epsilon}} \biggr)
        \biggr)
        \\
        &\le
        \Theta\biggl(
            \sqrt{\frac{(\delta_s + \Delta_s) D^2}{\epsilon}}
            +
            \frac{n-s}{n-1}\frac{\zeta^2 D^2}{s \epsilon^2}
        \biggr).
    \end{align*}
\end{theorem}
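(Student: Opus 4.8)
The strategy is to upgrade the one-step recurrence for \algname{Acc-S-DANE} (Lemma~\ref{thm:OneStepRecurrence2}) to the sampled setting by the same device used to pass from Theorem~\ref{thm:MainThm-Acc-DANE} to Theorem~\ref{thm:S-DANE-AppendixThm-Sampling}, namely by controlling the gap between $f_{S_r}$ and $f$ via Lemma~\ref{thm:FunctionDiff-Lowerbound}. Fix a round $r$ and condition on all randomness before it, so that $\xx^r,\vv^r,\yy^r,A_r,B_r$ are deterministic while $S_r$ and the local noise $\xi_r$ are fresh. Taking $\E_{S_r}$ in Lemma~\ref{thm:OneStepRecurrence2} replaces $f_{S_r}(\xx^r)$ by $f(\xx^r)$ and $a_{r+1}f_{S_r}(\xx^\star)$ by $a_{r+1}f(\xx^\star)$ at no cost. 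The only term that does not directly un-sample is $A_{r+1}f_{S_r}(\xx^{r+1})$, for which I apply Lemma~\ref{thm:FunctionDiff-Lowerbound} with $\yy=\yy^r$ and $\xx_S=\xx^{r+1}$, and then \eqref{eq:AverageOfSquaredDifference} to bound $\norm{\xx^{r+1}-\yy^r}^2\le\AvgSr\norm{\xx_{i,r+1}-\yy^r}^2$; this costs a variance term $A_{r+1}\tfrac{n-s}{n-1}\tfrac{\gamma\zeta^2}{2s}$ and an extra coefficient $A_{r+1}\big(\tfrac{1}{2\gamma}+\tfrac{\Delta_s}{2}\big)$ in front of $\AvgSr\norm{\xx_{i,r+1}-\yy^r}^2$, with $\gamma>0$ a free parameter.

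Next I take $\E_{\xi_r}$ and use the accuracy condition, which after averaging over the devices absorbs $\tfrac{A_{r+1}}{\lambda}\AvgSr\norm{\nabla F_{i,r}(\xx_{i,r+1})}^2$ into $\cO(A_{r+1}\lambda)\AvgSr\norm{\xx_{i,r+1}-\yy^r}^2+\cO(\tfrac{A_{r+1}\epsilon}{R})$. Choosing $\lambda=\Theta\big(\delta_s+\Delta_s+\tfrac{1}{\gamma}\big)$ makes the net coefficient of $A_{r+1}\AvgSr\norm{\xx_{i,r+1}-\yy^r}^2$, namely $\tfrac{\lambda-\delta_s}{2}-\tfrac{\Delta_s}{2}-\tfrac{1}{2\gamma}-\cO(\lambda)$, non-negative, so that term is dropped. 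Taking full expectations leaves the clean weighted recurrence
\begin{equation*}
  A_{r+1}\big(\E[f(\xx^{r+1})]-f^\star\big)+\tfrac{B_{r+1}}{2}\E[\norm{\vv^{r+1}-\xx^\star}^2]
  \le A_r\big(\E[f(\xx^r)]-f^\star\big)+\tfrac{B_r}{2}\E[\norm{\vv^r-\xx^\star}^2]+A_{r+1}e ,
\end{equation*}
with $e:=\tfrac{n-s}{n-1}\tfrac{\gamma\zeta^2}{2s}+\cO(\tfrac{\epsilon}{R})$. Telescoping from $r=0$ to $R-1$ with $A_0=0$, $B_0=1$, $\vv^0=\xx^0$, using $\sum_{r=1}^{R}A_r\le R\,A_R$ ($A_r$ non-decreasing), and dividing by $A_R$ gives $\E[f(\xx^R)]-f^\star\le\tfrac{D^2}{2A_R}+eR$.

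It remains to choose parameters and lower-bound $A_R$. Taking $\gamma=\Theta\big(\tfrac{s(n-1)\epsilon}{(n-s)\zeta^2 R}\big)$ forces $eR=\Theta(\epsilon)$ and turns $\lambda=\Theta\big(\delta_s+\Delta_s+\tfrac1\gamma\big)$ into $\lambda=\Theta\big(\delta_s+\Delta_s+\tfrac{(n-s)R}{s(n-1)}\tfrac{\zeta^2}{\epsilon}\big)$, exactly the stated value. Since $a_{r+1}$ solves $\lambda=\tfrac{A_{r+1}B_r}{a_{r+1}^2}$ and $B_r=1+\mu A_r$, one has $A_{r+1}=\tfrac{\lambda a_{r+1}^2}{1+\mu A_r}$, so Lemma~\ref{thm:GrowthOfAr} with $c=\lambda$ (valid since $\mu\le4\lambda$) yields $A_R\gtrsim\tfrac1{4\mu}\big[(1+\sqrt{\mu/4\lambda})^R-(1-\sqrt{\mu/4\lambda})^R\big]^2\ge R^2/(4\lambda)$. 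Imposing $\tfrac{D^2}{2A_R}\le\tfrac\epsilon2$ then gives, when $\mu=0$, $R\gtrsim\sqrt{\lambda D^2/\epsilon}$, which after substituting $\lambda$ is $\sqrt{(\delta_s+\Delta_s)D^2/\epsilon}+\tfrac{n-s}{n-1}\tfrac{\zeta^2D^2}{s\epsilon^2}$; and when $\mu>0$, $R\gtrsim\sqrt{\lambda/\mu}\,\ln\!\big(1+\sqrt{\min\{\mu,\lambda\}D^2/\epsilon}\big)$, an inequality that is implicit in $R$ (through $\lambda$) and whose solution produces both $\sqrt{(\delta_s+\Delta_s)/\mu}\,\ln(\cdot)$ and, from the $\zeta^2$ part of $\lambda$, $\tfrac{n-s}{n-1}\tfrac{\zeta^2}{s\epsilon\mu}\ln^2(\cdot)$, as claimed.

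The main obstacle is that each round's sampling/inexactness error enters with weight $A_{r+1}$ rather than $a_{r+1}$, so after telescoping it is inflated by $\sum_{r\le R}A_r/A_R$, which the crude non-decreasing bound estimates by $R$. This is precisely why both $\lambda$ and the accuracy budget $\tfrac{\lambda\epsilon}{4R}$ must carry an extra $R$, and it is the source of the $\ln^2$ factor and of the non-improvable $\cO(\zeta^2/\epsilon^2)$ stochastic term for $\mu=0$. Secondary care is needed in separating the two independent randomness sources ($S_r$ and the local solvers $\xi_{i,r}$) and in picking $\gamma$ to balance the two terms of Lemma~\ref{thm:FunctionDiff-Lowerbound}.
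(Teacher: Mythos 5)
Your proposal is correct and follows essentially the same route as the paper's proof: the one-step recurrence of Lemma~\ref{thm:OneStepRecurrence2}, Lemma~\ref{thm:FunctionDiff-Lowerbound} with a free parameter $\gamma$ plus~\eqref{eq:AverageOfSquaredDifference}, the expectation over $S_r$ and $\xi_r$ with the accuracy condition, dropping the $\AvgSr\norm{\xx_{i,r+1}-\yy^r}^2$ term via the choice of $\lambda$, telescoping with $\sum_{r\le R}A_r\le R A_R$ and per-round budget $\epsilon/R$, and Lemma~\ref{thm:GrowthOfAr} with $c=\lambda$ followed by solving the implicit inequality in $R$. The only minor omission is that you treat only the regime $\mu\le 4\lambda$, whereas the paper also invokes the second branch of Lemma~\ref{thm:GrowthOfAr} for $\mu\ge 4\lambda$; this does not affect the argument in the stated regime.
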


The error term that depends on $\zeta^2$ and $\epsilon$ is at the same scale as \algname{S-DANE}, i.e. $\cO(\frac{\zeta^2}{\epsilon})$ when $\mu > 0$ and 
$\cO(\frac{\zeta^2}{\epsilon^2})$ when $\mu = 0$.
Nevertheless, when $s$ is large enough such that 
this second error becomes no larger than the first optimization term, then \algname{Acc-S-DANE} can still be faster than \algname{S-DANE}. 

\paragraph{Proof of \cref{thm:ADPP-MainThm-Sampling-Appendix}.}

\begin{proof}
    According to \cref{thm:OneStepRecurrence2}, we have, for any $r \ge 0$,
    \begin{multline*}
        A_r f_{S_r}(\xx^r) + a_{r+1} f_{S_r} (\xx^\star) 
        + \frac{B_r}{2} \norm{ \vv^r - \xx^\star }^2
        \\
        \ge
        \begin{multlined}[t]
        A_{r+1} f_{S_r} (\xx^{r+1}) 
        + \frac{B_{r+1}}{2} \norm{ \vv^{r+1} - \xx^\star }^2
        \\
        + A_{r+1} \frac{\lambda - \delta_s}{2} 
        \AvgSr \norm{ \xx_{i,r+1} - \yy^r }^2
        -A_{r+1} \frac{1}{\lambda} \AvgSr \norm{ \nabla F_{i,r}(\xx_{i,r+1}) }^2 .
        \end{multlined}
    \end{multline*}
   According to \cref{thm:FunctionDiff-Lowerbound} (with $S = S_r$, $\xx = \xx^{r+1}$
    and $\yy = \yy^r$),
    for any $\gamma > 0$, we have
    \allowdisplaybreaks{
    \begin{align*}
        \E_{S_r}\bigl[ 
        f(\xx^{r+1})
        -
        f_{S_r}(\xx^{r+1})
        \bigr]
        &\le
        \frac{n-s}{n-1} \frac{\gamma \zeta^2}{2 s} 
        +
        \Bigl( \frac{1}{2 \gamma} + \frac{\Delta_s}{2} \Bigr)
        \E_{S_r} [\norm{ \xx^{r+1} - \yy^r }^2]
        \\
        &\stackrel{\eqref{eq:AverageOfSquaredDifference}}{\le}
        \frac{n-s}{n-1}\frac{\gamma \zeta^2}{2 s} 
        +
        \Bigl( \frac{1}{2 \gamma} + \frac{\Delta_s}{2} \Bigr)
        \E_{S_r} \Bigl[\AvgSr \norm{ \xx_{i,r+1} - \yy^r }^2 \Bigr] 
        .
    \end{align*}}

    Adding $A_{r+1} f(\xx^{r+1})$ to both sides of 
    the first display, taking the expectation over $S_r$ on both sides, substituting the previous upper bound,
    and setting $\gamma = \frac{s (n-1) \epsilon^{\prime}}{2 \zeta^2 (n-s)}$ with $\epsilon^{\prime} > 0$,
    we get
    \begin{align*}
        \hspace{2em}&\hspace{-2em}
        A_r f (\xx^r) + a_{r+1} f^{\star} 
        + \frac{B_r}{2} \norm{ \vv^r - \xx^\star }^2
        \\
        &\ge A_{r+1} \E_{S_r} [ f (\xx^{r+1}) ] 
        + \frac{B_{r+1}}{2} \E_{S_r} [ \norm{ \vv^{r+1} - \xx^\star }^2 ]
        \\
        &\qquad
        +A_{r+1}\Bigl( 
        \frac{\lambda}{2} - \frac{\delta_s + \Delta_s}{2} - \frac{1}{2\gamma}
        \Bigr) \E_{S_r} \Bigl[ 
        \AvgSr \norm{ \xx_{i,r+1} - \yy^r }^2 \Bigr]
        \\
        &\qquad
        -\frac{A_{r+1}}{4}\epsilon^{\prime}
        - \frac{A_{r+1}}{\lambda} 
        \E_{S_r}\Bigl[
        \AvgSr \norm{ \nabla F_{i,r}(\xx_{i,r+1}) }^2
        \Bigr].
    \end{align*}
    Denote all the randomness $\{\xi_{i,r}\}_{i \in S_r}$ by $\xi_r$. 
    Since $\xi_{i,r}$ is independent of the choice of $S_r$ for any $i \in [n]$, taking the expectation over $\xi_r$ on both sides of the previous display and using the assumption that 
    $\E_{S_r, \xi_r} \Bigl[
    \AvgSr \norm{ \nabla F_{i,r}(\xx_{i,r+1}) }^2 \Bigr]
    \le    \E_{S_r, \xi_r} \Bigl[
    \frac{\lambda^2}{4} \AvgSr \norm{ \xx_{i,r+1} - \yy^r }^2 \Bigr] + \frac{\lambda \epsilon^{\prime}}{4}$ ,
    we obtain
    \begin{align*}
        \hspace{2em}&\hspace{-2em}
        A_r f (\xx^r) + a_{r+1} f^{\star} 
        + \frac{B_r}{2} \norm{ \vv^r - \xx^\star }^2
        \\
        &\ge A_{r+1} \E_{S_r,\xi_r} [ f (\xx^{r+1}) ] 
        + \frac{B_{r+1}}{2} \E_{S_r, \xi_r}[ \norm{ \vv^{r+1} - \xx^\star }^2 ]
        \\
        &\qquad
        +
        A_{r+1}\Bigl( 
        \frac{\lambda}{4} - \frac{\delta_s + \Delta_s}{2} - \frac{1}{2\gamma}
        \Bigr) \E_{S_r, \xi_r} \Bigl[ 
        \AvgSr \norm{ \xx_{i,r+1} - \yy^r }^2 \Bigr]
        -\frac{A_{r+1}}{2}\epsilon^{\prime}.
    \end{align*}
    
    By choosing $\lambda = \frac{4\zeta^2 (n-s)}{s (n-1) \epsilon^{\prime}} + 2 (\delta_s + \Delta_s)$, we have that 
    $\frac{\lambda}{4}
        - \frac{(\delta_s + \Delta_s)}{2}
        - \frac{1}{2\gamma} \ge 0$.
        Taking the full expectation on both sides, we get
    \begin{equation*}
    \begin{split}
        A_r \E[f (\xx^r)] + a_{r+1} f^{\star} 
        + \frac{B_r}{2} \E[ \norm{ \vv^r - \xx^\star }^2 ]
        \ge A_{r+1} \E [ f (\xx^{r+1}) ] 
        + \frac{B_{r+1}}{2} \E[ \norm{ \vv^{r+1} - \xx^\star }^2 ]
        -\frac{A_{r+1}}{2}\epsilon^{\prime}
        .
    \end{split}
    \end{equation*}
    Subtracting $A_{r+1} f(\xx^{\star})$ on both sides
    , 
    summing up from $r = 0$ to $r = R-1$ and using the fact that $A_0 = 0$, $\vv_0 = \xx_0$ and $B_0 = 1$, we get
    \begin{equation*}
        A_R \E[f(\xx^R) - f^{\star}]
        + \frac{B_R}{2} \E[\norm{\vv^R - \xx^\star}^2]
        \le \frac{1}{2} \norm{ \xx^0 - \xx^\star}^2 
        + \frac{\epsilon^{\prime}}{2} \sum_{r=1}^R A_r.
    \end{equation*}
    Dividing both sides by $A_R$, 
    setting $\epsilon^{\prime} = \frac{\epsilon}{R}$ and using the fact that the sequence $\{A_r\}$ is non-decreasing, we get
    \begin{equation*}
        \E[f(\xx^R)] - f^{\star}
        \le \frac{1}{2 A_R} \norm{ \xx^0 - \xx^\star }^2
        +\frac{\epsilon}{2} .
    \end{equation*}
    We now apply \cref{thm:GrowthOfAr} with $c = \lambda$ to get
    \[
    A_R \ge \frac{[(1+ \sqrt{\frac{\mu}{4\lambda}})^R
    - (1 - \sqrt{\frac{\mu}{4\lambda}})^R]^2}{4\mu}
    \ge \frac{[ (1+ \sqrt{\frac{\mu}{4\lambda}})^R - 1 ]^2}{4 \mu} \ge \frac{\bigl[ \exp\bigl(\frac{\sqrt{\mu} R}{\sqrt{4\lambda} + \sqrt{\mu}} \bigr) - 1 \bigr]^2}{4 \mu}
    \]
    when $\mu \le 4\lambda$,
    and $A_R \ge \frac{1}{4\lambda}(1 + \sqrt{\frac{\mu}{4\lambda}})^{2(R-1)}$
    when $\mu \ge 4\lambda$.
    Letting these lower bounds be larger than $\frac{\norm{ \xx^0 - \xx^\star }^2}{\epsilon}$, we get
    \begin{equation*}
        R = \Omega\Biggl( \frac{\sqrt{\mu} + \sqrt{\lambda}}{\sqrt{\mu}} \log\biggl(1 + \sqrt{\frac{\min\{\mu, \lambda\} \norm{ \xx_0 - \xx^\star }^2}{\epsilon}} \biggr) \Biggr)  .
    \end{equation*}
    Plugging $\lambda = \Omega\bigl(\frac{\zeta^2 (n-s)R}{s n \epsilon} + (\delta_s + \Delta_s) \bigr)$ into the last display and rearranging, we get the condition for $R$. 
\end{proof}

\subsubsection{Stochastic Local Solver}

\begin{corollary}
    Consider \cref{Alg:ADPP} under the same settings as in \cref{thm:ADPP-MainThm-Sampling-Appendix}. Consider the same stochastic local solver used in \cref{thm:S-DANE-StochasticLocalComputation}. To achieve 
    \[
    \AvgSr \E_{\xi_{i,r}}[\norm{ \nabla F_{i,r}(\xx_{i,r+1}) }^2]
    \le    
    \cO\biggl(
    \frac{\lambda^2}{4} \AvgSr \E_{\xi_{i,r}}[\norm{ \xx_{i,r+1} - \yy^r }^2] + \frac{\lambda \epsilon}{4R} \biggr),
    \]
    each device $i$ requires at most the following number of 
    stochastic mini-batch oracle calls:
    \begin{equation*}
        K = \Theta\biggl(
        \bigg[
        \frac{L + \lambda}{\mu + \lambda}
        +
        \frac{(L + \lambda)\sigma^2 R}{(\mu + \lambda)\lambda \epsilon}
        \biggr] \log \frac{L + \lambda}{\lambda}
        \biggr).
    \end{equation*}
    \label{thm:Acc-S-DANE-Local-Estimate-Inexact}
\end{corollary}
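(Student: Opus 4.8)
The plan is to follow the proof of Corollary~\ref{thm:S-DANE-StochasticLocalComputation} essentially line by line, tracking the two cosmetic differences between the two settings: in Algorithm~\ref{Alg:ADPP} the subproblem $F_{i,r}$ has prox-center $\yy^r$ (not $\vv^r$), so the local \algname{SGD} solver is initialized at $\zz_0 = \yy^r$; and the tolerated additive error is $\lambda\epsilon/(4R)$ (not $\lambda\epsilon/4$). First I would reduce the averaged accuracy requirement to a per-device one: since the target is an average over $i \in S_r$, it suffices that for every $i \in S_r$, $\E_{\xi_{i,r}}[\norm{\nabla F_{i,r}(\xx_{i,r+1})}^2] \le \cO\bigl(\lambda^2\, \E_{\xi_{i,r}}[\norm{\xx_{i,r+1} - \yy^r}^2] + \lambda\epsilon/R\bigr)$. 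Next, as in the non-accelerated case, I would re-express the right-hand side in terms of the exact minimizer $\xx_{i,r}^\star := \argmin_{\xx} F_{i,r}(\xx)$: because $F_{i,r}$ is $(\mu+\lambda)$-convex, hence $\lambda$-convex, \eqref{eq:NormGradConvex} gives $\norm{\xx_{i,r+1} - \yy^r} \ge \norm{\yy^r - \xx_{i,r}^\star} - \tfrac{1}{\lambda}\norm{\nabla F_{i,r}(\xx_{i,r+1})}$, whence $\norm{\yy^r - \xx_{i,r}^\star}^2 \le \tfrac{2}{\lambda^2}\norm{\nabla F_{i,r}(\xx_{i,r+1})}^2 + 2\norm{\xx_{i,r+1} - \yy^r}^2$; substituting this and moving a constant fraction of $\E_{\xi_{i,r}}[\norm{\nabla F_{i,r}(\xx_{i,r+1})}^2]$ to the left shows it is enough to guarantee $\E_{\xi_{i,r}}[\norm{\nabla F_{i,r}(\xx_{i,r+1})}^2] \le c_1 \lambda^2 \norm{\yy^r - \xx_{i,r}^\star}^2 + c_2 \lambda\epsilon/R$ for absolute constants $c_1,c_2 > 0$.

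Then I would invoke the regularity of the subproblem: $F_{i,r}$ is $(L+\lambda)$-smooth and $(\mu+\lambda)$-convex, so running mini-batch \algname{SGD} with stepsize $1/H$ for $K$ steps from $\zz_0 = \yy^r$ and returning the weighted average $\bar\zz_K$, Lemma~\ref{thm:SGD-ConvexSmooth} together with $\norm{\nabla F_{i,r}(\bar\zz_K)}^2 \le 2(L+\lambda)\bigl(F_{i,r}(\bar\zz_K) - F_{i,r}^\star\bigr)$ yields
\[
\E_{\xi_{i,r}}\bigl[\norm{\nabla F_{i,r}(\xx_{i,r+1})}^2\bigr]
\le 2(L+\lambda)\Bigl(\tfrac{(\mu+\lambda)\norm{\yy^r - \xx_{i,r}^\star}^2}{2[\exp((\mu+\lambda)K/H) - 1]} + \tfrac{\sigma^2}{2(H - L - \lambda)}\Bigr).
\]
I would then choose $H = (L+\lambda) + \Theta\bigl(\tfrac{(L+\lambda)\sigma^2 R}{\lambda\epsilon}\bigr)$, which makes the variance term equal to $c_2\lambda\epsilon/R$, and force the bias term below $c_1\lambda^2\norm{\yy^r - \xx_{i,r}^\star}^2$; this requires $\exp((\mu+\lambda)K/H) - 1 \gtrsim \tfrac{(L+\lambda)(\mu+\lambda)}{\lambda^2}$, i.e.\ $K = \Theta\bigl(\tfrac{H}{\mu+\lambda}\ln(1 + \tfrac{(L+\lambda)(\mu+\lambda)}{\lambda^2})\bigr)$. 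Substituting the chosen $H$ and using $\mu \le \cO(\delta) \le \cO(\lambda)$ (so $\mu+\lambda = \Theta(\lambda)$) and $\delta_s,\Delta_s \le L$ to simplify the logarithm to $\Theta(\ln(1 + L/\lambda))$ gives $K = \Theta\bigl(\bigl(\tfrac{L+\lambda}{\mu+\lambda} + \tfrac{(L+\lambda)\sigma^2 R}{(\mu+\lambda)\lambda\epsilon}\bigr)\ln(1 + L/\lambda)\bigr)$, which is the claim.

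The hard part is purely bookkeeping rather than conceptual: one must make sure the single extra factor $R$ threads through consistently --- first into the tolerated slack $\lambda\epsilon/(4R)$, then into the \algname{SGD} stepsize $H$, and finally into $K$ --- and that the absolute constants produced by the two reductions (the self-bounding absorption and the triangle inequality relating $\yy^r$, $\xx_{i,r+1}$ and $\xx_{i,r}^\star$) are small enough to sit inside the $\cO(\cdot)$ in the target condition. It is also worth checking that initializing the local solver at $\zz_0 = \yy^r$ is the right choice, so that $\norm{\zz_0 - \xx_{i,r}^\star} = \norm{\yy^r - \xx_{i,r}^\star}$ is precisely the quantity appearing on the right-hand side of the accuracy condition; this holds because the prox-center of $F_{i,r}$ in Algorithm~\ref{Alg:ADPP} is $\yy^r$, so no ingredient beyond the proof of Corollary~\ref{thm:S-DANE-StochasticLocalComputation} is needed.
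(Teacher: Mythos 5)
Your proposal is correct and matches the paper's argument: the paper itself proves this corollary by stating that the proof is identical to that of Corollary~\ref{thm:S-DANE-StochasticLocalComputation}, which is exactly what you carry out, replacing $\vv^r$ by $\yy^r$, threading the slack $\lambda\epsilon/(4R)$ into the stepsize $H=(L+\lambda)+\Theta\bigl(\tfrac{(L+\lambda)\sigma^2 R}{\lambda\epsilon}\bigr)$, and recovering the stated $K$. Your explicit bookkeeping of the triangle-inequality reduction to $\norm{\yy^r-\xx_{i,r}^\star}$ and the absorption constants is simply a spelled-out version of the same proof.
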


\begin{proof}
The proof is the same as that for \cref{thm:S-DANE-StochasticLocalComputation}.  
\end{proof}

\section{Dynamic Estimation of Similarity Constant by Line Search}

\begin{algorithm}[tb]
\begin{algorithmic}[1]
\small\algrenewcommand\alglinenumber[1]{\footnotesize #1:}  
\State {\bfseries Input:} 
$\tilde{\lambda} > 0$, $\mu \ge 0$, 
$\xx^0 = \vv^0 \in \R^d$. Let $h_i \defeq f - f_i$.
\State
Set $\lambda_{0,0} = \tilde{\lambda}$.
\For{$r=0,1,2, \ldots$}
    \For{$k = 0, 1, \ldots$}
        \For{\textbf{each device $i\in [n]$ in parallel}} 
        \State
        $
        \xx_{i,r+1, k}
        \approx
        \argmin_{\xx \in \R^d}
        \bigl\{ F_{i,r,k}(\xx) \defeq f_i(\xx) + \langle \nabla h_i(\vv^r), \xx  \rangle 
        + \frac{\lambda_{r,k}}{2}\norm{\xx-\vv^r}^2 \bigr\}.
        $
        \State 
        (stop running the local solver once $\norm{\nabla F_{i,r,k}(\xx_{i,r+1,k})} \le \frac{\lambda_{r,k}}{2} 
        \norm{\xx_{i,r+1} - \vv^r}$)
        \EndFor 
        \State 
        Aggregate local models: $\xx^{r+1, k} = \Avg \xx_{i,r+1, k}$.
        \If{$\frac{1}{n} \sum\limits_{i = 1}^n \lin{\nabla f_i (\xx_{i,r+1,k}) + \nabla h_i (\xx^{r+1,k}), \vv^r - \xx_{i,r+1,k}}
                \ge 
                \frac{1}{2\lambda_{r,k}} \norm[\big]{\frac{1}{n} \sum\limits_{i = 1}^n \nabla f_i(\xx_{i,r+1,k})}^2$}
            \State 
            $k_r = k$ and \textbf{break} the loop.
        \EndIf
        \State
            $\lambda_{r,k+1} = 2 \lambda_{r,k}$.
    \EndFor
    \State
    $\lambda_r = \lambda_{r, k_r}$, \
    $\xx_{i, r + 1} = \xx_{i, r + 1, k_r}$, \
    $\xx^{r + 1} = \xx^{r + 1, k_r}$, \
    $\lambda_{r+1, 0} = \frac{1}{2} \lambda_r$.
    \State
        $
        \vv^{r+1} = \argmin_{\xx \in \R^d} \bigl\{
            \Avg
            [
            \lin{ \nabla f_i(\xx_{i,r+1}),
            \xx} + \frac{\mu}{2} \norm{\xx - \xx_{i,r+1}}^2 
            ]
            + \frac{\lambda_r}{2} \norm{\xx - \vv^r}^2
        \bigr\}.
        $
\EndFor
\caption{\algname{S-DANE} with line search}
\label{Alg:S-DANE-Line-Search}
\end{algorithmic}
\end{algorithm}

\begin{theorem}
    \label{thm:S-DANE-Line-Search}
    Consider \cref{Alg:S-DANE-Line-Search}.
    Suppose that each function $f_i$ is $\mu$-convex for some $\mu \ge 0$, and $\{f_i\}_{i=1}^n$ have $\delta$-SOD for some $\delta > 0$.
    Let $\tilde{\lambda} \leq 2 \delta$.
    Then, for any $R \geq 1$, it holds that
    \[
        f(\bar{\xx}^R) - f^{\star}
        \leq
        \frac{\mu D^2}{2 [(1 + \frac{\mu}{4 \delta})^R - 1]}
        \leq
        \frac{2 \delta D^2}{R},
    \]
    where $\Bar{\xx}^R \defeq \argmin_{\xx \in \{\xx^1, \ldots, \xx^R \}} f(\xx)$.
    To ensure that $f(\Bar{\xx}^R) - f^\star \le \epsilon$ for any given $\epsilon > 0$,
    it suffices to set
    \[
        R = \Theta\biggl( \frac{\delta + \mu}{\mu} \log\Bigl( 1 + \frac{\mu D^2}{\epsilon} \Bigr) \biggr),
    \]
    where $D \defeq \norm{\xx^0 - \xx^\star}$.
    Furthermore, the total number of communication rounds spent inside the $r$- and $k$-loops since the start of the algorithm and up to the moment $\bar{\xx}^R$ has been computed is
    \[
        \cO(1) \sum_{k = 0}^{R - 1} (k_r + 1)
        \leq
        \cO\biggl( R + \log \frac{2 \delta}{\tilde{\lambda}} \biggr).
    \]
\end{theorem}

\begin{proof}
    According to \cref{thm:ErrorLowerBound} (with 
    $\xx_i = \xx_{i,r+1}$, $\vv = \vv^r$, $S=[n]$ and 
    $\bar{\xx}_S = \xx^{r+1}$) and our requirement on $\norm{\nabla F_{i, r}(\xx_{i, r + 1})}$, whenever $\lambda_{r,k} \ge 2 \delta$, we can estimate
    \begin{align*}
        \hspace{2em}&\hspace{-2em}
        \Avg \lin{\nabla f_i (\xx_{i,r+1}) + \nabla h_i (\xx^{r+1}), \vv^r - \xx_{i,r+1}}
        - 
        \frac{1}{2\lambda_{r,k}} \Bigl\lVert \Avg \nabla f_i(\xx_{i,r+1}) \Bigr\rVert^2
        \\
        &\ge 
        \frac{\lambda_{r,k} - \delta}{2} \Avg 
         \norm{ \vv^r - \xx_{i,r+1} }^2
         - \frac{1}{\lambda_{r,k}} 
         \Avg \norm{\nabla F_{i,r} (\xx_{i,r+1})}^2 
        \\
        &\ge 
        \frac{\lambda_{r,k} - 2\delta}{4} \Avg 
         \norm{ \vv^r - \xx_{i,r+1} }^2 \ge 0 .
    \end{align*}
    Hence, at any iteration of the $r$-loop, the corresponding $k$-loop eventually terminates.
    Further, since $\lambda_{0, 0} \leq 2 \delta$, we can easily prove by induction that the quantities $\lambda_{r, k}$ stay reasonably bounded:
    \begin{equation}
        \label{eq:S-DANE-Line-Search:BoundOnCoefficients}
        \lambda_{r, 0} \leq 2 \delta,
        \quad
        \lambda_r \equiv \lambda_{r, k_r} \leq 4 \delta \eqdef \lambda_{\max},
        \qquad
        \forall r \geq 0.
    \end{equation}
    
    Proceeding exactly in the same way as in the proof of \cref{thm:S-DANE-OneStepRecurrence2} and using the termination condition of the $k$-loop, we conclude that, for any $r \ge 0$, it holds that
    \[
        \frac{1}{\lambda_r} [f(\xx^{r+1}) - f^{\star}]
        + \frac{1 + \mu / \lambda_r}{2} \norm{ \vv^{r+1} - \xx^\star }^2 
        \le 
        \frac{1}{2} \norm{ \vv^r - \xx^\star }^2.
    \]
    In view of \cref{eq:S-DANE-Line-Search:BoundOnCoefficients}, this means that, for any $r \geq 0$,
    \[
        \frac{1}{\lambda_{\max}} [f(\xx^{r+1}) - f^{\star}]
        + \frac{1 + \mu / \lambda_{\max}}{2} \norm{ \vv^{r+1} - \xx^\star }^2 
        \le 
        \frac{1}{2} \norm{ \vv^r - \xx^\star }^2.
    \]
    Following the same proof as in \cref{sec:Proof-S-DANE-Full-Client} but with $\lambda$ replaced by $\lambda_{\max}$, we obtain the first two claims.
    
    It remains to estimate the total number of communication rounds required to construct the point~$\bar{\xx}^R$.
    In order to carry out the $k$-loop, the server needs to compute $\nabla f(\vv^r)$ and send this vector, as well as $\vv^r$ and $\lambda_{r, 0}$, to each client,
    which requires $\cO(1)$ communication rounds.
    Every iteration of the $k$-loop also requires $\cO(1)$ communication rounds, and the total number of such iterations is $k_r$.
    Thus, every iteration of the $r$-loop requires $\cO(k_r + 1)$ communication rounds.
    Furthermore, during the corresponding rounds, the server may also additionally compute the function value $f(\xx^{r + 1})$ needed for updating the output point~$\bar{\xx}^{r + 1}$; this could be done, e.g., inside the $k$-loop, alongside with the computation of the gradient $\nabla f(\xx^{r + 1, k})$ needed to evaluate the ``if'' condition.
    Thus, $\bar{\xx}^R$ can be indeed computed after $\cO(1) \sum_{r = 0}^{R - 1} (k_r + 1)$ communication rounds.
    To estimate the latter sum, observe that, by construction, for any $r \geq 0$, we have
    $\lambda_{r + 1, 0} \equiv \frac{1}{2} \lambda_{r, k_r} = 2^{k_r - 1} \lambda_{r, 0}$.
    Taking logarithms, we see that $k_r = 1 + \log_2 \frac{\lambda_{r + 1, 0}}{\lambda_{r, 0}}$ for any $r \geq 0$.
    Thus,
    \[
        \sum_{r = 0}^{R - 1} (k_r + 1)
        =
        \sum_{r = 0}^{R - 1} \Bigl( 2 + \log_2 \frac{\lambda_{r + 1, 0}}{\lambda_{r, 0}} \Bigr)
        =
        2 R + \log_2 \frac{\lambda_{R, 0}}{\lambda_{0, 0}}
        \leq
        2 R + \log_2 \frac{2 \delta}{\tilde{\lambda}},
    \]
    where the final inequality is due to \cref{eq:S-DANE-Line-Search:BoundOnCoefficients} and our choice of $\lambda_{0, 0}$.
\end{proof}

\begin{algorithm}[tb]
\begin{algorithmic}[1]
\small\algrenewcommand\alglinenumber[1]{\footnotesize #1:}  
\State {\bfseries Input:} 
$\tilde{\lambda} > 0$, $\mu \ge 0$, 
$\xx^0 = \vv^0 \in \R^d$.
Let $h_i = f - f_i$.
\State
Set $A_0 = 0$, $B_0 = 1$, $\lambda_{0,0} = \tilde{\lambda}$.
\For{$r=0,1,2, \ldots$}
    \For{$k = 0, 1, \ldots$}
        \State 
        Find $a_{r+1, k} > 0$ from the equation
            $\lambda_{r,k} = \frac{(A_r + a_{r+1, k})B_r}{a_{r+1, k}^2}$.
        Set $A_{r+1, k} = A_r + a_{r+1, k}$.
        \State
        $\yy^{r, k} = \frac{A_r}{A_{r+1, k}} \xx^r + \frac{a_{r+1, k}}{A_{r+1, k}} \vv^r$.
        \For{\textbf{each device $i\in [n]$ in parallel}} 
        \State
        $
        \xx_{i,r+1, k} 
        \approx
        \argmin_{\xx \in \R^d}
        \bigl\{ F_{i,r, k}(\xx) \defeq f_i(\xx) + \langle \nabla h_i(\yy^{r, k}) , \xx  \rangle 
        + \frac{\lambda_{r,k}}{2}\norm{ \xx-\yy^{r, k} }^2 \bigr\}.
        $
        \State 
        (stop running the local solver once $\norm{\nabla F_{i,r, k}(\xx_{i,r+1, k})} \le \frac{\lambda_{r,k}}{2} 
        \norm{\xx_{i,r+1, k} - \yy^{r, k}}$)
        \EndFor 
        \State Aggregate local models: $\xx^{r+1, k} = \Avg \xx_{i,r+1, k}$.
        \If{$\frac{1}{n} \sum\limits_{i = 1}^n \lin{\nabla f_i (\xx_{i,r+1, k}) + \nabla h_i (\xx^{r+1, k}), \yy^{r, k} - \xx_{i,r+1, k}}
                \ge 
                \frac{1}{2\lambda_{r,k}} \norm[\big]{\frac{1}{n} \sum\limits_{i = 1}^n \nabla f_i(\xx_{i,r+1, k})}^2$}
            \State $k_r = k$ and \textbf{break} the loop.
        \EndIf
        \State $\lambda_{r, k + 1} = 2 \lambda_{r, k}$.
    \EndFor
    \State
    $\lambda_r = \lambda_{r, k_r}$, \
    $\xx_{i, r + 1} = \xx_{i, r + 1, k_r}$, \
    $\xx^{r + 1} = \xx^{r + 1, k_r}$, \
    $a_{r + 1} = a_{r + 1, k_r}$, \
    $\lambda_{r + 1, 0} = \frac{1}{2} \lambda_r$.
    \State
    $A_{r + 1} = A_r + a_{r + 1}$, \
    $B_{r + 1} = B_r + \mu a_{r+1}$.
    \State
    $
    \vv^{r+1} 
    = \argmin_{\xx \in \R^d} 
    \bigl\{ \frac{a_{r+1}}{n} \sum_{i = 1}^n
            [
            \lin{ \nabla f_i(\xx_{i,r+1}), \xx}
            + \frac{\mu}{2} \norm{ \xx - \xx_{i,r+1} }^2  
            ]
            + \frac{B_r}{2} \norm{ \xx - \vv^r }^2 \bigr\}.
    $
\EndFor
\caption{\algname{Acc-S-DANE} with line search}
\label{Alg:ADPP-Line-Search}
\end{algorithmic}
\end{algorithm}

\begin{theorem}
    \label{thm:Acc-S-DANE-Line-Search} 
    Consider \cref{Alg:ADPP-Line-Search}.
    Suppose that each function $f_i$ is $\mu$-convex for some $\mu \ge 0$,
    and $\{f_i\}_{i=1}^n$ have $\delta$-SOD for some $\delta > 0$.
    Let $\tilde{\lambda} \leq 2 \delta$.
    If $\mu \leq 16 \delta$, then, for any $R \geq 1$, it holds that
    \[
        f(\xx^R) - f^{\star}
        \leq
        \frac{2 \mu D^2}{\bigl[ (1 + \sqrt{\frac{\mu}{16 \delta}})^R - (1 - \sqrt{\frac{\mu}{16 \delta}})^R \bigr]^2}
        \leq
        \frac{8 \delta D^2}{R^2},
    \]
    where $D \defeq \norm{\xx^0 - \xx^\star}$.
    Otherwise, $f(\xx^R) - f^{\star} \leq \frac{8 \delta D^2}{(1 + \sqrt{\frac{\mu}{16 \delta}})^{2 (R - 1)}}$ for any $R \geq 1$.
    To ensure that $f(\xx^R) - f^{\star} \leq \epsilon$ for any given $\epsilon > 0$, it suffices to set
    \[
        R
        =
        \Theta\biggl(
            \sqrt{\frac{\delta + \mu}{\mu}}
            \log\Bigl( 1 + \sqrt{\frac{\min\{ \mu, \delta \} D^2}{\epsilon}} \Bigr)
        \biggr).
    \]
    Furthermore, the total number of communication rounds spent inside the $r$- and $k$-loops since the start of the algorithm and up to the moment $\xx^R$ has been computed is
    \[
        \cO(1) \sum_{k = 0}^{R - 1} (k_r + 1)
        \leq
        \cO\biggl( R + \log \frac{2 \delta}{\tilde{\lambda}} \biggr).
    \]
\end{theorem}

\begin{proof}
    Using the same reasoning as in the proof of \cref{thm:S-DANE-Line-Search}, we can justify that, at any iteration of the $r$-loop, the corresponding $k$-loop eventually terminates, and $\lambda_{r, k}$ stays uniformly bounded as in \cref{eq:S-DANE-Line-Search:BoundOnCoefficients}.
    Next, we proceed in the same way as in the proof of \cref{thm:OneStepRecurrence2} and use the termination condition of the $k$-loop to obtain that, for any $r \geq 0$,
    \[
        A_{r+1} [f (\xx^{r+1}) - f^{\star}] + \frac{B_{r+1}}{2} \norm{ \vv^{r+1} - \xx^\star }^2
        \leq
        A_r [f(\xx^r) - f^{\star}] + \frac{B_r}{2} \norm{ \vv^r - \xx^\star }^2.
    \]
    This shows that, for any $R \ge 1$,
    \[
        f(\xx^R) - f^{\star} \le \frac{D^2}{2 A_R}.
    \]
    To estimate the rate of growth of $A_R$, we use the equation for $a_{r + 1} \equiv a_{r + 1, k_r}$ and the bound on $\lambda_r$ from \cref{eq:S-DANE-Line-Search:BoundOnCoefficients}.
    This gives us, for any $r \geq 0$, the following inequality:
    \[
        \frac{A_{r+1} B_r}{a_{r+1}^2} = \lambda_r \le \lambda_{\max},
    \]
    where $B_r \equiv 1 + \mu A_r$.
    Invoking \cref{thm:GrowthOfAr}, we get a lower bound on~$A_R$, and the first claim follows.
    
    The bound on $R$ via $\epsilon$ can be justified by the same argument as in the proof of \cref{thm:Corollary-Acc-S-DANE}.
    
    To estimate the total number of communication rounds, we can follow exactly the same argument as in the proof of \cref{thm:S-DANE-Line-Search}.
\end{proof}

\begin{figure*}[tb!]
    \centering
    \includegraphics[width=0.8\textwidth]{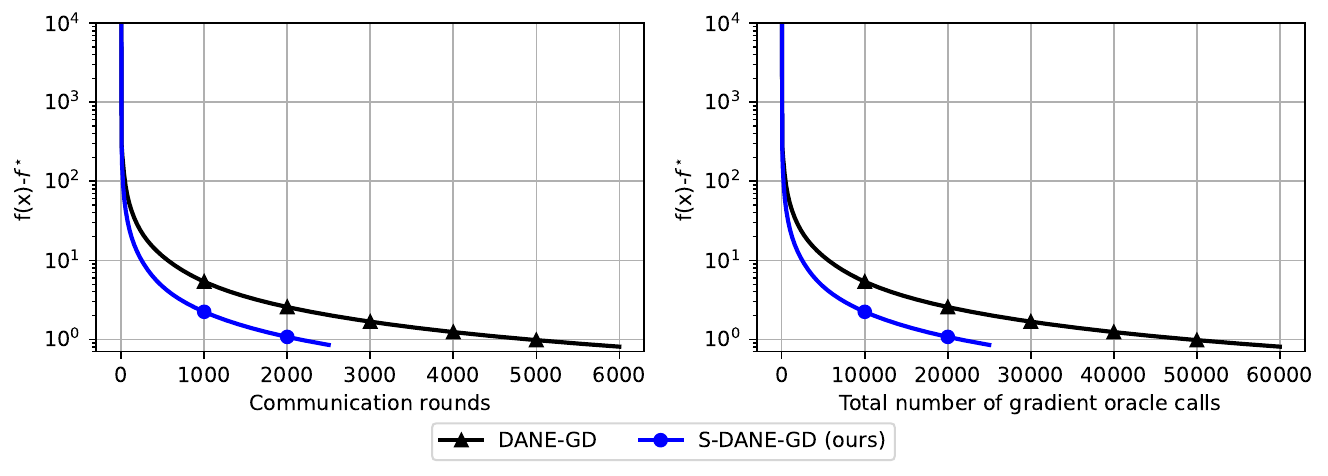}  
    \vspace*{-2mm}
    \caption{Comparison of \algname{S-DANE}  against \algname{DANE} for solving a convex quadratic minimization problem with the same number of local steps. }
    \label{fig:convex-same-local-steps}
\end{figure*}

\begin{algorithm}[tb]
\begin{algorithmic}[1]
\small\algrenewcommand\alglinenumber[1]{\footnotesize #1:} 
\State {\bfseries Input:} 
$\lambda > 0$, $\eta > 0$, $\gamma \in [0,1]$, 
$\xx^0 = \vv^0 \in \R^d$, $s \in [n]$
\For{$r=0,1,2\ldots$}
\State 
Sample $S_r \in \binom{[n]}{s}$ uniformly at random 
without replacement
\For{\textbf{each device $i\in S_r$ in parallel}} 
\State
Set $
\xx_{i,r+1}
\approx
\argmin_{\xx \in \R^d}
\bigl\{ F_{i,r}(\xx) \bigr\} 
$, where
\State \vspace{-2mm}
$$F_{i,r}(\xx) \defeq f_i(\xx) - \langle \xx, \nabla f_i(\vv^r) - \nabla f_{S_r}(\vv^r) \rangle 
+ \frac{\lambda}{2} \norm{ \xx-\vv^r }^2 . \quad (\text{option 1})$$ 
\State \vspace{-2mm}
$$F_{i,r}(\xx) \defeq f_i(\xx) 
+ \frac{\lambda}{2}\norm{ \xx-\vv^r }^2 . \qquad (\text{option 2})$$ 
\EndFor 
\State 
Set
$
\xx^{r+1} = \AvgSr \xx_{i,r+1}
$
\State
Set
$
\vv^{r+1} 
= 
\gamma \xx^{r+1} + (1 - \gamma) \vv^r 
- \eta \AvgSr \nabla f_i (\xx_{i,r+1})
$
\EndFor
\caption{\algname{S-DANE (DL)}}
\label{Alg:S-DANE-DL}
\end{algorithmic}
\end{algorithm}

\section{Additional Details on Experiments}
\label{sec:Exp-Appendix}

\subsection{Convex Quadratics}
\label{sec:cq-details}
We generate random vectors $\{b_{i,j}\}$ and diagonal matrices $\{A_{i,j}\}$ in the same way as in~\citep{fedred} such that 
$\max_{i,j}\{\norm{ A_{i,j} }\} = 100$ and $\delta \approx 5$. We use $n=10$, $m=5$ and $d=1000$. We compare \algname{S-DANE} and \algname{Acc-S-DANE} with \algname{DANE}. We use the standard gradient descent (\algname{GD}) with constant stepsize $\frac{1}{200} \le \frac{1}{2 L}$ for all three methods as the local solver, where $L$ is the smoothness constant of each $f_i$. We use $\lambda = 5$ for all three methods.  We use the stopping criterion $\norm{ \nabla F_{i,r} (\xx_{i,r+1}) } \le \frac{\lambda}{2} \norm{ \xx_{i,r+1} - \vv^r }$ for our methods ($\vv^r$ becomes $\yy^r$ for the accelerated method). We use $\norm{ \nabla \Tilde{F}_{i,r} (\xx_{i,r+1}) } \le \frac{\lambda}{r+1} \norm{ \xx_{i,r+1} - \xx^r }$ for \algname{DANE}.

\subsection{Deep Learning Tasks}
\label{sec:DL-details}
We simulate the experiment on one NVIDIA DGX A100.
We split the training dataset into $n=10$ parts according to the Dirichlet distribution with $\alpha = 0.5$. We use SGD with a batch size of $512$ as a local solver for each device. For all the methods considered in \cref{fig:cifar10}, we choose the best number of local steps among $\{10,20,\ldots80\}$ (for \algname{Scaffnew}, this becomes the inverse of the probability) and the best learning rate among $\{0.02, 0.05, 0.1\}$. 
For this particular task, it is often observed that using control variate makes the training less efficient~\citep{fedpvr}. The possible issue comes from the fact that local smoothness is often much smaller than local dissimilarity for this task. We here remove the control variate term on line 6 in \algname{S-DANE} which is  defined as $\lin{\xx,\nabla f_i(\vv^r) - \nabla f_{S_r}(\vv^r)}$. Moreover, if we write the explicit formula for $\vv^{r+1}$ on line 8, it becomes $\vv^{r+1} = \gamma \xx^{r+1} + (1 - \gamma) \vv^r - \eta \AvgSr \nabla f_i(\xx_{i,r+1})$ with $\gamma \in [0,1]$ and $\eta > 0$. We set $\gamma = 0.99$ and $\eta$ to be the local learning rate in our experiment. The method can be found in \cref{Alg:S-DANE-DL}.
 Note that the only difference between it and \algname{FedProx} is the choice of the prox-center.
 The best number of local steps for the algorithms without using control variates is $70$ while for the others is $10$ (otherwise, the training loss explodes).

\subsection{Implementation} 
To implement \cref{Alg:S-DANE,Alg:ADPP}
(\cref{Alg:S-DANE-DL} with option~2 is the same as \cref{Alg:S-DANE})
, each device has the freedom to employ any efficient optimization algorithm, depending on its computation power and the local data size. At each communication round $r$, these local algorithms are called to approximately solve the sub-problems defined by $\{F_{i,r}\}$, until the gradient norm satisfies a certain accuracy condition that is stated in the corresponding theorems. 
The server only needs to perform basic vector operations. 
Note that $G_r$ defined in those algorithms has a unique solution so that $\vv^{r+1}$ can be explicitly derived (in the same form as line 9 in \cref{Alg:S-DANE-DL}).

\section{Impact Statement}
\label{ImpactStatement}
This paper presents work that aims to advance the field of distributed Machine Learning. There are many potential societal consequences of our work, none of which we feel must be specifically highlighted here

\newpage
\section*{NeurIPS Paper Checklist}

\begin{enumerate}

\item {\bf Claims}
    \item[] Question: Do the main claims made in the abstract and introduction accurately reflect the paper's contributions and scope?
    \item[] Answer: \answerYes{} 
    \item[] Justification: In the abstract and the introduction, we first stately clearly what are goals of this study, which is first to achieve the best-known communication complexity and then to maintain the overall computation efficiency. We then discussed the performance of the previous state-of-the-art algorithms. Finally, we compared our methods with them (for instance in Table~\ref{tab:summary}) and showed what contribution and scope we made in this paper.
    \item[] Guidelines:
    \begin{itemize}
        \item The answer NA means that the abstract and introduction do not include the claims made in the paper.
        \item The abstract and/or introduction should clearly state the claims made, including the contributions made in the paper and important assumptions and limitations. A No or NA answer to this question will not be perceived well by the reviewers. 
        \item The claims made should match theoretical and experimental results, and reflect how much the results can be expected to generalize to other settings. 
        \item It is fine to include aspirational goals as motivation as long as it is clear that these goals are not attained by the paper. 
    \end{itemize}

\item {\bf Limitations}
    \item[] Question: Does the paper discuss the limitations of the work performed by the authors?
    \item[] Answer: \answerYes{} 
    \item[] Justification: In \Cref{conclusion}, we discuss the limitations of our work, in terms of the slightly stronger assumption on $\mu$-convexity and the lack of non-convex analysis. On top of that, we also discuss potential future works building on our results.
    \item[] Guidelines:
    \begin{itemize}
        \item The answer NA means that the paper has no limitation while the answer No means that the paper has limitations, but those are not discussed in the paper. 
        \item The authors are encouraged to create a separate "Limitations" section in their paper.
        \item The paper should point out any strong assumptions and how robust the results are to violations of these assumptions (e.g., independence assumptions, noiseless settings, model well-specification, asymptotic approximations only holding locally). The authors should reflect on how these assumptions might be violated in practice and what the implications would be.
        \item The authors should reflect on the scope of the claims made, e.g., if the approach was only tested on a few datasets or with a few runs. In general, empirical results often depend on implicit assumptions, which should be articulated.
        \item The authors should reflect on the factors that influence the performance of the approach. For example, a facial recognition algorithm may perform poorly when image resolution is low or images are taken in low lighting. Or a speech-to-text system might not be used reliably to provide closed captions for online lectures because it fails to handle technical jargon.
        \item The authors should discuss the computational efficiency of the proposed algorithms and how they scale with dataset size.
        \item If applicable, the authors should discuss possible limitations of their approach to address problems of privacy and fairness.
        \item While the authors might fear that complete honesty about limitations might be used by reviewers as grounds for rejection, a worse outcome might be that reviewers discover limitations that aren't acknowledged in the paper. The authors should use their best judgment and recognize that individual actions in favor of transparency play an important role in developing norms that preserve the integrity of the community. Reviewers will be specifically instructed to not penalize honesty concerning limitations.
    \end{itemize}

\item {\bf Theory Assumptions and Proofs}
    \item[] Question: For each theoretical result, does the paper provide the full set of assumptions and a complete (and correct) proof?
    \item[] Answer: \answerYes{} 
    \item[] Justification: We make clear assumptions and provide complete and concise proofs in the Appendix for all the claims written in the main text. 
    \item[] Guidelines:
    \begin{itemize}
        \item The answer NA means that the paper does not include theoretical results. 
        \item All the theorems, formulas, and proofs in the paper should be numbered and cross-referenced.
        \item All assumptions should be clearly stated or referenced in the statement of any theorems.
        \item The proofs can either appear in the main paper or the supplemental material, but if they appear in the supplemental material, the authors are encouraged to provide a short proof sketch to provide intuition. 
        \item Inversely, any informal proof provided in the core of the paper should be complemented by formal proofs provided in appendix or supplemental material.
        \item Theorems and Lemmas that the proof relies upon should be properly referenced. 
    \end{itemize}

    \item {\bf Experimental Result Reproducibility}
    \item[] Question: Does the paper fully disclose all the information needed to reproduce the main experimental results of the paper to the extent that it affects the main claims and/or conclusions of the paper (regardless of whether the code and data are provided or not)?
    \item[] Answer: \answerYes{} 
    \item[] Justification: We disclose all the necessary information to reproduce our experimental results in \Cref{sec:Exp-main} and \Cref{sec:Exp-Appendix}.
    \item[] Guidelines:
    \begin{itemize}
        \item The answer NA means that the paper does not include experiments.
        \item If the paper includes experiments, a No answer to this question will not be perceived well by the reviewers: Making the paper reproducible is important, regardless of whether the code and data are provided or not.
        \item If the contribution is a dataset and/or model, the authors should describe the steps taken to make their results reproducible or verifiable. 
        \item Depending on the contribution, reproducibility can be accomplished in various ways. For example, if the contribution is a novel architecture, describing the architecture fully might suffice, or if the contribution is a specific model and empirical evaluation, it may be necessary to either make it possible for others to replicate the model with the same dataset, or provide access to the model. In general. releasing code and data is often one good way to accomplish this, but reproducibility can also be provided via detailed instructions for how to replicate the results, access to a hosted model (e.g., in the case of a large language model), releasing of a model checkpoint, or other means that are appropriate to the research performed.
        \item While NeurIPS does not require releasing code, the conference does require all submissions to provide some reasonable avenue for reproducibility, which may depend on the nature of the contribution. For example
        \begin{enumerate}
            \item If the contribution is primarily a new algorithm, the paper should make it clear how to reproduce that algorithm.
            \item If the contribution is primarily a new model architecture, the paper should describe the architecture clearly and fully.
            \item If the contribution is a new model (e.g., a large language model), then there should either be a way to access this model for reproducing the results or a way to reproduce the model (e.g., with an open-source dataset or instructions for how to construct the dataset).
            \item We recognize that reproducibility may be tricky in some cases, in which case authors are welcome to describe the particular way they provide for reproducibility. In the case of closed-source models, it may be that access to the model is limited in some way (e.g., to registered users), but it should be possible for other researchers to have some path to reproducing or verifying the results.
        \end{enumerate}
    \end{itemize}

\item {\bf Open access to data and code}
    \item[] Question: Does the paper provide open access to the data and code, with sufficient instructions to faithfully reproduce the main experimental results, as described in supplemental material?
    \item[] Answer: \answerYes{} 
    \item[] Justification: We provide the github link to the code.
    \item[] Guidelines:
    \begin{itemize}
        \item The answer NA means that paper does not include experiments requiring code.
        \item Please see the NeurIPS code and data submission guidelines (\url{https://nips.cc/public/guides/CodeSubmissionPolicy}) for more details.
        \item While we encourage the release of code and data, we understand that this might not be possible, so “No” is an acceptable answer. Papers cannot be rejected simply for not including code, unless this is central to the contribution (e.g., for a new open-source benchmark).
        \item The instructions should contain the exact command and environment needed to run to reproduce the results. See the NeurIPS code and data submission guidelines (\url{https://nips.cc/public/guides/CodeSubmissionPolicy}) for more details.
        \item The authors should provide instructions on data access and preparation, including how to access the raw data, preprocessed data, intermediate data, and generated data, etc.
        \item The authors should provide scripts to reproduce all experimental results for the new proposed method and baselines. If only a subset of experiments are reproducible, they should state which ones are omitted from the script and why.
        \item At submission time, to preserve anonymity, the authors should release anonymized versions (if applicable).
        \item Providing as much information as possible in supplemental material (appended to the paper) is recommended, but including URLs to data and code is permitted.
    \end{itemize}

\item {\bf Experimental Setting/Details}
    \item[] Question: Does the paper specify all the training and test details (e.g., data splits, hyperparameters, how they were chosen, type of optimizer, etc.) necessary to understand the results?
    \item[] Answer: \answerYes{} 
    \item[] Justification: The details of the experiments to reproduce our experimental results can be found in \Cref{sec:Exp-main} and \Cref{sec:Exp-Appendix}.
    \item[] Guidelines:
    \begin{itemize}
        \item The answer NA means that the paper does not include experiments.
        \item The experimental setting should be presented in the core of the paper to a level of detail that is necessary to appreciate the results and make sense of them.
        \item The full details can be provided either with the code, in appendix, or as supplemental material.
    \end{itemize}

\item {\bf Experiment Statistical Significance}
    \item[] Question: Does the paper report error bars suitably and correctly defined or other appropriate information about the statistical significance of the experiments?
    \item[] Answer: \answerNo{} 
    \item[] Justification: Two sets of our experiments are defined in a totally deterministic setting (no randomness.) 
    We ran the other two sets of experiments three times with different randomness seeds. The results are almost indistinguishable.
    \item[] Guidelines:
    \begin{itemize}
        \item The answer NA means that the paper does not include experiments.
        \item The authors should answer "Yes" if the results are accompanied by error bars, confidence intervals, or statistical significance tests, at least for the experiments that support the main claims of the paper.
        \item The factors of variability that the error bars are capturing should be clearly stated (for example, train/test split, initialization, random drawing of some parameter, or overall run with given experimental conditions).
        \item The method for calculating the error bars should be explained (closed form formula, call to a library function, bootstrap, etc.)
        \item The assumptions made should be given (e.g., Normally distributed errors).
        \item It should be clear whether the error bar is the standard deviation or the standard error of the mean.
        \item It is OK to report 1-sigma error bars, but one should state it. The authors should preferably report a 2-sigma error bar than state that they have a 96\% CI, if the hypothesis of Normality of errors is not verified.
        \item For asymmetric distributions, the authors should be careful not to show in tables or figures symmetric error bars that would yield results that are out of range (e.g. negative error rates).
        \item If error bars are reported in tables or plots, The authors should explain in the text how they were calculated and reference the corresponding figures or tables in the text.
    \end{itemize}

\item {\bf Experiments Compute Resources}
    \item[] Question: For each experiment, does the paper provide sufficient information on the computer resources (type of compute workers, memory, time of execution) needed to reproduce the experiments?
    \item[] Answer: \answerYes{} 
    \item[] Justification: We provide the information on the computing resources at the start of Section~\ref{sec:Exp-Appendix} for the deep learning experiment. The other experiments are run on a MacBook Pro laptop.
    \item[] Guidelines:
    \begin{itemize}
        \item The answer NA means that the paper does not include experiments.
        \item The paper should indicate the type of compute workers CPU or GPU, internal cluster, or cloud provider, including relevant memory and storage.
        \item The paper should provide the amount of compute required for each of the individual experimental runs as well as estimate the total compute. 
        \item The paper should disclose whether the full research project required more compute than the experiments reported in the paper (e.g., preliminary or failed experiments that didn't make it into the paper). 
    \end{itemize}
    
\item {\bf Code Of Ethics}
    \item[] Question: Does the research conducted in the paper conform, in every respect, with the NeurIPS Code of Ethics \url{https://neurips.cc/public/EthicsGuidelines}?
    \item[] Answer: \answerYes{} 
    \item[] Justification: We have read and understood Code of Ethics and we have conducted our
research in accordance with it
    \item[] Guidelines:
    \begin{itemize}
        \item The answer NA means that the authors have not reviewed the NeurIPS Code of Ethics.
        \item If the authors answer No, they should explain the special circumstances that require a deviation from the Code of Ethics.
        \item The authors should make sure to preserve anonymity (e.g., if there is a special consideration due to laws or regulations in their jurisdiction).
    \end{itemize}

\item {\bf Broader Impacts}
    \item[] Question: Does the paper discuss both potential positive societal impacts and negative societal impacts of the work performed?
    \item[] Answer: \answerYes{} 
    \item[] Justification: We provide an impact statement in Appendix~\ref{ImpactStatement}.
    \item[] Guidelines:
    \begin{itemize}
        \item The answer NA means that there is no societal impact of the work performed.
        \item If the authors answer NA or No, they should explain why their work has no societal impact or why the paper does not address societal impact.
        \item Examples of negative societal impacts include potential malicious or unintended uses (e.g., disinformation, generating fake profiles, surveillance), fairness considerations (e.g., deployment of technologies that could make decisions that unfairly impact specific groups), privacy considerations, and security considerations.
        \item The conference expects that many papers will be foundational research and not tied to particular applications, let alone deployments. However, if there is a direct path to any negative applications, the authors should point it out. For example, it is legitimate to point out that an improvement in the quality of generative models could be used to generate deepfakes for disinformation. On the other hand, it is not needed to point out that a generic algorithm for optimizing neural networks could enable people to train models that generate Deepfakes faster.
        \item The authors should consider possible harms that could arise when the technology is being used as intended and functioning correctly, harms that could arise when the technology is being used as intended but gives incorrect results, and harms following from (intentional or unintentional) misuse of the technology.
        \item If there are negative societal impacts, the authors could also discuss possible mitigation strategies (e.g., gated release of models, providing defenses in addition to attacks, mechanisms for monitoring misuse, mechanisms to monitor how a system learns from feedback over time, improving the efficiency and accessibility of ML).
    \end{itemize}
    
\item {\bf Safeguards}
    \item[] Question: Does the paper describe safeguards that have been put in place for responsible release of data or models that have a high risk for misuse (e.g., pretrained language models, image generators, or scraped datasets)?
    \item[] Answer: \answerNA{} 
    \item[] Justification: Our resesarch does not have any risks for misuse.
    \item[] Guidelines:
    \begin{itemize}
        \item The answer NA means that the paper poses no such risks.
        \item Released models that have a high risk for misuse or dual-use should be released with necessary safeguards to allow for controlled use of the model, for example by requiring that users adhere to usage guidelines or restrictions to access the model or implementing safety filters. 
        \item Datasets that have been scraped from the Internet could pose safety risks. The authors should describe how they avoided releasing unsafe images.
        \item We recognize that providing effective safeguards is challenging, and many papers do not require this, but we encourage authors to take this into account and make a best faith effort.
    \end{itemize}

\item {\bf Licenses for existing assets}
    \item[] Question: Are the creators or original owners of assets (e.g., code, data, models), used in the paper, properly credited and are the license and terms of use explicitly mentioned and properly respected?
    \item[] Answer: \answerYes{} 
    \item[] Justification: We properly cite and credit the original owners of the open-source datasets LIBSVM and CIFAR 10 in the paper.
    \item[] Guidelines:
    \begin{itemize}
        \item The answer NA means that the paper does not use existing assets.
        \item The authors should cite the original paper that produced the code package or dataset.
        \item The authors should state which version of the asset is used and, if possible, include a URL.
        \item The name of the license (e.g., CC-BY 4.0) should be included for each asset.
        \item For scraped data from a particular source (e.g., website), the copyright and terms of service of that source should be provided.
        \item If assets are released, the license, copyright information, and terms of use in the package should be provided. For popular datasets, \url{paperswithcode.com/datasets} has curated licenses for some datasets. Their licensing guide can help determine the license of a dataset.
        \item For existing datasets that are re-packaged, both the original license and the license of the derived asset (if it has changed) should be provided.
        \item If this information is not available online, the authors are encouraged to reach out to the asset's creators.
    \end{itemize}

\item {\bf New Assets}
    \item[] Question: Are new assets introduced in the paper well documented and is the documentation provided alongside the assets?
    \item[] Answer: \answerNA{} 
    \item[] Justification: We do not release any new assets in the paper.
    \item[] Guidelines:
    \begin{itemize}
        \item The answer NA means that the paper does not release new assets.
        \item Researchers should communicate the details of the dataset/code/model as part of their submissions via structured templates. This includes details about training, license, limitations, etc. 
        \item The paper should discuss whether and how consent was obtained from people whose asset is used.
        \item At submission time, remember to anonymize your assets (if applicable). You can either create an anonymized URL or include an anonymized zip file.
    \end{itemize}

\item {\bf Crowdsourcing and Research with Human Subjects}
    \item[] Question: For crowdsourcing experiments and research with human subjects, does the paper include the full text of instructions given to participants and screenshots, if applicable, as well as details about compensation (if any)? 
    \item[] Answer: \answerNA{} 
    \item[] Justification: The paper does not involve crowdsourcing nor research with human subjects.
    \item[] Guidelines:
    \begin{itemize}
        \item The answer NA means that the paper does not involve crowdsourcing nor research with human subjects.
        \item Including this information in the supplemental material is fine, but if the main contribution of the paper involves human subjects, then as much detail as possible should be included in the main paper. 
        \item According to the NeurIPS Code of Ethics, workers involved in data collection, curation, or other labor should be paid at least the minimum wage in the country of the data collector. 
    \end{itemize}

\item {\bf Institutional Review Board (IRB) Approvals or Equivalent for Research with Human Subjects}
    \item[] Question: Does the paper describe potential risks incurred by study participants, whether such risks were disclosed to the subjects, and whether Institutional Review Board (IRB) approvals (or an equivalent approval/review based on the requirements of your country or institution) were obtained?
    \item[] Answer: \answerNA{} 
    \item[] Justification: The paper does not involve crowdsourcing nor research with human subjects.
    \item[] Guidelines:
    \begin{itemize}
        \item The answer NA means that the paper does not involve crowdsourcing nor research with human subjects.
        \item Depending on the country in which research is conducted, IRB approval (or equivalent) may be required for any human subjects research. If you obtained IRB approval, you should clearly state this in the paper. 
        \item We recognize that the procedures for this may vary significantly between institutions and locations, and we expect authors to adhere to the NeurIPS Code of Ethics and the guidelines for their institution. 
        \item For initial submissions, do not include any information that would break anonymity (if applicable), such as the institution conducting the review.
    \end{itemize}

\end{enumerate}

\end{document}